\documentclass[]{mosi}

\usepackage{helvet}
\usepackage[utf8]{inputenc}
\usepackage{url}
\usepackage{array}
\usepackage{float}
\makeatletter
\let\openbox\@undefined
\makeatother
\usepackage{amsthm}
\usepackage{mathtools}
\usepackage{tabularx,makecell,threeparttable}
\usepackage{siunitx}
\usepackage{wrapfig}
\usepackage{needspace}
\usepackage{enumitem}
\usepackage{algpseudocode}
\usepackage[export]{adjustbox}
\usepackage{tikz,pgfplots}
\usetikzlibrary{arrows.meta,positioning,calc,fit,backgrounds,decorations.pathreplacing}
\pgfplotsset{compat=1.18}
\usepackage[hang,flushmargin]{footmisc}
\usepackage{bm}

\input{tex/table_typography}

\newtheorem{theorem}{Theorem}
\newtheorem{proposition}[theorem]{Proposition}
\newtheorem{lemma}[theorem]{Lemma}
\newtheorem{corollary}[theorem]{Corollary}
\theoremstyle{definition}

\theoremstyle{remark}
\newtheorem{remark}[theorem]{Remark}

\definecolor{oursgray}{gray}{0.95}
\definecolor{h3colblue}{HTML}{EAF2FF}
\definecolor{lightblue}{RGB}{200,230,255}
\definecolor{headerblue}{RGB}{150,200,255}

\floatstyle{ruled}
\newfloat{methodalgorithm}{tbp}{loa}
\floatstyle{plain}
\floatname{methodalgorithm}{Algorithm}

\hypersetup{
  pdftitle={Analytic-Walk Rotary Positional Encodings for Graphs},
  pdfauthor={Jiaqing Xie, Yuxin Wang}
}

\title{Analytic-Walk Rotary Positional Encodings for Graphs}

\author{
Jiaqing Xie$^{1,2}$, Yuxin Wang$^{1,\dagger}$, Xipeng Qiu$^{1,2,\dagger}$
\\[2mm]
{\normalfont \normalsize 26113050148@m.fudan.edu.cn, wangyuxin@sii.edu.cn, xpqiu@sii.edu.cn}\\
{\normalfont \normalsize $^{\dagger}$Corresponding author}\\
{\normalfont \normalsize $^{1}$Shanghai Innovation Institute}\\
{\normalfont \normalsize $^{2}$Fudan University}
}

\abstract{
Rotary position encodings make attention sensitive to relative position,
but extending them to graphs requires choosing how graph structure enters
the rotation. Previous works assign each node a rotation from spectral
coordinates, so the rotary
factor between two nodes depends only on their endpoints and cannot
distinguish the routes connecting them. We introduce
\textit{Analytic-Walk Rotary Positional Encodings} (AW-RoPE), which place
the rotations on edges and sum the transported features over all walks,
so contributions along different routes can reinforce or cancel. An
exact variant evaluates the complete sum by a differentiable linear
solve, and a sparse variant truncates it at a finite depth. We prove
forward and parameter-derivative truncation bounds at fixed inputs and
parameters. Both variants act on projected queries and keys, and the
sparse recurrence also augments message-passing networks. Across five
synthetic tasks both variants reduce nRMSE by $15$--$58\%$ relative to
the strongest baseline, and on real superpixel, peptide and OGB
benchmarks the sparse recurrence attains the best mean on every dataset
with Performer kernels and on twelve of thirteen datasets with GIN. Analysis shows that  AW-RoPE can distinguish routes whose only cue is how two endpoints are
connected, while node-wise rotary
encodings cannot.

\checkdata[Code]{\url{https://github.com/jiaqingxie/AW-RoPE}}
}

\begin{document}
\maketitle

\section{Introduction}
\textit{Positional encodings} (PEs) specify how order enters a
Transformer's otherwise permutation-equivariant attention, in two broad
styles: \emph{absolute} encodings attach a descriptor to each input
\citep{vaswani2017attention}, and
\emph{relative} encodings modify the interaction between pairs
\citep{shaw2018relative}. Graphs
stress this distinction \citep{rampasek2022graphgps}. With no canonical node ordering, the absolute
route replaces indices with structural descriptors such as spectral
coordinates and random-walk features
\citep{dwivedi2020graphtransformer,kreuzer2021san,dwivedi2022lspe,lim2023signnet},
while the relative route encodes pairwise graph relationships directly
in attention \citep{ying2021graphormer,ma2023grit}.

\textit{Rotary positional encoding} (RoPE) realizes the relative style by rotating
query and key channel pairs so that their dot product depends on
relative coordinates \citep{su2021roformer}, building on relative
position representations for sequences.
Previous works extend this idea from
sequence positions to \textit{Laplacian} spectral coordinates
\citep{reid2026graphrope}, bringing global graph information into
node-wise rotations while remaining compatible with linear attention.

\begin{figure}[!htb]
 \centering
 \resizebox{.88\linewidth}{!}{
\begingroup
\definecolor{aiBlue}{HTML}{2878B5}
\definecolor{aiOrange}{HTML}{D88038}
\definecolor{aiInk}{HTML}{273645}
\definecolor{aiSum}{HTML}{168879}
\begin{tikzpicture}[x=1cm,y=1cm,
    font=\sffamily\fontsize{9}{10}\selectfont,text=aiInk,
    nodepoint/.style={circle,draw=aiInk!65,fill=white,minimum size=3.9mm,inner sep=0pt},
    route/.style={line width=1.1pt,-{Latex[length=1.2mm]}},
    anglelabel/.style={font=\fontsize{8}{9}\selectfont,inner sep=1pt},
    heading/.style={font=\sffamily\bfseries\fontsize{10}{11}\selectfont,anchor=west},
    vector/.style={line width=1.2pt,-{Latex[length=1.3mm]}}]
 \node[heading] at (.10,.63) {(a) WIRE};
 \node[heading] at (3.80,.63) {(b) AW operator: prescribed edge phases};
 \draw[aiInk!18] (3.50,.40) -- (3.50,-3.70);
 \node at (1.70,.14) {Node-phase differences};
 \node at (5.35,.14) {Same phases};
 \node at (9.00,.14) {Lower edges $+\pi/4$};
 \node at (12.65,.14) {Lower edges $+\pi/2$};

 \foreach \name/\center/\upperleft/\upperright/\lowerleft/\lowerright in {
    w/1.70/{\pi/4}/{\pi/4}/{\pi/4}/{\pi/4},
    c0/5.35/{\pi/4}/{\pi/4}/{\pi/4}/{\pi/4},
    c1/9.00/{\pi/4}/{\pi/4}/{\pi/2}/{\pi/2},
    c2/12.65/{\pi/4}/{\pi/4}/{3\pi/4}/{3\pi/4}}{
  \begin{scope}[xshift=\center cm]
   \node[nodepoint] (\name-u) at (-1.16,-1.10) {$u$};
   \node[nodepoint] (\name-a) at (0,-.49) {$a$};
   \node[nodepoint] (\name-b) at (0,-1.71) {$b$};
   \node[nodepoint] (\name-v) at (1.16,-1.10) {$v$};
   \draw[route,aiBlue] (\name-u) -- node[anglelabel,above,sloped] {$\upperleft$} (\name-a);
   \draw[route,aiBlue] (\name-a) -- node[anglelabel,above,sloped] {$\upperright$} (\name-v);
   \draw[route,aiOrange] (\name-u) -- node[anglelabel,below,sloped] {$\lowerleft$} (\name-b);
   \draw[route,aiOrange] (\name-b) -- node[anglelabel,below,sloped] {$\lowerright$} (\name-v);
  \end{scope}}

 \foreach \center/\thetaone/\lowerangle/\gap in {
   1.70/{\pi/2}/{\pi/2}/0, 5.35/{\pi/2}/{\pi/2}/0,
   9.00/{\pi/2}/\pi/{\pi/2}, 12.65/{\pi/2}/{3\pi/2}/\pi}{
   \node at (\center,-2.11) {\textcolor{aiBlue}{$\theta_1=\thetaone$},\quad\textcolor{aiOrange}{$\theta_2=\lowerangle$}};
   \node[font=\sffamily\bfseries\fontsize{9}{10}\selectfont] at (\center,-2.45) {$\Delta=\gap$};
  }

 \node at (1.70,-3.00) {Node phases $(0,\tfrac{\pi}{4},\tfrac{\pi}{4},\tfrac{\pi}{2})$};
 \node[font=\sffamily\bfseries\fontsize{11}{12}\selectfont] at (1.70,-3.35) {$\Delta\equiv0$};

 \foreach \center/\deg/\power in {5.35/90/1.000,9.00/180/0.500,12.65/270/0.000}{
   \begin{scope}[shift={(\center-.58,-3.18)}]
    \draw[aiInk!14] (0,0) circle (.38);
    \draw[aiInk!14] (-.48,0) -- (.86,0);
    \draw[aiInk!14] (0,-.45) -- (0,.48);
    \draw[vector,aiOrange,line width=2pt] (0,0) -- (\deg:.38);
    \draw[vector,aiBlue] (0,0) -- (90:.38);
    \pgfmathsetmacro{\sumx}{.38*(cos(90)+cos(\deg))}
    \pgfmathsetmacro{\sumy}{.38*(1+sin(\deg))}
    \ifnum\deg=270
     \node[circle,draw=aiSum,fill=white,line width=1.1pt,inner sep=1.9pt] at (0,0) {};
    \else
     \draw[vector,aiSum,densely dashed] (0,0) -- (\sumx,\sumy);
    \fi
   \end{scope}
  \node[font=\sffamily\fontsize{8}{9}\selectfont,anchor=center] at (\center+.85,-2.99) {Power};
  \node[font=\sffamily\bfseries\fontsize{11}{12}\selectfont,text=aiSum] at (\center+.85,-3.33) {\power};
 }
 \draw[aiInk!15] (.10,-3.78) -- (14.20,-3.78);
 \node[anchor=west] at (.10,-4.08) {$\Delta=\theta_2-\theta_1$,\quad $P_{\rm AW}=\left|\tfrac12\mathrm{e}^{\mathrm{i}\theta_1}+\tfrac12\mathrm{e}^{\mathrm{i}\theta_2}\right|^2=\cos^2(\Delta/2)$};
 \draw[vector,aiBlue] (9.00,-4.08) -- (9.42,-4.08);
 \node[anchor=west,font=\sffamily\fontsize{8}{9}\selectfont] at (9.47,-4.08) {upper};
 \draw[vector,aiOrange] (11.00,-4.08) -- (11.42,-4.08);
 \node[anchor=west,font=\sffamily\fontsize{8}{9}\selectfont] at (11.47,-4.08) {lower};
 \draw[vector,aiSum,densely dashed] (13.00,-4.08) -- (13.42,-4.08);
 \node[anchor=west,font=\sffamily\fontsize{8}{9}\selectfont] at (13.47,-4.08) {sum};
\end{tikzpicture}
\endgroup}
  \caption{\textbf{Node-phase cancellation and controlled edge-phase changes.}
  Edge labels are phases in radians ($\omega=1$). (a) With node-wise
  rotations, every route between the same endpoints accumulates the same
  phase, so the route gap $\Delta=\theta_2-\theta_1$ is zero for any node
  phases. (b) The first AW example uses
  exactly the same phases. The next two add $\pi/4$ or $\pi/2$ to each forward
  lower edge. Reverse phases negate. The powers are normalized AW responses
  with prescribed phases, without fitting the scorer.}
 \label{fig:aw-insight}
\end{figure}

Graph relationships can involve several routes between the same
endpoints. With one phase per node, relative rotations telescope to the
endpoint phase difference, so routes receive identical accumulated
phases (Figure~\ref{fig:aw-insight}). Spectral coordinates still encode
cycles and global structure; the restriction concerns how this
information acts on features, motivating learned edge rotations whose
contributions along different routes combine.

We introduce \emph{Analytic-Walk Rotary Positional Encodings} (AW-RoPE) to
implement this idea. A shared scorer reads the features of both endpoints
and learns antisymmetric edge displacements, so reversing an edge reverses
its rotation. Successive rotate-and-aggregate steps accumulate phases
along walks, and a discounted sum combines their feature contributions,
so different routes can reinforce or cancel. \emph{Exact} AW-RoPE computes
the complete walk action with a differentiable linear solve, and \emph{Sparse}
AW-RoPE truncates the sum after $L$ steps with sparse rotate-and-aggregate
recurrences. The operator transforms projected queries and keys before
attention, and the sparse recurrence also serves as a residual adapter for
message passing (Figure~\ref{fig:aw-pipeline}).

Our contributions are:
\begin{itemize}[leftmargin=*,itemsep=2pt,topsep=3pt,parsep=0pt]
 \item \textbf{A route-sensitive rotary action.} Learned edge rotations
  make a route's phase a sum of edge terms, so phases accumulated around a
  cycle (the cycle circulation) can be nonzero, which node-wise
  phases provably cannot express
  (Proposition~\ref{prop:nodewise-obstruction}).
 \item \textbf{Algorithms with guarantees.} We prove forward and
  parameter-derivative error bounds at fixed inputs and parameters for the
  exact solve and its finite-depth truncation, and empirically verify
  that the bounds hold at all 30 trained checkpoints (sampled local VJPs
  on validation inputs).
  \item \textbf{Evaluation across prediction, transfer, and cost.} Both
   variants obtain lower mean error than the strongest baseline on all
   five synthetic tasks, with nRMSE reductions of $15$--$58\%$, and on
   two constructed route-discrimination tasks only dynamically generated
   edge phases succeed: all phase-free, fixed-field, and node-level
   controls remain at chance, while \emph{Sparse} AW-RoPE reaches
    $98.9\%$. The sparse recurrence improves over the strongest baseline
    in linear attention and over the NoPE baseline in GIN message passing.
\end{itemize}

\section{Related Work}
\paragraph{Graph positional and rotary encodings.}
Laplacian eigenvectors supply node coordinates
\citep{dwivedi2020graphtransformer}, which SAN feeds to spectral attention
\citep{kreuzer2021san} and SignNet makes invariant to sign and basis
choices \citep{lim2023signnet}. LSPE decouples positional from structural
streams, covering random-walk encodings \citep{dwivedi2022lspe}, and
Graph-RoPE's RWPE variant reads features off return probabilities
\citep{reid2026graphrope}. Systematic benchmarks compare these encodings
across GNN and graph-transformer backbones \citep{grotschla2026benchmark}. Graphormer injects pairwise distances and
shortest paths into attention scores \citep{ying2021graphormer}, sparse and tokenized variants extend the family to larger graphs, and
GraphGPS couples such attention with local message passing
\citep{shirzad2023exphormer,chen2023nagphormer,rampasek2022graphgps}. RoPE rotates query and key channel pairs so
that dot products depend on relative coordinates \citep{su2021roformer}.
WIRE lifts the rotation angles to spectral coordinates, linking rotary
phases to grid positions and effective resistance
\citep{reid2026graphrope}. In all of these the rotary factor is
endpoint-determined. AW-RoPE instead learns edge rotations and sums walk
contributions, so phases can vary with intermediate edges.

\paragraph{Learned transport, diffusion, and attention.}
Vector diffusion maps propagate features through edge transformations
\citep{singer2012vdm}, an idea directional graph networks turn into
learned edge directions \citep{beaini2021dgn}. MagNet's magnetic
Laplacian similarly attaches complex phases to edge direction
\citep{zhang2021magnet}, and neural sheaf diffusion learns restriction
maps between node and edge spaces \citep{bodnar2022sheaf}, with polynomial
variants enabling sparse evaluation \citep{borgi2025polynomial}. MagNet prescribes phases from graph directionality for a Hermitian
spectral operator. AW-RoPE instead generates antisymmetric phases from
current endpoint features and applies this input-dependent connection to
rotary query and key channels through discounted walks. One
parameterization supports an exact resolvent and sparse truncation with
approximation guarantees, as PPNP and APPNP do for undirected diffusion \citep{klicpera2019ppnp}
(Appendix~\ref{sec:connection-relation}).
Transport is orthogonal to the attention kernel: AW acts on queries and
keys before Performer's random features or any other feature map
\citep{choromanski2021performer}, and a gated residual adapter equips GIN
\citep{xu2019gin} with optional ReZero initialization
\citep{bachlechner2021rezero}.

\section{Methods}
\label{sec:methods}
\subsection{Preliminaries: From RoPE to WIRE}
\label{sec:core-preliminaries}

Let $\mathcal{G}=(\mathcal{V},\mathcal{E})$ have node set $\mathcal V$ and
directed-edge set $\mathcal E$, with $N=|\mathcal V|$ and $M=|\mathcal E|$.
An undirected edge is stored in both directions.
We use single-head notation and omit projection biases.
An attention layer receives node features $\mathbf{H}\in\mathbb{R}^{N\times d}$
of even rotary width $d$ and forms $\mathbf{Q}=\mathbf{H}\mathbf{W}_Q$,
$\mathbf{K}=\mathbf{H}\mathbf{W}_K$, $\mathbf{V}=\mathbf{H}\mathbf{W}_V$ with
$\mathbf W_Q,\mathbf W_K,\mathbf W_V\in\mathbb R^{d\times d}$; column
$u$ of each transposed feature matrix is written
$\mathbf h_u,\mathbf q_u,\mathbf k_u,\mathbf v_u$. We use $\mathbf{K}$ for
keys and the nonnegative integer $L$ for maximum walk depth.

RoPE rotates the $d/2$ adjacent query and key channel pairs.
For pair $\ell$, write $\mathbf q_{u\ell},\mathbf k_{u\ell}\in\mathbb R^2$
at node $u$, and let
$\mathbf{R}(\alpha)=\bigl[\begin{smallmatrix}\cos\alpha&-\sin\alpha\\
\sin\alpha&\cos\alpha\end{smallmatrix}\bigr]$ denote the
counterclockwise rotation. Conjugation cancels the leading angle,
$\mathbf{R}(\alpha)^\top\mathbf{R}(\beta)=\mathbf{R}(\beta-\alpha)$.
If node $u$ has coordinates $\mathbf{r}_u$, let $\boldsymbol{\nu}_\ell$
be the rotary frequency vector for channel pair $\ell$, with the same
dimension as $\mathbf{r}_u$ and shared across nodes and between Q and K.
The resulting phase is $\phi_{u\ell}=\boldsymbol{\nu}_\ell^\top \mathbf{r}_u$.
Applying this identity to the rotated inputs gives
\begin{equation}
 (\mathbf{R}(\phi_{u\ell})\mathbf{q}_{u\ell})^\top
 (\mathbf{R}(\phi_{v\ell})\mathbf{k}_{v\ell})
 =\mathbf{q}_{u\ell}^\top \mathbf{R}(\phi_{v\ell}-\phi_{u\ell})\mathbf{k}_{v\ell}.
 \label{eq:prelim-relative}
\end{equation}
Thus an absolute node phase produces a relative rotary factor.
For sequences, $\mathbf{r}_u$ is the token position. WIRE uses graph spectral
coordinates, for example components of selected Laplacian eigenvectors,
optionally with eigenvalue-dependent scaling
\citep{su2021roformer,reid2026graphrope}. This identity concerns the
raw dot product. The nonlinear feature map used in linear attention does
not commute with rotations, so the relative phase is exact only for
dot-product attention.

This construction induces an endpoint-dependent rotary phase. Along a route
$p=(v_0=u,\ldots,v_k=v)$ of length $k$ (the number of traversed edges),
the differences of node phases telescope:
\begin{equation}
 \sum_{j=0}^{k-1}(\phi_{v_{j+1},\ell}-\phi_{v_j,\ell})
 =\phi_{v,\ell}-\phi_{u,\ell}.
 \label{eq:prelim-telescope}
\end{equation}
Routes with the same endpoints therefore have identical accumulated
phases regardless of their intermediate nodes and edges
(Proposition~\ref{prop:nodewise-obstruction}): the route gap
$\Delta=\theta_2-\theta_1$ in Figure~\ref{fig:aw-insight}(a) is always zero.
AW-RoPE instead assigns edge rotations and aggregates features along walks,
allowing these routes to contribute different phases.

\begin{figure}[!htb]
 \centering
 \resizebox{0.88\linewidth}{!}{
\begingroup
\definecolor{hwInk}{HTML}{263442}
\definecolor{hwBlue}{HTML}{2878B5}
\definecolor{hwTeal}{HTML}{008A82}
\definecolor{hwOrange}{HTML}{D88038}
\definecolor{hwEdge}{HTML}{CDD9E1}
\begin{tikzpicture}[x=1cm,y=1cm,
 font=\sffamily\fontsize{8.5}{9.5}\selectfont,text=hwInk,
 line cap=round,line join=round,
 flow/.style={-{Latex[length=1.45mm]},draw=hwInk!85,line width=.8pt},
 guide/.style={draw=hwEdge,dash pattern=on 2pt off 2.4pt,line width=.6pt},
 title/.style={anchor=west,font=\sffamily\bfseries\fontsize{9.5}{10.5}\selectfont},
 badge/.style={circle,fill=hwInk,text=white,inner sep=0pt,minimum size=4.1mm,
   font=\sffamily\bfseries\fontsize{9}{10}\selectfont},
 vertex/.style={circle,draw=hwInk!75,fill=white,line width=.7pt,
   inner sep=0pt,minimum size=3.6mm},
 block/.style={draw=hwInk!40,fill=white,line width=.7pt,rounded corners=.9mm,
   minimum height=6.2mm,inner sep=3pt},
 tile/.style={minimum width=4.8mm,minimum height=6mm,inner sep=0pt,outer sep=0pt},
 tilelabel/.style={font=\fontsize{8.3}{9.3}\selectfont,inner sep=1pt},
 lane/.style={rounded corners=1.3mm,line width=.7pt},
 choice/.style={rounded corners=1mm,text=white,minimum width=14mm,
   minimum height=3.8mm,inner sep=1pt,font=\sffamily\bfseries\fontsize{9}{10}\selectfont},
 pics/feature/.style={code={
   \foreach \yy/\col in {0/hwBlue,1/hwOrange,2/hwTeal}{
     \foreach \xx/\tone in {0/45,1/65,2/80}
       \fill[\col!\tone] (-.21+.14*\xx,.25-.17*\yy) rectangle ++(.13,-.16);
   }
 }}]

\foreach \x/\n/\heading in {.20/1/{Edge rotations},4.12/2/{Walk transport},10.12/3/{Backbone}}{
 \node[badge] at (\x,0) {\n};
 \node[title] at (\x+.31,0) {\heading};
}
\draw[guide] (3.80,.12) -- (3.80,-4.58);
\draw[guide] (9.85,.12) -- (9.85,-4.58);

\foreach \name/\xx/\yy in {u/.20/-1.18,a/.84/-.78,v/1.50/-1.18,b/1.14/-1.95,c/.32/-1.95}{
 \coordinate (input-\name) at (\xx,\yy);
}
\foreach \s/\t in {u/a,a/v,v/b,b/c,c/u,a/b,u/b}
 \draw[hwInk!30,line width=.95pt] (input-\s) -- (input-\t);
\foreach \name in {u,a,v,b,c}{
 \node[vertex] at (input-\name) {};
 \pic[scale=.30] at (input-\name) {feature};
}
\node[tilelabel] at (.86,-2.25) {$\mathbf H$};
\node[block,minimum width=6.4mm,minimum height=8mm,
 draw=hwBlue!80,fill=hwBlue!5] (field-scorer) at (2.23,-1.18) {$\theta$};
\draw[flow] (1.70,-1.18) -- (field-scorer);
\draw[flow] (field-scorer.east) -- node[above=2pt,tilelabel] {$\mathbf T,z$} (3.60,-1.18);
\draw[flow] (field-scorer.south) -- (2.23,-2.29) -- (1.82,-2.29) -- (1.82,-2.48);
\draw[guide] (.18,-2.70) -- (1.82,-2.48) -- (3.44,-2.70);
\draw[guide,rounded corners=1.2mm] (.01,-2.70) rectangle (3.61,-4.58);
\node[vertex] (edge-u) at (.49,-3.48) {$u$};
\node[vertex] (edge-v) at (3.12,-3.48) {$v$};
\draw[flow,hwBlue,line width=1.15pt] (edge-u) to[bend left=21]
 node[above=1pt,tilelabel] {$+\alpha_{uv}$} (edge-v);
\draw[flow,hwOrange,line width=1.15pt] (edge-v) to[bend left=21]
 node[below=1pt,tilelabel] {$-\alpha_{uv}$} (edge-u);
\node[tilelabel] at (1.81,-4.33) {$\alpha_{uv}=\omega_\ell a_{uv}$};

\node[tile] (eval-input) at (4.18,-2.13) {};
\pic at (eval-input.center) {feature};
\node[tilelabel,above=1pt] at (eval-input.north) {$\mathbf X$};
\node[align=center,font=\sffamily\fontsize{7.6}{9}\selectfont] at (4.19,-2.94) {Same\\[2pt]$\mathbf T,z$};
\coordinate (eval-split) at (4.72,-2.13);
\draw[flow] (eval-input) -- (eval-split);
\fill[hwTeal] (eval-split) circle (.033cm);

\draw[lane,draw=hwBlue!48,fill=hwBlue!4] (5.05,-.48) rectangle (9.56,-1.92);
\node[choice,fill=hwBlue] at (7.30,-.68) {Exact};
\node[block,minimum width=30.5mm,minimum height=5.8mm,draw=hwBlue!75]
 (exact-solve) at (6.87,-1.34) {$(\mathbf I-z\mathbf T)\mathbf Y_\infty=\mathbf X$};
\node[tile] (exact-out) at (9.08,-1.35) {};
\pic at (exact-out.center) {feature};
\node[tilelabel,above=1pt] at (exact-out.north) {$\mathbf Y_\infty$};
\draw[flow] (eval-split) |- (exact-solve.west);
\draw[flow] (exact-solve) -- (exact-out);

\draw[lane,draw=hwTeal!48,fill=hwTeal!4] (5.05,-2.17) rectangle (9.56,-4.58);
\node[choice,fill=hwTeal] at (7.30,-2.40) {Sparse};
\node[tile] (state-0) at (5.63,-3.31) {};
\node[tile] (state-1) at (6.77,-3.31) {};
\node[inner sep=1pt] (state-mid) at (7.49,-3.31) {$\cdots$};
\node[tile] (state-L) at (8.21,-3.31) {};
\foreach \name in {state-0,state-1,state-L}
 \pic at (\name.center) {feature};
\node[tilelabel,above=1pt] at (state-0.north) {$\mathbf X$};
\node[tilelabel,above=1pt] at (state-1.north) {$\mathbf Z_1$};
\node[tilelabel,above=1pt] at (state-L.north) {$\mathbf Z_L$};
\draw[flow] (eval-split) |- (state-0.west);
\draw[flow,hwTeal] (state-0) -- node[above=2pt,tilelabel] {$z\mathbf T$} (state-1);
\draw[flow,hwTeal] (state-1) -- (state-mid);
\draw[flow,hwTeal] (state-mid) -- (state-L);
\foreach \name in {state-0,state-1,state-L}
 \draw[hwTeal,line width=.8pt] (\name.south) -- (\name.center |- 0,-3.82);
\draw[hwTeal,line width=.7pt,densely dotted] (state-mid.south) -- (state-mid.center |- 0,-3.82);
\draw[hwTeal,line width=.8pt] (5.63,-3.82) -- (8.21,-3.82);
\node[circle,draw=hwTeal,fill=white,line width=.8pt,minimum size=3.8mm,
 inner sep=0pt,text=hwTeal] (sparse-sum) at (7.00,-4.22) {$\Sigma$};
\draw[flow,hwTeal] (7.00,-3.82) -- (sparse-sum);
\node[tile] (sparse-out) at (9.08,-4.22) {};
\pic at (sparse-out.center) {feature};
\node[tilelabel,above=1pt] at (sparse-out.north) {$\mathbf Y_L$};
\draw[flow,hwTeal] (sparse-sum) -- (sparse-out);

\node[tile,minimum width=6.4mm] (attn-qk) at (10.37,-1.32) {};
\pic[scale=.64] at ($(attn-qk.center)+(-.16,0)$) {feature};
\pic[scale=.64] at ($(attn-qk.center)+(.16,0)$) {feature};
\node[tilelabel,above=1pt] at (attn-qk.north) {$\mathbf Q,\mathbf K$};
\node[block,minimum width=6.2mm,minimum height=6.6mm,
 draw=hwBlue!75,fill=hwBlue!5] (attn-aw) at (11.31,-1.32) {$\mathbf F$};
\node[tile,minimum width=6.4mm] (attn-qk-out) at (12.23,-1.32) {};
\pic[scale=.64] at ($(attn-qk-out.center)+(-.16,0)$) {feature};
\pic[scale=.64] at ($(attn-qk-out.center)+(.16,0)$) {feature};
\node[tilelabel,above=1pt] at (attn-qk-out.north) {$\widetilde{\mathbf Q},\widetilde{\mathbf K}$};
\node[block,minimum width=14.5mm] (attn-backbone) at (13.72,-1.32) {Attention};
\draw[flow] (attn-qk) -- (attn-aw);
\draw[flow] (attn-aw) -- (attn-qk-out);
\draw[flow] (attn-qk-out) -- (attn-backbone);
\draw[flow] (attn-backbone.east) -- (14.94,-1.32);
\node[tile] (attn-v) at (10.70,-2.22) {};
\pic[scale=.78] at (attn-v.center) {feature};
\node[tilelabel,left=1pt] at (attn-v.west) {$\mathbf V$};
\draw[flow] (attn-v) -| (attn-backbone.south);
\draw[guide] (10.00,-2.68) -- (14.94,-2.68);

\node[tile] (gin-h) at (10.37,-3.80) {};
\pic[scale=.82] at (gin-h.center) {feature};
\node[tilelabel,above=1pt] at (gin-h.north) {$\mathbf H$};
\node[block,draw=hwTeal!65,fill=hwTeal!5,align=center,
 minimum width=12.4mm,minimum height=7.7mm] (gin-adapter) at (11.66,-3.80) {Gated\\Sparse};
\node[block,minimum width=6.6mm,font=\sffamily\bfseries\fontsize{8.5}{9.5}\selectfont]
 (gin-backbone) at (13.32,-3.80) {GIN};
\node[tile] (gin-out) at (14.50,-3.80) {};
\pic[scale=.82] at (gin-out.center) {feature};
\node[tilelabel,above=1pt] at (gin-out.north) {$\mathbf H'$};
\draw[flow] (gin-h) -- (gin-adapter);
\draw[flow] (gin-adapter) -- (gin-backbone);
\draw[flow] (gin-backbone) -- (gin-out);
\end{tikzpicture}
\endgroup}
 \caption{\textbf{AW-RoPE implementation.}
 (1) A shared scorer maps node features $\mathbf H$ to edge displacements
 $a_{uv}$, with $\omega_\ell$ the channel frequency and $\theta$ collecting
 learned parameters. (2) For fixed input $\mathbf X$, one-step
 transport $\mathbf T$ and decay $z$, \emph{Exact} solves for $\mathbf Y_\infty$, with identity $\mathbf I$.
 \emph{Sparse} sums $\mathbf Z_k=z^k\mathbf T^k\mathbf X$ from depth zero through
 $L$ to approximate it by $\mathbf Y_L$. (3) Either operator $\mathbf F$ transforms queries and keys,
 while V bypasses transport. GIN uses a gated \emph{Sparse} residual with output
 $\mathbf H'$. Tiles denote features, and channel indices are suppressed.}
 \label{fig:aw-pipeline}
\end{figure}

\subsection{From Edge Rotations to Multi-Route Transport}
\label{sec:core-overview}

The subscript $\theta$ denotes dependence on trainable AW parameters
 including scorer weights, frequencies and decay. A shared scalar scorer $s_\theta$
reads the current endpoint features, and $[\cdot,\cdot]$ denotes concatenation
along the feature dimension. With a fixed bound $a_{\max}>0$, define the
score difference $b_{uv}$ and bounded edge displacement $a_{uv}$ by
\begin{equation}
 b_{uv}=s_\theta([\mathbf{h}_u,\mathbf{h}_v])-s_\theta([\mathbf{h}_v,\mathbf{h}_u]),\qquad
 a_{uv}=a_{\max}\tanh(b_{uv}/a_{\max}).
 \label{eq:core-edge}
\end{equation}
Oddness of $\tanh$ gives $a_{vu}=-a_{uv}$, so reversing an edge
inverts its rotation. Sharing $s_\theta$ makes the displacements equivariant
to node relabeling (Lemma~\ref{lem:field-equivariance}) and the operator
permutation-equivariant (Proposition~\ref{prop:permutation}). The field is computed once
per transport call and held fixed across its walk depths and query and key actions.

Nonnegative edge weights $w_{uv}$ scale neighbor $v$'s contribution to
node $u$. Row normalization gives $\mathbf P\in\mathbb R^{N\times N}$,
$\mathbf P_{uv}=w_{uv}/\sum_x w_{ux}$; absent edges and rows without
outgoing weight remain zero. Channel $\ell$ uses a real frequency
$\omega_\ell$ and rotation $\mathbf R(\omega_\ell a_{uv})$.
Pairing real coordinates as $x_{\rm r}+\mathrm i x_{\rm i}$ (real/imaginary
parts, $\mathrm i^2=-1$) represents this rotation by
$\mathrm e^{\mathrm i\omega_\ell a_{uv}}$ without a complex-valued network.
Suppressing $\ell$, let $\mathbf X\in\mathbb C^N$ collect channel inputs
across nodes. The one-step matrix $\mathbf T_\theta\in\mathbb C^{N\times N}$ is
\begin{equation}
 [\mathbf{T}_\theta]_{uv}=\mathbf{P}_{uv}\mathrm{e}^{\mathrm{i}\omega a_{uv}},\qquad
 (\mathbf{T}_\theta \mathbf{X})_u=\sum_{v:(u,v)\in\mathcal{E}}
          \mathbf{P}_{uv}\mathrm{e}^{\mathrm{i}\omega a_{uv}}\mathbf{X}_v .
 \label{eq:core-one-step}
\end{equation}
This is one rotate-and-aggregate step. An index $u\to v$ denotes a
walk leaving $u$, but the row action gathers $v$'s feature into $u$:
we compute $\mathbf{T}\mathbf{X}$, not $\mathbf{T}^\top \mathbf{X}$.

Applying the same step twice makes the path interpretation explicit:
\begin{equation}
 (\mathbf{T}_\theta^2\mathbf{X})_u
 =\sum_v\sum_w \mathbf{P}_{uv}\mathbf{P}_{vw}
       \mathrm{e}^{\mathrm{i}\omega(a_{uv}+a_{vw})}\mathbf{X}_w .
 \label{eq:core-two-step}
\end{equation}
Each $v$ defines a length-two route from $u$ to $w$.
For a general walk $p=(v_0=u,\ldots,v_k=v)$ of $|p|=k$ edges, define
its weight $\pi(p)=\prod_{j=0}^{k-1}\mathbf{P}_{v_jv_{j+1}}$ and
accumulated displacement $A(p)=\sum_{j=0}^{k-1}a_{v_jv_{j+1}}$. Then
\begin{equation}
 (\mathbf{T}_\theta^k\mathbf{X})_u
 =\sum_v\ \sum_{\substack{p:u\leadsto v\\ |p|=k}}
       \pi(p)\mathrm{e}^{\mathrm{i}\omega A(p)}\mathbf{X}_v .
 \label{eq:core-walks}
\end{equation}
Walks may repeat vertices and edges (induction in
Lemma~\ref{lem:walk-expansion}).
Unlike Eq.~\eqref{eq:prelim-telescope}, general learned $A(p)$ need not
depend only on the endpoints. Summing complex contributions can reinforce
or cancel different routes. The total action also mixes amplitudes and
is generally not a single orthogonal rotation.

\subsection{Complete Walk Action and Sparse Truncation}
\label{sec:core-aw}

A learned decay $0\le z_\theta<1$ weights length-$k$ walks by
$z_\theta^k$, with edge weights inside $\mathbf T_\theta^k$.
Set $\mathbf B_\theta=z_\theta\mathbf T_\theta$; the $N\times N$ identity
$\mathbf I=\mathbf T_\theta^0$ retains $\mathbf X$ at depth zero.
Omitting $\theta$, define $\mathbf F_L=\sum_{k=0}^L\mathbf B^k\in\mathbb C^{N\times N}$
and $\mathbf Y_L=\mathbf F_L\mathbf X\in\mathbb C^N$. Cancellation gives
\begin{equation}
 (\mathbf{I}-\mathbf{B})\sum_{k=0}^L\mathbf{B}^k=\mathbf{I}-\mathbf{B}^{L+1}.
 \label{eq:core-telescope}
\end{equation}
The limit $L\to\infty$ follows from a geometric bound
(Proposition~\ref{prop:neumann}). For matrices,
$\|\cdot\|_\infty$ denotes the induced infinity norm, the maximum absolute
row sum. Because every phase has modulus one,
$\|\mathbf{T}\|_\infty=\max_u\sum_v|\mathbf{T}_{uv}|\le1$, and hence
$\|\mathbf{B}^{L+1}\|_\infty\le z^{L+1}\to0$.
The same geometric bound makes the sums Cauchy, defining the complete
walk operator $\mathbf F_\infty=\lim_{L\to\infty}\mathbf F_L$ and its
output $\mathbf Y_\infty$. Taking limits in Eq.~\eqref{eq:core-telescope} gives
\begin{equation}
 \mathbf{F}_\infty=(\mathbf{I}-\mathbf{B})^{-1},\qquad
 \mathbf{Y}_\infty=\mathbf{F}_\infty \mathbf{X},\qquad (\mathbf{I}-\mathbf{B})\mathbf{Y}_\infty=\mathbf{X} .
 \label{eq:core-exact}
\end{equation}
\emph{Exact} AW-RoPE evaluates this action up to numerical error with a
differentiable linear solve per graph and rotary channel.

\emph{Sparse} AW-RoPE evaluates the finite sum without constructing a
dense matrix. Let $\mathbf S^{(k)}$ be the state after $k$ transport steps,
before walk-length decay, and $\mathbf Y^{(k)}$ the accumulated output through step $k$.
Initialize $\mathbf{S}^{(0)}=\mathbf{X}$ and $\mathbf{Y}^{(0)}=\mathbf{X}$, then compute
\begin{equation}
 \mathbf{S}^{(k+1)}=\mathbf{T}_\theta \mathbf{S}^{(k)},\qquad
 \mathbf{Y}^{(k+1)}=\mathbf{Y}^{(k)}+z_\theta^{k+1}\mathbf{S}^{(k+1)}
 \quad (k=0,\ldots,L-1).
 \label{eq:core-recurrence}
\end{equation}
Induction gives $\mathbf{S}^{(k)}=\mathbf{T}_\theta^k\mathbf{X}$. The weighted
states of Figure~\ref{fig:aw-pipeline} are $\mathbf Z_k=z_\theta^k\mathbf S^{(k)}$. The returned
$\mathbf{Y}^{(L)}=\mathbf Y_L=\mathbf{F}_L\mathbf{X}$ includes every depth from zero through $L$.
Both versions share the field, frequency and decay parameterization,
initialize frequencies with the usual RoPE schedule and cap learned decay
below one; they evaluate the complete or truncated sum, respectively.

The Exact implementation uses $\mathcal{O}(N^3d)$ forward work and $\mathcal{O}(N^2d)$
storage. For $L\ge1$, \emph{Sparse} uses $\mathcal{O}(L(N+M)d)$ work and $\mathcal{O}((N+M)d)$ streaming
storage (Proposition~\ref{prop:complexity}).
These costs exclude field construction, the backbone, and additional autodiff storage.
\emph{Sparse} transport has graph-linear cost at fixed width and depth, although
backpropagation may retain intermediate sparse states and dense attention
still costs $\mathcal{O}(N^2d)$, so graph-linear end-to-end scaling requires a
linear-attention or sparse message-passing backbone.

\subsection{Using AW-RoPE in the Backbone}
\label{sec:core-integration}

For a \textit{Graph Transformer}, $\mathbf{F}=\mathbf{F}_\infty$ or
$\mathbf{F}_L$ transforms projected queries and keys with the same edge
field. Let $\mathbf{q}^{(\ell)},\mathbf{k}^{(\ell)}\in\mathbb C^N$ collect
one paired channel across nodes, and $\mathbf F^{(\ell)}$ denote the
operator at frequency $\omega_\ell$. Tildes mark transported features:
\begin{equation}
 \widetilde{\mathbf{q}}^{(\ell)}=\mathbf{F}^{(\ell)}\mathbf{q}^{(\ell)},\qquad
 \widetilde{\mathbf{k}}^{(\ell)}=\mathbf{F}^{(\ell)}\mathbf{k}^{(\ell)},\qquad
 \widetilde{\mathbf{V}}=\mathbf{V}.
 \label{eq:core-qk}
\end{equation}
Converting these channels back to real features gives the matrices
$\widetilde{\mathbf Q},\widetilde{\mathbf K}$. Ordinary dense attention uses
$\operatorname{softmax}(\widetilde{\mathbf{Q}}\widetilde{\mathbf{K}}^{\top}/\sqrt d)\mathbf{V}$.
For linear attention, the chosen feature map $\varphi$ is applied to the
transported queries and keys before the usual factorized contraction.
Transport precedes $\varphi$ whatever feature map it implements
(offsets and batching in Appendix~\ref{app:linear-attention-interface}).

For a real feature matrix, operator application means pairing channels,
applying each channel's operator and unpacking. Our evaluated GIN adapter
uses \emph{Sparse} AW-RoPE with a learned scalar gate $\gamma$. Write
$\mathbf H_{\rm in}$ for the GIN block's input and $\mathbf H_{\rm next}$
for its output:
\begin{equation}
 \mathbf{H}_{\rm in}=\mathbf{H}+\gamma(\mathbf{F}_L\mathbf{H}-\mathbf{H}),\qquad
 \mathbf{H}_{\rm next}=\operatorname{GIN}(\mathbf{H}_{\rm in}).
 \label{eq:core-gin}
\end{equation}
The gate $\gamma$ controls transport; ReZero initializes it to zero,
recovering the base network. The GIN convolution, residual and
normalization are retained.

\subsection{Exact--Sparse Convergence with Increasing Walk Depth}
\label{sec:core-guarantees}

The comparison fixes parameters and input. Let $0<r<1$ bound
$\|\mathbf B_\theta\|_\infty$. Subtracting the partial sum from the
complete action leaves only omitted walks:
\begin{equation}
 \mathbf{F}_\infty-\mathbf{F}_L=\mathbf{B}^{L+1}(\mathbf{I}-\mathbf{B})^{-1},\qquad
 \|\mathbf{F}_\infty-\mathbf{F}_L\|_\infty
 \le\frac{r^{L+1}}{1-r}\quad\text{if }\|\mathbf{B}_\theta\|_\infty\le r<1.
 \label{eq:core-forward-bound}
\end{equation}

Thus \emph{Sparse} converges to \emph{Exact} in operator norm.
At differentiability points, $\mathrm D\mathbf B_\theta[\mathbf v]$
denotes the parameter-directional derivative for Euclidean unit
$\mathbf v$. On a parameter region with uniform
$\|\mathbf B_\theta\|_\infty\le r<1$, let $G\ge0$ bound
$\|\mathrm D\mathbf B_\theta[\mathbf v]\|_\infty$ for every such direction.
The matrix product rule gives
$\mathrm{D}(\mathbf{B}^k)[\mathbf{v}]=\sum_{j=0}^{k-1}\mathbf{B}^j \mathrm{D}\mathbf{B}[\mathbf{v}]\mathbf{B}^{k-1-j}$.
Applying this rule to the remainder gives
\begin{equation}
 \|\mathrm{D}(\mathbf{F}_\infty-\mathbf{F}_L)[\mathbf{v}]\|_\infty
 \le G\,\frac{r^L((L+1)-Lr)}{(1-r)^2}.
 \label{eq:core-guarantees}
\end{equation}
Both bounds vanish for fixed $r<1$ and bounded $G$ (proof in
Appendix~\ref{app:exact-aw-training}); near-unit decay can require greater
depth. The derivative covers learned fields, frequencies and decay through
$\mathrm D\mathbf B_\theta[\mathbf v]$, but parameter-dependent inputs
add product-rule terms outside its scope.
Section~\ref{sec:numerical-theory} checks outputs and sampled local
vector--Jacobian products (VJPs: fixed readout vectors contracted with
local Jacobians) at trained checkpoints with fixed parameters and local
inputs. The theorem concerns the ordinary unnormalized pair;
Appendix~\ref{app:method-details} gives normalization, non-backtracking
and gating variants. These are same-parameter approximation guarantees;
training and prediction are compared experimentally.

\section{Experiments}
\label{sec:experiments}

\subsection{Experimental Questions and Common Protocol}
\label{sec:evaluation-protocol}
We ask whether AW improves prediction (Q1), whether phases add gains
beyond phase-free propagation (Q2), how closely \emph{Sparse} approximates
\emph{Exact} at fixed parameters (Q3), and how AW transfers across datasets
and backbones and scales to large graphs (Q4). Variants combine a walk
evaluation (\emph{Exact} or \emph{Sparse}) with a field (learned
unrestricted, gradient-derived, zero, or fixed). We use the public
GraphGPS and Graph-RoPE stack and follow the Graph-RoPE protocol on
the synthetic and real-data benchmarks
\citep{rampasek2022graphgps,reid2026graphrope}, with \emph{Sparse} in
real-data attention and GIN studies. Test metrics
use validation-best checkpoints; independently trained predictive
comparisons are separate from same-parameter operator checks, and
published WIRE values are external references. Appendices~\ref{app:hyperparameters}
and \ref{app:exact-aw-experiments} give hyperparameters and selection rules.

\begin{table}[H]
 \centering
  \caption{\textbf{Synthetic graph regression.} Test nRMSE $\downarrow$;
  published SEs, local sample SDs. \emph{Exact} and \emph{Sparse} use a
  four-layer, width-32 dense-attention GraphGPS backbone
  (Table~\ref{tab:synthetic-hyperparameters}); NoPE and WIRE are
  published reference means under this protocol \citep{reid2026graphrope}. Bold compares
  the matched pair.}
  \label{tab:synthetic-results}
  \footnotesize
  \setlength{\tabcolsep}{3pt}
  \begin{tabular}{@{}lllll@{}}
  \toprule
  Dataset & NoPE & WIRE & \emph{Exact} & \emph{Sparse}\\
 \midrule
 Monochromatic-0 & $0.060_{\pm 0.001}$ & $0.053_{\pm 0.002}$ & $\mathbf{0.044}_{\pm 0.006}$ & $0.045_{\pm 0.009}$ \\
Monochromatic-5 & $0.087_{\pm 0.001}$ & $0.068_{\pm 0.005}$ & $\mathbf{0.036}_{\pm 0.003}$ & $0.036_{\pm 0.002}$ \\
Monochromatic-10 & $0.081_{\pm 0.001}$ & $0.063_{\pm 0.002}$ & $\mathbf{0.027}_{\pm 0.000}$ & $0.032_{\pm 0.009}$ \\
Monochromatic-15 & $0.068_{\pm 0.002}$ & $0.056_{\pm 0.004}$ & $\mathbf{0.026}_{\pm 0.003}$ & $0.028_{\pm 0.003}$ \\
Watts--Strogatz SPD & $0.065_{\pm 0.005}$ & $0.038_{\pm 0.006}$ & $\mathbf{0.016}_{\pm 0.001}$ & $0.017_{\pm 0.005}$ \\
\bottomrule

 \end{tabular}
\end{table}
\subsection{Predictive Performance on Controlled Synthetic Tasks}
\label{sec:synthetic-experiments}

\paragraph{Setup.}

Four Graph-RoPE tasks predict the largest monochromatic connected
subgraph on edge-deleted grids; the fifth predicts Watts--Strogatz
shortest-path distances. All AW variants receive the same auxiliary LapPE
input and are evaluated by normalized RMSE (nRMSE; lower is better).
\emph{Exact} and \emph{Sparse} train separately with identical initializations,
frozen task-specific configurations and equal parameter counts;
\emph{Sparse} uses $L=8$ or $16$
(Table~\ref{tab:synthetic-hyperparameters}).

\begin{figure}[!htb]
 \centering
 \begin{minipage}[c]{0.52\textwidth}
  \centering
  \includegraphics[width=.98\linewidth, trim=240 0 0 0, clip]{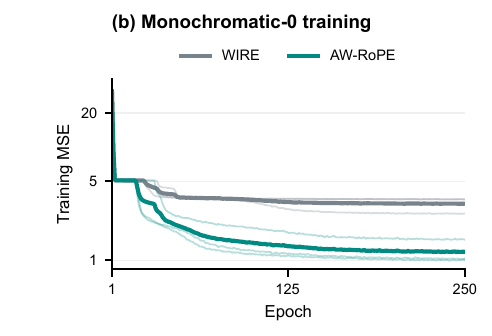}
  \captionof{figure}{\textbf{Training behavior on Monochromatic-0.} Training MSE
  of the matched WIRE and AW mixing study over all 250 epochs, thin
  individual curves and thick means, without smoothing.}
  \label{fig:diagnostics}
 \end{minipage}\hfill
 \begin{minipage}[c]{0.45\textwidth}
  \centering
  \includegraphics[width=\linewidth]{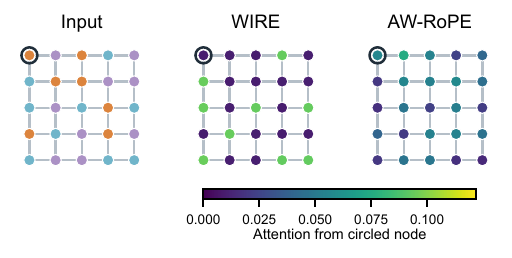}
  \captionof{figure}{Monochromatic-0 attention from the circled node.}
  \label{fig:synthetic-main-visuals}
 \end{minipage}
\end{figure}

\paragraph{Matched Exact--Sparse results.}
Table~\ref{tab:synthetic-results} answers Q1. Both arms improve on the published NoPE and WIRE means on all five
tasks, and \emph{Sparse}, the practical variant, retains most of
\emph{Exact}'s predictive benefit despite truncating the walk sum; the
archived fixed-field controls likewise improve on these references
(Appendix~\ref{app:static-flat}). \emph{Exact} is faster on small graphs
(Table~\ref{tab:exact-sparse-efficiency}); \emph{Sparse}'s motivation is
scaling to larger graph sizes.

\subsection{What Does Multi-Route Phase Modeling Add?}
\label{sec:phase-mechanism}

\paragraph{Ablation variants.}
Tables~\ref{tab:inductive-fields} and~\ref{tab:route-interference}
separate propagation, node phases and edge-field structure.
Mixing (Zero) uses $D_{L,z}=\sum_{k=0}^{L}z^k\mathbf P^k$ without
phases; WIRE uses node-wise rotations, and WIRE~+~mixing rotates after
applying $D_{L,z}$. Table~\ref{tab:inductive-fields} also tests
input-independent transport through the learned static polynomial and
random fixed field. Dynamic local gradient uses
$a_{uv}=b(\mathbf h_v)-b(\mathbf h_u)$, retaining feature dependence
but forcing zero circulation. Dynamic nonlocal Flat projects AW's field
onto gradients using the whole cycle, testing zero circulation without
the pointwise-potential restriction. ``Phases zeroed'' sets $a_{uv}=0$
in a fitted AW model while freezing all learned parameters, testing
reliance on phases rather than retraining Mixing (Zero).
\emph{Sparse} AW truncates the walk sum; \emph{Exact} AW uses the
complete resolvent.

\begin{table}[!t]
\centering
\small
\setlength{\tabcolsep}{4pt}
\caption{\textbf{Feature-conditioned prediction on unseen cycle sizes.}
Test accuracy (\%), mean$_{\pm\mathrm{SD}}$ over five matched runs.
All arms use a width-16, one-block full softmax-attention backbone.
The edge-field arms use maximum walk depth $L=2$, $z=0.8$, and
$\mathbf F^{(\ell)}=\sum_{k=0}^{2}z^k(\mathbf T^{(\ell)})^k$ on
queries and keys, where
$[\mathbf T^{(\ell)}]_{uv}=\mathbf P_{uv}\mathrm{e}^{\mathrm{i}\omega_\ell a_{uv}}$.
Here $D=I+z\mathbf P+z^2\mathbf P^2$.
The learned static polynomial and WIRE arms use the operators shown
below; the WIRE formulas also apply to keys.
Architecture and protocol in Appendix~\ref{app:inductive-fields}.}
\label{tab:inductive-fields}
\begin{adjustbox}{max width=\linewidth}
\begin{tabular}{@{}lll@{}}
\toprule
Method & Field or transport & Accuracy $\uparrow$\\
\midrule
Mixing (Zero) & $a_{uv}\equiv 0$ (phase-free $D$) & $50.00_{\pm 0.00}$\\
WIRE & $\mathbf q'_{u\ell}=\mathbf R(\boldsymbol\nu_\ell^{\top}\mathbf r_u)\mathbf q_{u\ell}$ & $49.97_{\pm 0.21}$\\
WIRE + mixing & $\mathbf q'_{u\ell}=\mathbf R(\boldsymbol\nu_\ell^{\top}\mathbf r_u)(D\mathbf q)_{u\ell}$ & $50.47_{\pm 1.07}$\\
Learned static polynomial & $\mathbf Q'=\sum_{k\le 2}(\mathbf P^k\mathbf Q)\mathbf M_k^{Q}$, $\mathbf K$ likewise & $50.00_{\pm 0.00}$\\
Random fixed field & $a_{uv}\sim\mathcal N(0,0.8^{2})$, $a_{vu}=-a_{uv}$ & $49.88_{\pm 0.17}$\\
Dynamic local gradient & $a_{uv}=b(\mathbf h_v)-b(\mathbf h_u)$ & $50.00_{\pm 0.00}$\\
Dynamic nonlocal Flat & $a_i=\tilde a_i-\tfrac{1}{n}\sum_{j}\tilde a_j$ & $94.83_{\pm 11.50}$\\
\emph{Sparse} AW, phases zeroed & fitted AW with $a_{uv}\equiv 0$ & $50.00_{\pm 0.00}$\\
\emph{Sparse} AW & $a_{uv}=a_{\max}\tanh(b_{uv}/a_{\max})$ & $\mathbf{98.67}_{\pm 2.92}$\\\bottomrule

\end{tabular}
\end{adjustbox}
\end{table}

\begin{wraptable}{r}{0.56\textwidth}
 \centering
 \vspace{-6pt}
  \captionof{table}{\textbf{Constructed route-interference task.} Test
  accuracy (\%), mean$_{\pm\mathrm{SD}}$ over five runs. Endpoint
  reads a marked node after transport; Attention uses one full block.
  Truncated walks use $L=8$, $z=0.8$; \emph{Exact} uses the complete
  resolvent. Protocol in Appendix~\ref{app:constructed-prediction}.}
  \label{tab:route-interference}
  \vspace{2pt}
  \footnotesize
  \setlength{\tabcolsep}{3pt}
  \begin{tabular}{@{}lll@{}}
   \toprule
   Method & Endpoint & Attention\\
   \midrule
   Mixing (Zero) & $50.00_{\pm 0.00}$ & $97.92_{\pm 1.77}$\\
   WIRE & $50.00_{\pm 0.00}$ & $82.43_{\pm 0.75}$\\
   WIRE + mixing & $50.00_{\pm 0.00}$ & $93.89_{\pm 6.97}$\\
   Dynamic local gradient & $49.99_{\pm 0.02}$ & $\mathbf{100.00}_{\pm 0.00}$\\

   \emph{Sparse} AW, phases zeroed & $50.00_{\pm 0.00}$ & $50.00_{\pm 0.00}$\\
   \emph{Exact} AW, phases zeroed & $50.00_{\pm 0.00}$ & $50.00_{\pm 0.00}$\\

   \emph{Sparse} AW & $\mathbf{98.85}_{\pm 2.58}$ & $\mathbf{99.94}_{\pm 0.13}$\\
\emph{Exact} AW & $\mathbf{100.00}_{\pm 0.00}$ & $\mathbf{100.00}_{\pm 0.00}$\\
   \bottomrule
  \end{tabular}
  \vspace{-4pt}
\end{wraptable}

\paragraph{From fixed-graph intervention to a learned task.}
On a fixed diamond graph with two routes between the same endpoints
(Figure~\ref{fig:two-route-response},
Appendix~\ref{app:two-route-intervention}), varying only the route phase
gap $\Delta$ gives response power $\cos^2(\Delta/2)$, from reinforcement
to cancellation; an endpoint-factorized rotary factor cannot change
its magnitude. The learned task uses two length-eight routes joining
marked endpoints and distinguishes two internal motifs on one route
from one on each. Under endpoint readout, phase-free,
WIRE and local-gradient controls remain at chance, while \emph{Sparse}
AW reaches $98.85\%$ and \emph{Exact} AW reaches $100\%$
(Table~\ref{tab:route-interference}). A full attention block also lets
mixing and gradient controls solve this task, so it isolates the
route-sensitive transport capability.

\paragraph{Feature dependence in a complete attention network.}
A second task pairs different feature arrangements on the same anonymous
cycle and tests graph sizes absent from training
(Table~\ref{tab:inductive-fields}). The variants share a one-block
attention backbone and change the Q/K transformation as listed in the
table, with pointwise adapters matching parameter budgets. AW learns
its field from local endpoint features (Eq.~\eqref{eq:core-edge}).
The walk depth is $L=2$; the learned static polynomial is degree two,
and pure WIRE performs no walk propagation. In matched complete attention
networks, phase-free, topology-only and pointwise-potential fields
provably collide on this construction
(Proposition~\ref{prop:inductive-collision}), and WIRE, WIRE~+~mixing
and a random fixed field sit at chance empirically. \emph{Sparse} AW retains
$98.67\pm2.92\%$, and zeroing its fitted phases returns every run to
$50\%$. The two tasks expose complementary capabilities, and AW covers
both with one local edge-conditioned parameterization. A dynamic nonlocal Flat field also
solves the second task, with high per-run variation. Protocols, proofs
and a per-arm ladder are collected in Appendices~I.1--I.3.

\paragraph{Phases beyond phase-free propagation.}
In the matched study of Table~\ref{tab:multiroute-completed},
WIRE, phase-free mixing $D_{L,z}$, WIRE composed with mixing, and AW-RoPE
share backbone, budget and initialization. The matched controls
attribute most of the benchmark gains to walk transport itself, with
phase learning contributing task-dependent improvements. A matched
field study (Appendix~\ref{app:field-control-results}) similarly finds
the gradient field best on Monochromatic-0; on the real-data field
ablations the unrestricted field attains the highest PascalVOC-SP mean,
with contributions varying by task on Peptides-struct and MolHIV.
Non-flat feature-conditioned transport thus provides additional
modeling capacity whose benchmark payoff varies with the task. On fixed
validation graphs
(Figure~\ref{fig:synthetic-main-visuals}), WIRE separates input color
classes more sharply than AW.

\subsection{Exact--Sparse Fidelity and Scalability}
\label{sec:numerical-theory}
\begin{figure}[!htb]
 \centering
  \includegraphics[width=0.8\linewidth]{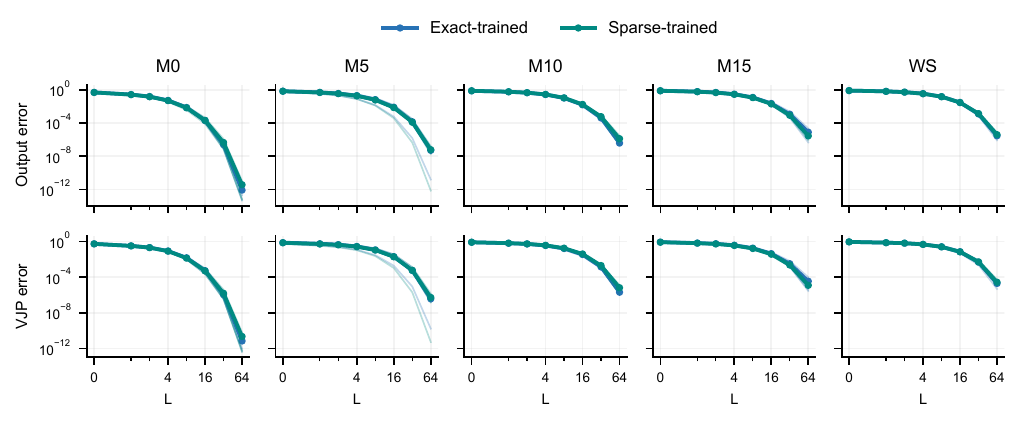}
 \caption{\textbf{Approximation at all 30 trained runs.}
 Columns are the five tasks. Rows are relative output and local VJP error.
 Each curve compares \emph{Sparse} with \emph{Exact} at identical
 parameters and local inputs. Thin lines retain all three seeds, and
 thick lines average seed-level context and probe means. Values below $10^{-16}$
 use a plotting floor. Full settings: Appendix~\ref{app:trained-checkpoint}.}
 \label{fig:trained-checkpoint-main}
\end{figure}

\paragraph{Approximation of learned operators.}
We reuse all 30 validation-best models, comparing \emph{Exact} with
\emph{Sparse} at $L\in\{0,1,2,4,8,16,32,64\}$ on fixed validation
inputs in float64. Relative Euclidean errors cover outputs and three
fixed-readout vector--Jacobian products (VJPs) with respect to local
inputs and AW parameters (Figure~\ref{fig:trained-checkpoint-main}).

Across the ten task--checkpoint-origin groups, at $L=8$ the mean relative output errors range from $0.674\%$ to $15.023\%$, and mean VJP errors from $1.345\%$ to $25.174\%$. At $L=64$, the largest group-mean output and VJP errors are $7.851\times10^{-6}$ and $3.504\times10^{-5}$, respectively. Learned layer decays span $0.468$--$0.867$. These observations support useful trained-operator approximation at sufficient depth on the sampled inputs, while showing that shallow truncation need not already be close to \emph{Exact}. Group means are not worst-case error guarantees.

\paragraph{Exact--Sparse size scaling.}
We time the production positional-attention layer on isolated H200 GPUs
with shared fields, frequencies, decay and inputs, varying graph size,
degree and \emph{Sparse} depth (Appendix~\ref{app:scaling}).
\emph{Exact} is faster on small graphs, but its memory grows
quadratically: it needs 72 GiB at 4,096 nodes and runs out of memory at
8,192 and 16,384 nodes, where $L=16$ \emph{Sparse} needs only 2.7 GiB.
Tables~\ref{tab:matched-efficiency} and
\ref{tab:scaling-favor} supply small-graph comparisons with NoPE and WIRE.

\subsection{Generalization Across Datasets and Backbones}
\label{sec:nonsynthetic-experiments}
\paragraph{Transport without attention.}
A four-layer GIN with width 32 and dropout 0.2 tests transport without
attention. A validation-only 110-trial grid freezes one configuration
per task and method, confirmed over 40 runs. Ordinary AW
attains the lowest mean on four of the five synthetic tasks and every AW
variant improves on NoPE (Table~\ref{tab:gin-synthetic-results}), with
CLUSTER controls in Appendix~\ref{app:cluster-stability},
Table~\ref{tab:cluster-stability-ablation}.

\paragraph{Tasks and attention kernels.}
Table~\ref{tab:kernel-replay} compares Performer and GIN across datasets,
including LRGB \citep{dwivedi2022lrgb} and OGB \citep{hu2020ogb}, using
benchmark accuracy, F1, average precision (AP), AUROC and mean absolute
error (MAE). Performer references are NoPE and WIRE means from
\citet{reid2026graphrope}. Local runs use official splits, metrics and
benchmark GraphGPS and Graph-RoPE code paths, following the Graph-RoPE
training configurations.

\begin{table*}[!htbp]
 \centering
 \captionsetup{font=small}
  \caption{\textbf{AW-RoPE across Performer and GIN backbones.}
  NoPE and WIRE Performer means are from \citet{reid2026graphrope},
  except COCO-SP (three local ReLU runs) and MalNet-Tiny (local matched
  runs for all three methods).
  AW uses ReLU except PascalVOC-SP and COCO-SP ($\S$), which use FAVOR+.
  Local entries show mean$_{\pm\mathrm{SD}}$ at validation best; bold
  compares values within each backbone.
  PATTERN and CLUSTER use accuracy-SBM for Performer and
  accuracy for GIN.}
  \label{tab:kernel-replay}
  \label{tab:gin-real-results}
  \small
  \renewcommand{\arraystretch}{0.95}
  \setlength{\tabcolsep}{4pt}
 \begin{adjustbox}{max width=\linewidth}
 \begin{tabular}{@{}lllll@{\hspace{7pt}}ll@{}}
 \toprule
 \multirow{2}{*}{Dataset} & \multirow{2}{*}{Metric}
 & \multicolumn{3}{c}{\cellcolor{oursgray}\textbf{Performer backbone}}
 & \multicolumn{2}{c}{\cellcolor{oursgray}\textbf{GIN backbone}}\\
 \cmidrule(lr){3-5}\cmidrule(lr){6-7}
 & & NoPE & WIRE & AW-RoPE
 & NoPE & AW-RoPE\\
 \midrule
 MNIST & Acc. (\%) $\uparrow$ & 97.560 & 98.100 & $\mathbf{98.300}_{\pm 0.140}$ & $75.615_{\pm 1.181}$ & $\mathbf{83.890}_{\pm 0.665}$ \\
CIFAR10 & Acc. (\%) $\uparrow$ & 70.610 & 71.150 & $\mathbf{72.057}_{\pm 0.182}$ & $47.240_{\pm 0.240}$ & $\mathbf{47.520}_{\pm 0.523}$ \\
PATTERN & Acc. (\%) $\uparrow$ & 85.710 & 86.630 & $\mathbf{86.689}_{\pm 0.000} $ & $\mathbf{88.574}_{\pm 1.907}$ & $86.443_{\pm 0.527}$ \\
CLUSTER & Acc.  (\%) $\uparrow$ & 76.900 & 77.530 & $\mathbf{77.845}_{\pm 0.140}$ & $48.549_{\pm 0.359}$ & $\mathbf{64.779}_{\pm 0.060}$ \\
Peptides-func & AP (\%) $\uparrow$ & 64.400 & 64.900 & $\mathbf{65.181}_{\pm 0.240}$ & $47.565_{\pm 0.854}$ & $\mathbf{51.937}_{\pm 0.213}$ \\
Peptides-struct & MAE $\downarrow$ & 0.262 & 0.257 & $\mathbf{0.253}_{\pm 0.003}$ & $0.314_{\pm 0.019}$ & $\mathbf{0.291}_{\pm 0.013}$ \\
PascalVOC-SP$^\S$ & Macro-F1 $\uparrow$ & 0.367 & 0.376 & $\mathbf{0.385}_{\pm 0.004}$ & $0.039_{\pm 0.000}$ & $\mathbf{0.094}_{\pm 0.010}$ \\
MalNet-Tiny & Acc. (\%) $\uparrow$ & 91.87 & 91.97 & $\mathbf{92.73}_{\pm 0.55}$ & $81.150_{\pm 0.212}$ & $\mathbf{82.200}_{\pm 1.414}$ \\
ogbg-molhiv & ROC-AUC $\uparrow$ & 0.776 & 0.785 & $\mathbf{0.792}_{\pm 0.001}$ & $0.760_{\pm 0.002}$ & $\mathbf{0.762}_{\pm 0.023}$ \\
ogbg-molpcba & AP $\uparrow$ & 0.238 & 0.264 & $\mathbf{0.280}_{\pm 0.003}$ & $0.169_{\pm 0.009}$ & $\mathbf{0.193}_{\pm 0.001}$ \\
ogbg-code2 & F1 $\uparrow$ & 0.173 & 0.173 & $\mathbf{0.185}_{\pm 0.002}$ & $0.176_{\pm 0.002}$ & $\mathbf{0.181}_{\pm 0.001}$ \\
COCO-SP$^\S$ & Macro-F1 $\uparrow$ & $0.373_{\pm 0.004}$ & $0.370_{\pm 0.007}$ & $\mathbf{0.380}_{\pm 0.003}$ &  $0.025_{\pm 0.003}$  & $\mathbf{0.077}_{\pm 0.010}$ \\
\bottomrule

 \end{tabular}
 \end{adjustbox}
\end{table*}

\paragraph{Attention results.}
On the Performer panel, AW-RoPE improves on the published WIRE means
on every dataset, most strongly on CIFAR10 and CLUSTER
(Table~\ref{tab:kernel-replay}), and its COCO-SP FAVOR+ average exceeds
the local ReLU baselines. Matched zero-field and gradient controls with
sparse propagation retained
(Appendix~\ref{app:field-control-results}) isolate the phase
contribution on PascalVOC-SP, Peptides-struct and MolHIV.
Figure~\ref{fig:diagnostics} shows the matched mixing study under one
backbone and training setting, with AW lower in training MSE
throughout much of the trajectory.

\paragraph{GIN transfer results.}
The GIN columns of Table~\ref{tab:kernel-replay} contain all 52 runs across
thirteen datasets. AW improves the mean on twelve tasks, most strongly on CLUSTER and
MNIST, while PATTERN favors
NoPE and MolHIV is comparable to it. Together with the synthetic GIN
results, walk transport transfers to message passing without attention.

\subsection{Limitations and Practical Trade-offs}
\label{sec:limitations}
Walk propagation adds computational overhead, so AW trains noticeably
slower than WIRE, and \emph{Exact} scales quadratically in memory with
graph size (Appendix~\ref{app:future-grid}). Phase utility is
task-dependent (Section~\ref{sec:phase-mechanism}), the exact
relative-phase reading applies to dot-product attention, and
shuffled-edge controls remain future work.

\section{Conclusion}
Node-wise rotary phases telescope along every route and cannot
distinguish how the graph connects two endpoints. AW-RoPE instead
learns antisymmetric edge rotations and sums transported features over
walks, letting routes reinforce or cancel. \emph{Exact} evaluates the
full walk action by a linear solve, and \emph{Sparse} truncates it
with provable forward and derivative bounds, matching \emph{Exact} at
all 30 trained checkpoints. Both variants improve on the strongest
baseline across the five synthetic tasks ($15$--$58\%$ nRMSE
reduction) and transfer to real superpixel, peptide and OGB
benchmarks, attaining the best mean on twelve of thirteen GIN tasks
and every Performer task. Future work
includes richer fields and reducing walk-propagation complexity and
runtime.

\section{Acknowledgement}

This work is supported by SCION (Scientific Collaborative Innovation with Agentic Organizational Nexus) from Shanghai Innovation Institute.

\bibliographystyle{unsrtnat}
\bibliography{main}

\clearpage
\appendix
\numberwithin{equation}{section}
\numberwithin{theorem}{section}
\renewcommand{\thefigure}{\thesection.\arabic{figure}}
\renewcommand{\thetable}{\thesection.\arabic{table}}
\counterwithin{figure}{section}
\counterwithin{table}{section}
The appendices present convergence bounds (\ref{app:exact-aw-training}),
operator properties with proofs (\ref{app:method-details}), implementation
(\ref{app:notation}) and numerical checks (\ref{app:numerical-audit}), followed
by the matched synthetic study (\ref{app:exact-aw-experiments}), scaling
(\ref{app:scaling}) and further ablations.

\section{Exact--Sparse AW-RoPE: A Step-by-Step Convergence Proof}
\label{app:exact-aw-training}

We establish forward and parameter-derivative convergence of the
finite walk sum. All comparisons use the same
parameters $\theta$, graph, and input. For one complex rotary channel,
write $\mathbf{B}_\theta=z_\theta \mathbf{T}_\theta$, $\mathbf{F}_L=\sum_{k=0}^L \mathbf{B}_\theta^k$,
and $\mathbf{F}_\infty=(\mathbf{I}-\mathbf{B}_\theta)^{-1}$ once existence is established.
The proof concerns ordinary, unnormalized AW-RoPE.

\subsection{Step 1: Specify the norms and contraction}
For a complex vector, $\|\mathbf{x}\|_\infty=\max_u|\mathbf{x}_u|$; the induced matrix
norm is $\|\mathbf{A}\|_\infty=\max_u\sum_v|\mathbf{A}_{uv}|$.
For several channels we take the maximum across channels as well.
These norms satisfy $\|\mathbf{A}\mathbf{x}\|_\infty\le\|\mathbf{A}\|_\infty\|\mathbf{x}\|_\infty$
and $\|\mathbf{A}\mathbf{C}\|_\infty\le\|\mathbf{A}\|_\infty\|\mathbf{C}\|_\infty$.
The latter follows by expanding each row:
\[
 \sum_v|(\mathbf{A}\mathbf{C})_{uv}|
 \le\sum_v\sum_w|\mathbf{A}_{uw}||\mathbf{C}_{wv}|
 \le\left(\sum_w|\mathbf{A}_{uw}|\right)\|\mathbf{C}\|_\infty .
\]
Since $|\mathrm{e}^{\mathrm{i}\omega a_{uv}}|=1$, nonnegative row-substochastic $\mathbf{P}$ gives
\begin{equation}
 \|\mathbf{T}_\theta\|_\infty
 \le\max_u\sum_v \mathbf{P}_{uv}\le1,\qquad
 \|\mathbf{B}_\theta\|_\infty\le z_\theta.
 \label{eq:exact-aw-contraction}
\end{equation}
Parallel edges are summed before forming a matrix entry; the first
inequality remains valid by the triangle inequality. Isolated rows are
zero. Assume a uniform cap $z_\theta\le r<1$ on the parameter region
of interest. It follows inductively that
$\|\mathbf{B}_\theta^k\|_\infty\le r^k$.

\subsection{Step 2: Show that the finite sums have a limit}
For integers $L'>L$, subtracting the partial sums and using Step 1 gives
\[
 \|\mathbf{F}_{L'}-\mathbf{F}_L\|_\infty
 \le\sum_{k=L+1}^{L'}\|\mathbf{B}_\theta^k\|_\infty
 \le\sum_{k=L+1}^\infty r^k
 =\frac{r^{L+1}}{1-r}.
\]
The right-hand side tends to zero independently of $L'$.
Finite-dimensional matrix spaces are complete, so $(\mathbf{F}_L)$ is Cauchy
and has a limit $\mathbf{F}$. Also,
\begin{equation}
 \|\mathbf{F}_L\|_\infty\le\sum_{k=0}^Lr^k
 =\frac{1-r^{L+1}}{1-r},\qquad
 \|\mathbf{F}\|_\infty\le\frac1{1-r}.
 \label{eq:exact-aw-sum-norm}
\end{equation}

\subsection{Step 3: Identify the limit with the complete linear solve}
This identification does not require assuming the inverse beforehand.
Multiplying the finite polynomial and canceling adjacent powers gives
\begin{align}
 (\mathbf{I}-\mathbf{B}_\theta)\mathbf{F}_L
 &= (\mathbf{I}+\mathbf{B}_\theta+\cdots+\mathbf{B}_\theta^L)
    -(\mathbf{B}_\theta+\mathbf{B}_\theta^2+\cdots+\mathbf{B}_\theta^{L+1})\\
 &= \mathbf{I}-\mathbf{B}_\theta^{L+1}.
\end{align}
The same identity holds for $\mathbf{F}_L(\mathbf{I}-\mathbf{B}_\theta)$.
Matrix multiplication is continuous, and
$\|\mathbf{B}_\theta^{L+1}\|_\infty\le r^{L+1}\to0$.
Taking limits therefore gives $(\mathbf{I}-\mathbf{B}_\theta)\mathbf{F}=\mathbf{F}(\mathbf{I}-\mathbf{B}_\theta)=\mathbf{I}$.
Thus $\mathbf{I}-\mathbf{B}_\theta$ is invertible and
\begin{equation}
 \mathbf{F}=\mathbf{F}_\infty=(\mathbf{I}-\mathbf{B}_\theta)^{-1}
 =\sum_{k=0}^{\infty}\mathbf{B}_\theta^k,\qquad
 (\mathbf{I}-\mathbf{B}_\theta)\mathbf{Y}_\infty=\mathbf{X}.
 \label{eq:exact-aw-pair}
\end{equation}
The \emph{Exact} implementation solves this finite-dimensional system.
The infinitely many walks describe its meaning, not its algorithm.

\subsection{Step 4: Derive the forward truncation error}
Define the remainder $\mathbf{E}_L=\mathbf{F}_\infty-\mathbf{F}_L$. The finite identity above yields
\[
 (\mathbf{I}-\mathbf{B}_\theta)\mathbf{E}_L
 =\mathbf{I}-(\mathbf{I}-\mathbf{B}_\theta)\mathbf{F}_L=\mathbf{B}_\theta^{L+1}.
\]
Multiplying by the inverse and applying Step 2 gives
\begin{equation}
 \mathbf{E}_L=\mathbf{F}_\infty \mathbf{B}_\theta^{L+1},\qquad
 \|\mathbf{E}_L\|_\infty\le
 \underbrace{\frac{r^{L+1}}{1-r}}_{\epsilon_L}.
 \label{eq:exact-aw-forward-remainder}
\end{equation}
Consequently $\|\mathbf{Y}_\infty-\mathbf{Y}_L\|_\infty\le\epsilon_L\|\mathbf{X}\|_\infty$.
This proves convergence of the feature action as $L$ increases.

\subsection{Step 5: Differentiate a matrix power without commuting factors}
Let $\mathrm{D}\mathbf{B}_\theta[\mathbf{v}]$ denote a directional derivative in a parameter
direction $\mathbf{v}$ with $\|\mathbf v\|_2=1$. Matrix products are generally noncommutative:
\[
 \mathrm{D}(\mathbf{B}^2)[\mathbf{v}]=\mathrm{D}\mathbf{B}[\mathbf{v}]\mathbf{B}+\mathbf{B}\mathrm{D}\mathbf{B}[\mathbf{v}],\qquad
 \mathrm{D}(\mathbf{B}^3)[\mathbf{v}]=\mathrm{D}\mathbf{B}[\mathbf{v}]\mathbf{B}^2+\mathbf{B}\mathrm{D}\mathbf{B}[\mathbf{v}]\mathbf{B}+\mathbf{B}^2\mathrm{D}\mathbf{B}[\mathbf{v}].
\]
Inductively, differentiating $\mathbf{B}^{k+1}=\mathbf{B}^k \mathbf{B}$ gives
\begin{equation}
 \mathrm{D}(\mathbf{B}^k)[\mathbf{v}]=\sum_{j=0}^{k-1}\mathbf{B}^j\mathrm{D}\mathbf{B}[\mathbf{v}]\mathbf{B}^{k-1-j}.
 \label{eq:exact-aw-power-derivative}
\end{equation}
The induction step appends $\mathbf{B}$ to every old term and adds $\mathbf{B}^k\mathrm{D}\mathbf{B}[\mathbf{v}]$,
which gives exactly the $k+1$ required terms. Suppose
$\|\mathrm{D}\mathbf{B}_\theta[\mathbf{v}]\|_\infty\le G$. There are $k$ summands, each bounded by
$r^jG r^{k-1-j}=G r^{k-1}$, so
\begin{equation}
 \|\mathrm{D}(\mathbf{B}^k)[\mathbf{v}]\|_\infty\le kG r^{k-1}.
 \label{eq:exact-aw-power-bound}
\end{equation}

\subsection{Step 6: Differentiate the complete action and the remainder}
Differentiate $(\mathbf{I}-\mathbf{B}_\theta)\mathbf{F}_\infty=\mathbf{I}$ using the product rule:
\[
 -\mathrm{D}\mathbf{B}_\theta[\mathbf{v}]\mathbf{F}_\infty+(\mathbf{I}-\mathbf{B}_\theta)\mathrm{D}\mathbf{F}_\infty[\mathbf{v}]=0.
\]
Solving for the derivative gives
\[
 \mathrm{D}\mathbf{F}_\infty[\mathbf{v}]=\mathbf{F}_\infty \mathrm{D}\mathbf{B}_\theta[\mathbf{v}]\mathbf{F}_\infty,\qquad
 \|\mathrm{D}\mathbf{F}_\infty[\mathbf{v}]\|_\infty\le\frac{G}{(1-r)^2}.
\]
Now differentiate the finite remainder identity from Step 4:
\[
 \mathrm{D}\mathbf{E}_L[\mathbf{v}]=\mathrm{D}\mathbf{F}_\infty[\mathbf{v}]\mathbf{B}_\theta^{L+1}
              +\mathbf{F}_\infty \mathrm{D}(\mathbf{B}_\theta^{L+1})[\mathbf{v}].
\]
Applying Steps 2 and 5 to the two terms separately yields
\begin{align}
 \|\mathrm{D}\mathbf{E}_L[\mathbf{v}]\|_\infty
 &\le \frac{G r^{L+1}}{(1-r)^2}
       +\frac{(L+1)G r^L}{1-r}\\
 &=G\,\frac{r^L\{r+(L+1)(1-r)\}}{(1-r)^2}\\
 &=\underbrace{G\,\frac{r^L((L+1)-Lr)}{(1-r)^2}}_{\eta_L}.
 \label{eq:exact-aw-gradient-bound}
\end{align}
The derivative follows from the finite remainder identity and the inverse
map on nonsingular matrices. For fixed $0<r<1$, both $r^L$ and $Lr^L$
tend to zero, so $\eta_L\to0$. At $r=0$, take the continuous limit.

\subsection{Step 7: Include parameter-dependent features}
In a network, $\mathbf{X}=\mathbf{X}_\theta$ may itself depend on the parameters.
Writing $\mathbf{Y}_\infty-\mathbf{Y}_L=\mathbf{E}_L\mathbf{X}_\theta$ gives
\[
 \mathrm{D}(\mathbf{Y}_\infty-\mathbf{Y}_L)[\mathbf{v}]=\mathrm{D}\mathbf{E}_L[\mathbf{v}]\mathbf{X}_\theta+\mathbf{E}_L\mathrm{D}\mathbf{X}_\theta[\mathbf{v}].
\]
If $\|\mathbf{X}_\theta\|_\infty\le C_X$ and
$\|\mathrm{D}\mathbf{X}_\theta[\mathbf{v}]\|_\infty\le C_D$, then
\begin{equation}
 \|\mathbf{Y}_\infty-\mathbf{Y}_L\|_\infty\le\epsilon_L C_X,\qquad
 \|\mathrm{D}(\mathbf{Y}_\infty-\mathbf{Y}_L)[\mathbf{v}]\|_\infty
 \le\eta_L C_X+\epsilon_L C_D.
 \label{eq:exact-aw-feature-gradient-bound}
\end{equation}
The second term accounts for preceding learnable layers.

\begin{theorem}[Same-parameter forward and derivative convergence]
\label{thm:exact-aw-training}
On a parameter region with $\|\mathbf{B}_\theta\|_\infty\le r<1$, assume
$\mathbf{B}_\theta$ is differentiable at the points considered and
$\|\mathrm{D}\mathbf{B}_\theta[\mathbf{v}]\|_\infty\le G$ for every unit direction $\mathbf{v}$.
Then the forward and derivative remainders satisfy
Eqs.~\eqref{eq:exact-aw-forward-remainder} and
\eqref{eq:exact-aw-gradient-bound}, uniformly wherever these assumptions
hold. If the input and its derivative are bounded as in Step 7,
Eq.~\eqref{eq:exact-aw-feature-gradient-bound} holds as well.
All these errors tend to zero as $L\to\infty$.
\end{theorem}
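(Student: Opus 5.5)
The plan is to assemble Steps 1--7 into a single argument, noting that every constant depends only on $r$, $G$, $C_X$, $C_D$ and $L$, never on the particular $\theta$. This gives the claimed uniformity.

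\textbf{Existence of the resolvent.} First, from $\|\mathbf{B}_\theta\|_\infty\le r<1$ and submultiplicativity of the induced infinity norm, I get $\|\mathbf{B}_\theta^k\|_\infty\le r^k$. The partial sums $\mathbf{F}_L$ are then Cauchy with tail $r^{L+1}/(1-r)$, so they converge. Passing to the limit in the telescoping identity $(\mathbf{I}-\mathbf{B}_\theta)\mathbf{F}_L=\mathbf{F}_L(\mathbf{I}-\mathbf{B}_\theta)=\mathbf{I}-\mathbf{B}_\theta^{L+1}$ shows that the limit is a two-sided inverse of $\mathbf{I}-\mathbf{B}_\theta$, with $\|\mathbf{F}_\infty\|_\infty\le 1/(1-r)$.

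\textbf{Forward remainder.} The same identity gives $\mathbf{E}_L=\mathbf{F}_\infty\mathbf{B}_\theta^{L+1}$. Multiplying the two norm bounds yields Eq.~\eqref{eq:exact-aw-forward-remainder}.

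\textbf{Derivative remainder.} I would differentiate the closed form $\mathbf{E}_L=\mathbf{F}_\infty\mathbf{B}_\theta^{L+1}$ rather than the infinite series. This avoids any interchange of derivative and infinite sum; the only analytic input is differentiability of matrix inversion on nonsingular matrices. The steps are:
\begin{itemize}
\item Differentiating $(\mathbf{I}-\mathbf{B}_\theta)\mathbf{F}_\infty=\mathbf{I}$ gives $\mathrm{D}\mathbf{F}_\infty[\mathbf v]=\mathbf{F}_\infty\,\mathrm{D}\mathbf{B}_\theta[\mathbf v]\,\mathbf{F}_\infty$, hence a norm of at most $G/(1-r)^2$.
\item The noncommutative power rule, Eq.~\eqref{eq:exact-aw-power-derivative}, proved by induction on $\mathbf{B}^{k+1}=\mathbf{B}^k\mathbf{B}$, gives $\|\mathrm{D}(\mathbf{B}^{L+1})[\mathbf v]\|_\infty\le (L+1)Gr^L$.
\item The product rule on $\mathbf{F}_\infty\mathbf{B}^{L+1}$ then bounds the derivative by $Gr^{L+1}/(1-r)^2+(L+1)Gr^L/(1-r)$. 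Putting this over the common denominator $(1-r)^2$ gives $\eta_L$.
\end{itemize}

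\textbf{Parameter-dependent inputs.} Apply the product rule to $\mathbf{E}_L\mathbf{X}_\theta$ and use $\|\mathbf{A}\mathbf{x}\|_\infty\le\|\mathbf{A}\|_\infty\|\mathbf{x}\|_\infty$. This gives Eq.~\eqref{eq:exact-aw-feature-gradient-bound}.

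\textbf{Vanishing as $L\to\infty$.} For $0<r<1$ both $r^L\to0$ and $Lr^L\to0$ (ratio test), so $\epsilon_L\to0$ and $\eta_L\to0$. If $r=0$, then $\mathbf{B}_\theta=0$ on the region, so $\mathbf{E}_L=0$ and its derivative vanishes for $L\ge0$; both bounds hold trivially.

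The main obstacle is the derivative step. Two points need care there. First, factors cannot be commuted, so I would take care to bound each of the $k$ terms $\mathbf{B}^j\,\mathrm{D}\mathbf{B}\,\mathbf{B}^{k-1-j}$ separately. Second, I must justify that $\mathbf{F}_\infty$ is differentiable wherever $\mathbf{B}_\theta$ is. For this I would invoke smoothness of inversion composed with the differentiable map $\theta\mapsto\mathbf{B}_\theta$ (chain rule), using that $\mathbf{I}-\mathbf{B}_\theta$ is nonsingular by the existence step.
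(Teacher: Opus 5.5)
Your proposal is correct and follows the paper's own route in Steps 1--7. You use the contraction bound and Cauchy partial sums, get the two-sided inverse from the telescoping identity, and write $\mathbf{E}_L=\mathbf{F}_\infty\mathbf{B}_\theta^{L+1}$; you then differentiate that closed form using $\mathrm{D}\mathbf{F}_\infty[\mathbf v]=\mathbf{F}_\infty\,\mathrm{D}\mathbf{B}_\theta[\mathbf v]\,\mathbf{F}_\infty$, the noncommutative power rule and the product rule on $\mathbf{E}_L\mathbf{X}_\theta$. The only nit is your $r=0$ remark: $\mathrm{D}\mathbf{E}_0$ vanishes only if $\mathbf{B}_\theta\equiv\mathbf{0}$ on a neighborhood, but otherwise $\mathrm{D}\mathbf{E}_0[\mathbf v]=\mathrm{D}\mathbf{B}_\theta[\mathbf v]$ is still bounded by $\eta_0=G$ (with $0^0=1$), so the general estimate already covers that case.
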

The result follows from Steps 1--7.

\subsection{Step 8: Check what the assumptions mean for learned AW-RoPE}
The implemented decay is a sigmoid multiplied by a cap below one;
this supplies a uniform $r$, although a cap close to one makes the
worst-case bound loose. The derivative assumption is additional.
For fixed $\mathbf{P}$, write $\alpha_{uv,\ell}=\omega_\ell a_{uv}$. Then
\[
 \mathrm{D}\mathbf{B}[\mathbf{v}]=(\mathrm{D}z[\mathbf{v}])\mathbf{T}+z\,\mathrm{D}\mathbf{T}[\mathbf{v}],\qquad
 \mathrm{D}\alpha_{uv,\ell}[\mathbf{v}]
   =(\mathrm{D}\omega_\ell[\mathbf{v}])a_{uv}+\omega_\ell \mathrm{D}a_{uv}[\mathbf{v}],
\]
and
\[
 [\mathrm{D}\mathbf{T}[\mathbf{v}]]_{uv}
   =\mathrm{i}\mathbf{P}_{uv}\mathrm{e}^{\mathrm{i}\alpha_{uv,\ell}}\mathrm{D}\alpha_{uv,\ell}[\mathbf{v}].
\]
If $\mathbf{P}$ is learnable, add $(\mathrm{D}\mathbf{P}_{uv}[\mathbf{v}])\mathrm{e}^{\mathrm{i}\alpha_{uv,\ell}}$.
These formulas identify the field, frequency, decay, and optional
transition-weight derivatives that $G$ must bound.
Bounding $a_{uv}$ alone does not bound arbitrary scorer or frequency
derivatives. Smooth scorers satisfy the differentiability requirement;
piecewise-smooth networks require restricting the statement to
differentiability points.

\subsection{Step 9: A conditional downstream loss-gradient bound}
For a scalar objective $\ell_\theta(\mathbf{Y})$, use the feature norm above and
its dual (the sum of two-dimensional block norms when complex channels
are viewed as real pairs). Assume the downstream derivative
$g_\theta(\mathbf{Y})=\mathrm{D}_Y\ell_\theta(\mathbf{Y})$ has dual norm at most $A$ and is
$C_L$-Lipschitz in $\mathbf{Y}$. Assume the explicit parameter derivative
$h_\theta(\mathbf{Y})[\mathbf{v}]=\partial_\theta\ell_\theta(\mathbf{Y})[\mathbf{v}]$ is
$C_\theta$-Lipschitz, and $\|\mathrm{D}\mathbf{Y}_L[\mathbf{v}]\|_\infty\le C_Y$.
By the chain rule, the difference of the two total derivatives is
\begin{align}
 &\mathrm{D}[\ell_\theta(\mathbf{Y}_\infty)-\ell_\theta(\mathbf{Y}_L)][\mathbf{v}]\\
 &=h_\theta(\mathbf{Y}_\infty)[\mathbf{v}]-h_\theta(\mathbf{Y}_L)[\mathbf{v}]
   +g_\theta(\mathbf{Y}_\infty)\mathrm{D}\mathbf{Y}_\infty[\mathbf{v}]
   -g_\theta(\mathbf{Y}_L)\mathrm{D}\mathbf{Y}_L[\mathbf{v}]\\
 &=h_\theta(\mathbf{Y}_\infty)[\mathbf{v}]-h_\theta(\mathbf{Y}_L)[\mathbf{v}]
   +g_\theta(\mathbf{Y}_\infty)(\mathrm{D}\mathbf{Y}_\infty[\mathbf{v}]-\mathrm{D}\mathbf{Y}_L[\mathbf{v}])\\
 &\quad +(g_\theta(\mathbf{Y}_\infty)-g_\theta(\mathbf{Y}_L))\mathrm{D}\mathbf{Y}_L[\mathbf{v}].
\end{align}
The first term is bounded by $C_\theta\epsilon_L C_X$, because
$h_\theta$ is $C_\theta$-Lipschitz in $\mathbf{Y}$ and
Eq.~\eqref{eq:exact-aw-feature-gradient-bound} gives
$\|\mathbf{Y}_\infty-\mathbf{Y}_L\|_\infty\le\epsilon_L C_X$. The second is
bounded by $A(\eta_L C_X+\epsilon_L C_D)$, by the dual norm $A$ of
$g_\theta$ and the derivative bound of
Eq.~\eqref{eq:exact-aw-feature-gradient-bound}. The third is bounded by
$C_L C_Y\epsilon_L C_X$, because $g_\theta$ is $C_L$-Lipschitz, the same
forward bound applies to $\|g_\theta(\mathbf{Y}_\infty)-g_\theta(\mathbf{Y}_L)\|$,
and $\|\mathrm{D}\mathbf{Y}_L[\mathbf{v}]\|_\infty\le C_Y$. Therefore
\begin{equation}
 |\mathrm{D}[\ell_\theta(\mathbf{Y}_\infty)-\ell_\theta(\mathbf{Y}_L)][\mathbf{v}]|
 \le A(\eta_L C_X+\epsilon_L C_D)
      +(C_L C_Y+C_\theta)\epsilon_L C_X .
 \label{eq:exact-aw-loss-gradient}
\end{equation}
For a multilayer model, these constants must hold for the composed
downstream network, including attention normalization. The transport
theorem alone does not supply them.

\paragraph{Scope.}
The bounds compare the complete and truncated operators at common
parameters and inputs. They apply to zero, gradient and non-flat fields;
independent optimization trajectories and test performance are evaluated
experimentally.

\section{Operator Properties and Proofs}
\label{app:method-details}
\label{app:proofs}

\subsection{Problem Setup and Construction}
\label{sec:method-overview}

Let $\mathcal{G}=(V,E)$ be a graph with $N=|V|$ nodes and $M=|E|$
\emph{directed} edges. An undirected edge is stored in both orientations.
The cycle-flatness statements below use an
undirected support with symmetric nonnegative weights $w_{uv}=w_{vu}$.
Connectivity means connectivity through positive-weight edges. The sparse
walk identities also apply to directed nonnegative row-substochastic
transitions, without implying that their connection Laplacian is Hermitian.
The node features and paired-channel attention interface follow
Section~\ref{sec:methods}; the results below use the same operators with
decay and frequency displayed explicitly. Each statement is followed by its proof.

\subsection{Antisymmetric Rotary Edge Connections}
\label{sec:edge-field}

The connection interface accepts any permutation-equivariant antisymmetric scalar field.
Learned AW-RoPE constructs one from the current hidden state;
AW-static and AW-flat instead use the explicitly specified fixed fields.
For every
directed edge $u\to v$, a shared scalar network $s_\theta$ first computes an
ordered score, which we antisymmetrize explicitly:
\begin{align}
    c_{uv} &=s_\theta([\mathbf{h}_u,\mathbf{h}_v]),                                      \label{eq:ordered-score}\\
    \bar{a}_{uv} &=c_{uv}-c_{vu},                                        \label{eq:score-difference}\\
    a_{uv} &=a_{\max}\tanh(\bar{a}_{uv}/a_{\max}).                       \label{eq:antisymmetric-field}
\end{align}
Here $c_{uv}$ is the ordered score and $\bar a_{uv}$ is the score difference
denoted by $b_{uv}$ in Eq.~\eqref{eq:core-edge}. The odd $\tanh$ bounds the displacement.
Other supported \emph{unclipped} fields are a potential difference
$a_{uv}=g_\theta(\mathbf{h}_v)-g_\theta(\mathbf{h}_u)$ and, when coordinates are available, a
geometric projection $a_{uv}=\mathbf{w}^\top(\mathbf{x}_v-\mathbf{x}_u)$. Applying the nonlinear bound
to either still preserves antisymmetry but need not preserve the exact
gradient property; gradient-field theory and the corresponding ablation
therefore disable clipping. Learned variants use
Eq.~\eqref{eq:antisymmetric-field} unless stated otherwise.

\begin{lemma}[Antisymmetry and relabeling]
\label{lem:field-equivariance}
Equation~\eqref{eq:antisymmetric-field} satisfies $a_{vu}=-a_{uv}$. Because
the same scorer is used on every edge, relabeling the nodes only relabels the
edge field.
\end{lemma}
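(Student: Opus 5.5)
The plan has two parts: antisymmetry follows from the oddness of $\tanh$ applied to an explicitly antisymmetrized score, and relabeling follows because every quantity is computed pointwise from endpoint features by one shared map.

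\textbf{Antisymmetry.} By Eq.~\eqref{eq:score-difference}, swapping $u$ and $v$ gives
\[
\bar a_{vu}=c_{vu}-c_{uv}=-\bar a_{uv}.
\]
For any antisymmetric input $b$, $\tanh$ is odd, so
\[
a_{\max}\tanh(-b/a_{\max})=-a_{\max}\tanh(b/a_{\max}).
\]
Applying this with $b=\bar a_{uv}$ in Eq.~\eqref{eq:antisymmetric-field} yields $a_{vu}=-a_{uv}$. This step needs $(v,u)\in\mathcal E$ whenever $(u,v)\in\mathcal E$, which holds because undirected edges are stored in both orientations. For a purely directed edge, $\bar a_{uv}$ is still defined by evaluating $c_{vu}=s_\theta([\mathbf h_v,\mathbf h_u])$ formally, so the identity holds as a statement about the formula.

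\textbf{Relabeling.} Let $\sigma$ be a permutation of $\mathcal V$. The relabeled graph has edges $(\sigma(u),\sigma(v))$ and features $\mathbf h'_{\sigma(u)}=\mathbf h_u$. Since the same $s_\theta$ is applied on every edge and depends only on the concatenated endpoint features, we get
\[
c'_{\sigma(u)\sigma(v)}=s_\theta([\mathbf h'_{\sigma(u)},\mathbf h'_{\sigma(v)}])=s_\theta([\mathbf h_u,\mathbf h_v])=c_{uv}.
\]
The subtraction in Eq.~\eqref{eq:score-difference} and the clipping in Eq.~\eqref{eq:antisymmetric-field} are edgewise operations with no dependence on indices. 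Hence $\bar a'_{\sigma(u)\sigma(v)}=\bar a_{uv}$ and $a'_{\sigma(u)\sigma(v)}=a_{uv}$. In matrix form, if $\Pi$ is the permutation matrix of $\sigma$, the field matrix transforms as $\Pi A\Pi^\top$.

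\textbf{Main obstacle.} The only real care point is stating precisely what ``relabeling only relabels'' means. It is equality of the transported field under the induced edge bijection, which requires that $s_\theta$ receive no node-index or positional input beyond $\mathbf h$. Where the scorer uses auxiliary inputs such as LapPE, those inputs must themselves transform equivariantly under $\sigma$; sign or basis ambiguity of eigenvectors would break this, so the lemma should be read with $\mathbf h$ treated as given equivariant features.
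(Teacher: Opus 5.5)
Your proposal is correct and follows the paper's proof essentially step for step: swapping the orientation negates the score difference $\bar a_{uv}$, and oddness of $\tanh$ carries this through the clipping. On the relabeled edge the shared scorer sees exactly the original ordered pair $[\mathbf h_u,\mathbf h_v]$, giving $a'_{\sigma(u)\sigma(v)}=a_{uv}$. Your additional remarks go slightly beyond the paper without changing the argument: the matrix form $\Pi A\Pi^\top$, and the caveat that auxiliary scorer inputs such as LapPE must themselves transform equivariantly.
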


\label{app:proof-field}

\begin{proof}[Proof of Lemma~\ref{lem:field-equivariance}]
Swapping the edge orientation gives
\begin{align}
    \bar{a}_{vu}
    &=s_\theta([\mathbf{h}_v,\mathbf{h}_u])-s_\theta([\mathbf{h}_u,\mathbf{h}_v])\\
    &=-\bar{a}_{uv}.
\end{align}
Since $\tanh$ is odd,
\begin{align}
    a_{vu}
    &=a_{\max}\tanh(\bar{a}_{vu}/a_{\max})\\
    &=a_{\max}\tanh(-\bar{a}_{uv}/a_{\max})\\
    &=-a_{uv}.
\end{align}
For a permutation $\pi$, the ordered pair associated with the relabeled edge
$\pi(u)\to\pi(v)$ contains exactly the original features $[\mathbf{h}_u,\mathbf{h}_v]$.
Because the scorer parameters are shared, its output is unchanged and
$a'_{\pi(u)\pi(v)}=a_{uv}$.
\end{proof}

Let $w_{uv}\geq0$ be an optional edge weight. For a positive weighted degree, define
the row-normalized transition
\begin{equation}
    \mathbf{P}_{uv}=\frac{w_{uv}}{\sum_{x:(u,x)\in E}w_{ux}}.
    \label{eq:transition}
\end{equation}
A zero-weight row is set to zero. Adjacent real channels are viewed as one
complex channel. Channel $\ell$ uses
\begin{equation}
    \omega_\ell=b^{-2\ell/d},
    \qquad \ell=0,\ldots,d/2-1,
    \qquad b=10{,}000,
    \label{eq:rope-frequency}
\end{equation}
initialized as in RoPE \citep{su2021roformer}. Learned AW-RoPE may
optimize these frequencies after initialization.

\begin{proposition}[Node-wise factorization obstruction]
\label{prop:nodewise-obstruction}
If a scalar rotary transport can be implemented by one phase per node,
$\chi_\ell(u,v)=\overline{g_\ell(u)}g_\ell(v)$ with
$|g_\ell(u)|=1$, then its product around every directed cycle is one.
Consequently, a transport with nontrivial cycle circulation cannot be represented
exactly by standard node-wise Q/K rotations.
\end{proposition}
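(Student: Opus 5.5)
The plan is a direct telescoping computation, the multiplicative analogue of Eq.~\eqref{eq:prelim-telescope}. First we fix what ``product around a directed cycle'' means. A directed cycle is a closed walk $c=(v_0,v_1,\ldots,v_k=v_0)$ whose consecutive pairs are edges. The cycle product is
\[
 \Pi_\ell(c)=\prod_{j=0}^{k-1}\chi_\ell(v_j,v_{j+1}).
\]
For the edge transport of Eq.~\eqref{eq:core-one-step}, the scalar phase factor on edge $u\to v$ is $\mathrm{e}^{\mathrm{i}\omega_\ell a_{uv}}$. The cycle product is then $\mathrm{e}^{\mathrm{i}\omega_\ell A(c)}$, with $A(p)$ as in Eq.~\eqref{eq:core-walks}. ``Nontrivial circulation'' will mean that this product differs from one for some cycle, i.e.\ $\omega_\ell A(c)\notin 2\pi\mathbb Z$.

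Second, we substitute the node-wise form $\chi_\ell(u,v)=\overline{g_\ell(u)}\,g_\ell(v)$ and regroup the factors:
\[
 \Pi_\ell(c)=\overline{g_\ell(v_0)}\Bigl(\prod_{j=1}^{k-1}g_\ell(v_j)\overline{g_\ell(v_j)}\Bigr)g_\ell(v_k).
\]
Each interior pair equals $|g_\ell(v_j)|^2=1$ by the unit-modulus hypothesis. What remains is $\overline{g_\ell(v_0)}\,g_\ell(v_0)=1$, because $v_k=v_0$. This is the only place the unit-modulus assumption is used, and it is the key step. Writing $g_\ell(u)=\mathrm{e}^{\mathrm{i}\phi_{u\ell}}$ reproduces exactly the phase telescoping of Eq.~\eqref{eq:prelim-telescope}. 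We can also note that Eq.~\eqref{eq:prelim-relative} shows standard RoPE/WIRE Q/K rotations give precisely such a factor, $\chi_\ell(u,v)=\mathrm{e}^{\mathrm{i}(\phi_{v\ell}-\phi_{u\ell})}$.

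Third, the consequence follows by contraposition. Suppose a transport with nontrivial circulation on some cycle $c$ were exactly representable by node-wise rotations. Then its edge factors would coincide with some $\chi_\ell$ of the node-wise form, so $\Pi_\ell(c)=1$, contradicting $\mathrm{e}^{\mathrm{i}\omega_\ell A(c)}\neq1$.

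The only real obstacle is making ``implemented by'' precise. The node-wise factor must agree with the edge phase on every edge of the cycle, not merely up to an amplitude. We would state that the comparison is of the unit-modulus phase factors, with the weights $\mathbf P_{uv}$ factored out since they are positive. A remark would then show that nontrivial circulation is actually attainable. The diamond of Figure~\ref{fig:aw-insight} is one example: the cycle $u\to a\to v\to b\to u$ has $A=\theta_1-\theta_2=-\Delta$, which is nonzero when the lower edges are shifted. This confirms that the hypothesis is not vacuous.
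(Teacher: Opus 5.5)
Your proof is correct and is essentially the paper's argument: substitute $\chi_\ell(u,v)=\overline{g_\ell(u)}g_\ell(v)$, regroup the cycle product so that each $g_\ell(v_j)$ meets its conjugate (with $v_m=v_0$ closing the loop), and observe that a non-unit cycle product contradicts the factorization. One small point: the closing factor $\overline{g_\ell(v_0)}g_\ell(v_0)=|g_\ell(v_0)|^2$ needs unit modulus as well as $v_k=v_0$, which is why the paper writes the whole product as $\prod_j|g_\ell(v_j)|^2$.
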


\label{app:proof-nodewise-obstruction}

\begin{proof}[Proof of Proposition~\ref{prop:nodewise-obstruction}]
Let a directed cycle be
$C=(v_0,v_1,\ldots,v_m)$ with $v_m=v_0$. Assume the pair transport has a
scalar node factorization
$\chi_\ell(u,v)=\overline{g_\ell(u)}g_\ell(v)$ and
$|g_\ell(u)|=1$. Multiplying its factors in cycle order gives
\begin{align}
  \prod_{j=0}^{m-1}\chi_\ell(v_j,v_{j+1})
  &=\prod_{j=0}^{m-1}
    \overline{g_\ell(v_j)}g_\ell(v_{j+1}) \\
  &=\overline{g_\ell(v_0)}g_\ell(v_1)
    \overline{g_\ell(v_1)}g_\ell(v_2)\cdots
    \overline{g_\ell(v_{m-1})}g_\ell(v_0) \\
  &=\prod_{j=0}^{m-1}|g_\ell(v_j)|^2 \\
  &=1.
\end{align}
Every intermediate node phase cancels with its conjugate, including the
initial/final node because $v_m=v_0$. Thus non-unit cycle product contradicts
the assumed node factorization. This is why exact non-flat U(1) transport is
 necessarily relational. AW-RoPE remains node-level in its stored
 features by evaluating the edge operator action; it does not claim that the
 underlying pair kernel factorizes.
\end{proof}

\subsection{AW-RoPE: Sparse Analytic Walk Transport}
\label{sec:analytic-walk}

For one frequency $\omega$, define
\begin{equation}
    [\mathbf{T}_\omega]_{uv}=\mathbf{P}_{uv}\mathrm{e}^{\mathrm{i}\omega a_{uv}}.
    \label{eq:rotary-transition}
\end{equation}
Thus, for a complex feature $\mathbf{X}$,
\begin{equation}
    (\mathbf{T}_\omega \mathbf{X})_u
    =\sum_{v:(u,v)\in E}\mathbf{P}_{uv}\mathrm{e}^{\mathrm{i}\omega a_{uv}}\mathbf{X}_v.
    \label{eq:rotary-transition-action}
\end{equation}
The phase is a rotation, while $\mathbf{P}_{uv}$ controls the magnitude.

\begin{proposition}[Convergence of the analytic walk]
\label{prop:neumann}
For every $\omega$, $\lVert \mathbf{T}_\omega\rVert_\infty\leq1$. Hence for
$0\leq z<1$,
\begin{equation}
    \mathbf{F}_{z,\omega}
    =(\mathbf{I}-z\mathbf{T}_\omega)^{-1}
    =\sum_{k=0}^{\infty}z^k\mathbf{T}_\omega^k
    \label{eq:resolvent}
\end{equation}
exists and converges in the induced infinity norm.
\end{proposition}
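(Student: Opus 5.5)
The plan is to reduce everything to the row-sum estimate and then reuse the Neumann-series argument already carried out in Steps 1--3 of Appendix~\ref{app:exact-aw-training}, now with $\mathbf B=z\mathbf T_\omega$ and the decay written explicitly.

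\textbf{First}, I will bound $\|\mathbf T_\omega\|_\infty$ directly from Eq.~\eqref{eq:rotary-transition}. Fix a row $u$. Since $a_{uv}$ and $\omega$ are real, $|\mathrm e^{\mathrm i\omega a_{uv}}|=1$, so $\sum_v|[\mathbf T_\omega]_{uv}|=\sum_v \mathbf P_{uv}$. By Eq.~\eqref{eq:transition} this sum is exactly $1$ when the weighted degree is positive. It is $0$ for a zero-weight row, which is set to zero. If parallel edges are merged into one entry first, the triangle inequality only decreases the row sum. Taking the maximum over $u$ gives $\|\mathbf T_\omega\|_\infty\le1$ for every $\omega$. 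This step is uniform in $\omega$ because the frequency enters only through unit-modulus factors.

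\textbf{Second}, I will use submultiplicativity of the induced norm (Step~1 of Appendix~\ref{app:exact-aw-training}) to get $\|z^k\mathbf T_\omega^k\|_\infty\le z^k$. Then the partial sums $\mathbf F_L=\sum_{k\le L}z^k\mathbf T_\omega^k$ satisfy $\|\mathbf F_{L'}-\mathbf F_L\|_\infty\le z^{L+1}/(1-z)$ for $L'>L$. Because $\mathbb C^{N\times N}$ is complete, the sequence $(\mathbf F_L)$ is Cauchy and converges to some $\mathbf F$ in the induced infinity norm; this is the convergence part of the statement. If $z=0$ the series is just $\mathbf I$, so nothing further is needed.

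\textbf{Third}, I will identify the limit with the inverse without assuming invertibility in advance. The telescoping identities $(\mathbf I-z\mathbf T_\omega)\mathbf F_L=\mathbf F_L(\mathbf I-z\mathbf T_\omega)=\mathbf I-z^{L+1}\mathbf T_\omega^{L+1}$ (Eq.~\eqref{eq:core-telescope}) hold for every $L$. Their right-hand side converges to $\mathbf I$, and matrix multiplication is continuous, so passing to the limit gives two-sided inverse relations. Hence $\mathbf I-z\mathbf T_\omega$ is invertible and $\mathbf F_{z,\omega}=(\mathbf I-z\mathbf T_\omega)^{-1}=\mathbf F$.

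None of these steps is a real obstacle. The only point needing care is the row-sum bound in degenerate cases: zero rows, parallel edges, and directed substochastic $\mathbf P$. I will handle these explicitly as above rather than assume $\mathbf P$ is exactly stochastic.
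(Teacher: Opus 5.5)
Your proposal is correct and follows essentially the same route as the paper. The paper proves this proposition by invoking Steps 1--3 of Appendix~\ref{app:exact-aw-training} with $\mathbf{B}=z\mathbf{T}_\omega$. Those steps are exactly your three: the unit-modulus row-sum bound (including the zero-row and parallel-edge remarks) with submultiplicativity, the Cauchy/completeness argument for the partial sums, and the two-sided telescoping identity passed to the limit to obtain invertibility without assuming it in advance.
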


\label{app:proof-neumann}

Proposition~\ref{prop:neumann} is the fixed-parameter case of
Steps 1--3 in Appendix~\ref{app:exact-aw-training}, with
$\mathbf{B}=z\mathbf{T}_\omega$. Those steps establish contraction, convergence and
invertibility before identifying the limit; no second proof is needed here.

We parameterize $z$ as a sigmoid times $1-\varepsilon$, with fixed
$0<\varepsilon<1$, to keep it in the convergence interval during training.

\subsection{Matrix Powers as Walk Sums}
\label{sec:walk-expansion}

For a length-$k$ walk
$p=(v_0=u,v_1,\ldots,v_k=v)$, define
\begin{equation}
    \pi(p)=\prod_{j=0}^{k-1}\mathbf{P}_{v_jv_{j+1}},
    \qquad
    A(p)=\sum_{j=0}^{k-1}a_{v_jv_{j+1}}.
    \label{eq:walk-weight-displacement}
\end{equation}

\begin{lemma}[Walk expansion]
\label{lem:walk-expansion}
For every integer $k\geq1$,
\begin{equation}
    [\mathbf{T}_\omega^k]_{uv}
    =\sum_{p:u\leadsto v,\,|p|=k}
      \pi(p)\mathrm{e}^{\mathrm{i}\omega A(p)}.
    \label{eq:fixed-length-walk-sum}
\end{equation}
Consequently,
\begin{equation}
    [\mathbf{F}_{z,\omega}]_{uv}
    =\sum_{p:u\leadsto v}
      z^{|p|}\pi(p)\mathrm{e}^{\mathrm{i}\omega A(p)}.
    \label{eq:walk-sum}
\end{equation}
\end{lemma}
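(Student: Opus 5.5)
I would prove the fixed-length identity \eqref{eq:fixed-length-walk-sum} by induction on $k$, and then obtain \eqref{eq:walk-sum} by summing over $k$ with the Neumann series of Proposition~\ref{prop:neumann}.

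\textbf{Base case.} For $k=1$, a length-one walk from $u$ to $v$ is the single edge $(u,v)$, with $\pi(p)=\mathbf P_{uv}$ and $A(p)=a_{uv}$. This matches $[\mathbf T_\omega]_{uv}$ in \eqref{eq:rotary-transition}. If $(u,v)\notin E$, the sum is empty and $\mathbf P_{uv}=0$, so both sides vanish. Parallel edges were already summed into one matrix entry, so walks are indexed by vertex sequences with positive-weight steps.

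\textbf{Inductive step.} I would write $\mathbf T_\omega^{k+1}=\mathbf T_\omega^k\mathbf T_\omega$, so that
\[
[\mathbf T_\omega^{k+1}]_{uv}=\sum_w[\mathbf T_\omega^k]_{uw}\,\mathbf P_{wv}\mathrm e^{\mathrm i\omega a_{wv}}.
\]
Substituting the hypothesis for $[\mathbf T_\omega^k]_{uw}$ and distributing gives a sum over pairs $(p',w)$, where $p'$ is a length-$k$ walk from $u$ to $w$. The map $(p',v)\mapsto p=(p',v)$ is a bijection onto length-$(k+1)$ walks from $u$ to $v$ (split off the last step). Under it, $\pi(p)=\pi(p')\mathbf P_{wv}$ and $A(p)=A(p')+a_{wv}$, so
\[
\mathrm e^{\mathrm i\omega A(p')}\mathrm e^{\mathrm i\omega a_{wv}}=\mathrm e^{\mathrm i\omega A(p)}.
\]
Terms with $\mathbf P_{wv}=0$ contribute nothing, which matches the convention that walks use edges only. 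All sums are finite, so no ordering issues arise.

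\textbf{Resolvent.} For \eqref{eq:walk-sum}, Proposition~\ref{prop:neumann} gives $\mathbf F_{z,\omega}=\sum_k z^k\mathbf T_\omega^k$, converging in norm and hence entrywise. Adopting the convention that the length-zero walk $(u)$ has $\pi=1$ and $A=0$ accounts for $\mathbf I$. Then
\[
[\mathbf F_{z,\omega}]_{uv}=\sum_k z^k[\mathbf T_\omega^k]_{uv}.
\]
Inserting the fixed-length formula and regrouping by $|p|$ gives the claim.

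\textbf{Main obstacle.} The only delicate point is justifying the regrouping into a single sum over all walks, which is infinite when the graph has cycles. I would show absolute convergence:
\[
\sum_p z^{|p|}\pi(p)=\sum_k z^k[\mathbf P^k]_{uv}\le\sum_k z^k=\frac{1}{1-z},
\]
using row-substochasticity. Absolute convergence makes the walk sum independent of ordering and equal to the length-graded sum.
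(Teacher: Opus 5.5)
Your proposal is correct and follows the paper's proof essentially step for step. Both use induction via $\mathbf T_\omega^{k+1}=\mathbf T_\omega^k\mathbf T_\omega$ with the append-the-last-edge bijection, the zero-length-walk convention ($\pi=1$, $A=0$) to account for $\mathbf I$, and absolute convergence from $\sum_k z^k[\mathbf P^k]_{uv}\le(1-z)^{-1}$ to justify regrouping the Neumann series into a single sum over walks (the only slip is notational: the bijection is $p'\mapsto(p',v)$, since the pair index $w$ is already determined by the endpoint of $p'$).
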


\label{app:proof-walk-expansion}

\begin{proof}[Proof of Lemma~\ref{lem:walk-expansion}]
For $k=1$, Eq.~\eqref{eq:fixed-length-walk-sum} is the definition of
$\mathbf{T}_\omega$. Assume it holds for length $k$. Matrix multiplication gives
\begin{align}
    [\mathbf{T}_\omega^{k+1}]_{uv}
    &=\sum_x[\mathbf{T}_\omega^k]_{ux}[\mathbf{T}_\omega]_{xv}\\
    &=\sum_x\sum_{p:u\leadsto x,\,|p|=k}
      \pi(p)\mathrm{e}^{\mathrm{i}\omega A(p)}\mathbf{P}_{xv}\mathrm{e}^{\mathrm{i}\omega a_{xv}}\\
    &=\sum_x\sum_{p:u\leadsto x,\,|p|=k}
      \bigl(\pi(p)\mathbf{P}_{xv}\bigr)
      \mathrm{e}^{\mathrm{i}\omega(A(p)+a_{xv})}.
\end{align}
Appending $x\to v$ is a bijection between the terms on the last line and all
length-$(k+1)$ walks from $u$ to $v$. This proves the induction step.
The zero-length walk exists only for $u=v$ and has $\pi(p)=1$, $A(p)=0$.
Absolute convergence follows from $\sum_{k\ge0}z^k[\mathbf P^k]_{uv}\le(1-z)^{-1}$.
Substituting the fixed-length identity into Eq.~\eqref{eq:resolvent} yields
\begin{align}
    [\mathbf{F}_{z,\omega}]_{uv}
    &=\sum_{k=0}^{\infty}z^k[\mathbf{T}_\omega^k]_{uv}\\
    &=\sum_{k=0}^{\infty}
      \sum_{p:u\leadsto v,\,|p|=k}
      z^k\pi(p)\mathrm{e}^{\mathrm{i}\omega A(p)}\\
    &=\sum_{p:u\leadsto v}
      z^{|p|}\pi(p)\mathrm{e}^{\mathrm{i}\omega A(p)}.
\end{align}
\end{proof}

Equation~\eqref{eq:walk-sum} makes the modeling semantics explicit. The
magnitude $z^{|p|}\pi(p)$ discounts long or unlikely walks. The phase adds
signed edge displacements. Different routes are summed as complex numbers,
so the graph itself determines whether they reinforce or cancel.

\subsection{Sparse Truncation, Error, and Stability}
\label{sec:sparse-evaluation}

We approximate the resolvent with
\begin{equation}
    \mathbf{F}^{(L)}_{z,\omega}\mathbf{X}
    =\sum_{k=0}^{L}z^k\mathbf{T}_\omega^k\mathbf{X}.
    \label{eq:truncated-resolvent}
\end{equation}
Set $\mathbf{X}^{(0)}=\mathbf{X}$ and $\mathbf{Y}^{(0)}=\mathbf{X}$. In real paired-channel form, one step is
\begin{align}
    \mathbf{X}^{(k+1)}_u
    &=\sum_{v:(u,v)\in E}
      \mathbf{P}_{uv}\mathbf{R}(\omega a_{uv})\mathbf{X}^{(k)}_v,                 \label{eq:sparse-step}\\
    \mathbf{Y}^{(k+1)}
    &=\mathbf{Y}^{(k)}+z^{k+1}\mathbf{X}^{(k+1)},                         \label{eq:sparse-accumulate}
\end{align}
Here $\mathbf{R}(\alpha)$ is the counterclockwise rotation by angle
$\alpha$ (Section~\ref{sec:core-preliminaries}). By induction,
$\mathbf{X}^{(k)}=\mathbf{T}_\omega^k\mathbf{X}$; inserting this identity into
Eq.~\eqref{eq:sparse-accumulate} gives Eq.~\eqref{eq:truncated-resolvent}.

\begin{proposition}[Truncation and feature bounds]
\label{prop:tail-stability}
For $0\leq z<1$,
\begin{equation}
    \left\|\mathbf{F}_{z,\omega}
    -\mathbf{F}^{(L)}_{z,\omega}\right\|_\infty
    \leq\frac{z^{L+1}}{1-z}.                            \label{eq:tail-bound}
\end{equation}
In addition,
\begin{equation}
    \left\|\mathbf{F}^{(L)}_{z,\omega}\mathbf{X}\right\|_\infty
    \leq\frac{1-z^{L+1}}{1-z}\lVert \mathbf{X}\rVert_\infty.   \label{eq:feature-bound}
\end{equation}
\end{proposition}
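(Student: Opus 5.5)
This is the fixed-parameter specialization of Steps 1, 2 and 4 in Appendix~\ref{app:exact-aw-training}, with $\mathbf B=z\mathbf T_\omega$ and the uniform cap $r$ replaced by $z$ itself. The argument proceeds in three steps.

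\textbf{Power bound.} First recall from Proposition~\ref{prop:neumann} that $\|\mathbf T_\omega\|_\infty\le\max_u\sum_v\mathbf P_{uv}\le1$, because every phase has modulus one and $\mathbf P$ is nonnegative and row-substochastic. Submultiplicativity of the induced infinity norm (Step 1) then gives $\|z^k\mathbf T_\omega^k\|_\infty\le z^k$ for every $k\ge0$.

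\textbf{Truncation bound \eqref{eq:tail-bound}.} I would use the remainder identity. Proposition~\ref{prop:neumann} gives $\mathbf F_{z,\omega}=\sum_{k\ge0}z^k\mathbf T_\omega^k$ as a norm-convergent series. The difference $\mathbf F_{z,\omega}-\mathbf F^{(L)}_{z,\omega}$ is therefore the tail $\sum_{k\ge L+1}z^k\mathbf T_\omega^k$. The triangle inequality, which passes to the limit by continuity of the norm, bounds this tail by $\sum_{k\ge L+1}z^k=z^{L+1}/(1-z)$.

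One could instead write the tail as $\mathbf F_{z,\omega}(z\mathbf T_\omega)^{L+1}$, using Eq.~\eqref{eq:exact-aw-forward-remainder}, and combine $\|\mathbf F_{z,\omega}\|_\infty\le 1/(1-z)$ with $\|(z\mathbf T_\omega)^{L+1}\|_\infty\le z^{L+1}$. Both routes give the same constant.

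\textbf{Feature bound \eqref{eq:feature-bound}.} The finite sum is handled directly:
\[
\|\mathbf F^{(L)}_{z,\omega}\|_\infty\le\sum_{k=0}^L z^k=\frac{1-z^{L+1}}{1-z}.
\]
Applying the vector inequality $\|\mathbf A\mathbf x\|_\infty\le\|\mathbf A\|_\infty\|\mathbf x\|_\infty$ from Step 1 then gives the claim. The only subtlety is the edge case $z=0$. There the bounds read $0$ and $1$, and they hold trivially because $\mathbf F=\mathbf F^{(L)}=\mathbf I$, using the convention $z^0=1$.

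There is no real obstacle here. The one point to state carefully is the multichannel or real-pair reading of the norm. Each $2\times2$ rotation block acts isometrically on the modulus $|x_{\rm r}+\mathrm i x_{\rm i}|$, so the complex-channel computation transfers verbatim. Taking a maximum over channels, as in Step 1, preserves both bounds.
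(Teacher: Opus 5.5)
Your proposal is correct and follows the paper's own proof. The paper gets the tail bound from Step~4 of Appendix~\ref{app:exact-aw-training} with $r=z$, which is your alternative remainder route $\mathbf F_{z,\omega}(z\mathbf T_\omega)^{L+1}$; your primary tail-sum route is Step~2's Cauchy estimate with $L'\to\infty$. It gets the feature bound by multiplying the finite-sum norm bound \eqref{eq:exact-aw-sum-norm} by $\|\mathbf X\|_\infty$, exactly as you do, and your explicit handling of $z=0$ and of the real-pair norm is a harmless addition.
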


\label{app:proof-tail-stability}

For the first bound, apply Step 4 of
Appendix~\ref{app:exact-aw-training} with $r=z$.
For the feature bound, multiply the finite-sum norm bound in
Eq.~\eqref{eq:exact-aw-sum-norm} by $\|\mathbf{X}\|_\infty$.
These give Eqs.~\eqref{eq:tail-bound} and \eqref{eq:feature-bound},
respectively.

The bound is worst-case: it ignores cancellations between walk phases and is
usually loose when cycles interfere. It nevertheless exposes why a large
$z$ should be paired with a larger $L$, and why normalizing the finite sum or
using a zero-initialized residual gate can stabilize deep models.

\subsection{Permutation Equivariance}
\label{sec:permutation-equivariance}

\begin{proposition}[Permutation equivariance]
\label{prop:permutation}
Let $\boldsymbol{\Pi}$ be a node permutation matrix. If inputs and edges are relabeled by
$\boldsymbol{\Pi}$, then
\begin{equation}
    \mathbf{T}'_\omega=\boldsymbol{\Pi} \mathbf{T}_\omega\boldsymbol{\Pi}^\top,
    \qquad
    \mathbf{F}_{z,\omega}^{\prime(L)}\boldsymbol{\Pi} \mathbf{X}
    =\boldsymbol{\Pi}\mathbf{F}_{z,\omega}^{(L)}\mathbf{X}.
    \label{eq:permutation-equivariance}
\end{equation}
\end{proposition}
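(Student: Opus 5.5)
The plan is to prove the two identities in order, first at the level of the one-step matrix and then for the truncated sum. Fix a permutation $\pi$ of the nodes with matrix $\boldsymbol\Pi$, so that $(\boldsymbol\Pi\mathbf X)_{\pi(u)}=\mathbf X_u$ and $[\boldsymbol\Pi\mathbf A\boldsymbol\Pi^\top]_{\pi(u)\pi(v)}=\mathbf A_{uv}$ for any $N\times N$ matrix $\mathbf A$. The relabeled graph has edge set $\{(\pi(u),\pi(v)):(u,v)\in E\}$, weights $w'_{\pi(u)\pi(v)}=w_{uv}$ and node features $\mathbf h'_{\pi(u)}=\mathbf h_u$. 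The frequency $\omega$ and the decay $z$ are parameters, not node-indexed quantities, so they are unchanged.

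\textbf{Step 1: the transition matrix.} The normalizer of row $\pi(u)$ in the relabeled graph is $\sum_{x}w'_{\pi(u)\pi(x)}=\sum_x w_{ux}$, because $x\mapsto\pi(x)$ is a bijection on the out-neighbours. Hence $\mathbf P'_{\pi(u)\pi(v)}=\mathbf P_{uv}$. A zero-weight row stays a zero-weight row, so the zero-row convention is also preserved. In matrix form this gives $\mathbf P'=\boldsymbol\Pi\mathbf P\boldsymbol\Pi^\top$.

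\textbf{Step 2: the phases.} By Lemma~\ref{lem:field-equivariance}, the shared scorer gives $a'_{\pi(u)\pi(v)}=a_{uv}$. Therefore
\[
[\mathbf T'_\omega]_{\pi(u)\pi(v)}=\mathbf P'_{\pi(u)\pi(v)}\mathrm e^{\mathrm i\omega a'_{\pi(u)\pi(v)}}=[\mathbf T_\omega]_{uv},
\]
that is, $\mathbf T'_\omega=\boldsymbol\Pi\mathbf T_\omega\boldsymbol\Pi^\top$. This is the first identity.

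\textbf{Step 3: the truncated sum.} Since $\boldsymbol\Pi^\top\boldsymbol\Pi=\mathbf I$, the factors telescope and $(\mathbf T'_\omega)^k=\boldsymbol\Pi\mathbf T_\omega^k\boldsymbol\Pi^\top$ for every $k\ge0$, including $k=0$. Summing with the weights $z^k$ gives $\mathbf F'^{(L)}_{z,\omega}=\boldsymbol\Pi\mathbf F^{(L)}_{z,\omega}\boldsymbol\Pi^\top$. Applying both sides to $\boldsymbol\Pi\mathbf X$ and using $\boldsymbol\Pi^\top\boldsymbol\Pi=\mathbf I$ once more yields
\[
\mathbf F'^{(L)}_{z,\omega}\boldsymbol\Pi\mathbf X=\boldsymbol\Pi\mathbf F^{(L)}_{z,\omega}\mathbf X,
\]
which is the second identity.

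The same conjugation passes through the limit $L\to\infty$ by Proposition~\ref{prop:neumann}, because conjugation by a permutation preserves $\|\cdot\|_\infty$. The same argument therefore covers the Exact resolvent, and it also covers the sparse recurrence, which computes the same polynomial.

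The only step needing care is Step 2. It relies on the fact that the field is computed from the relabeled features, so that the ordered pair fed to the scorer on edge $\pi(u)\to\pi(v)$ is exactly $[\mathbf h_u,\mathbf h_v]$. It also relies on the convention that parallel edges are summed before forming a matrix entry; this convention is invariant under relabeling, so the entrywise identity still holds. Everything after Step 2 is algebra with orthogonal matrices.
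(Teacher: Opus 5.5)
Your proposal is correct and takes essentially the same route as the paper's proof. Both use field equivariance from Lemma~\ref{lem:field-equivariance}, row normalization commuting with the relabeling so that $\mathbf P'=\boldsymbol\Pi\mathbf P\boldsymbol\Pi^\top$, entrywise phase lifting to get $\mathbf T'_\omega=\boldsymbol\Pi\mathbf T_\omega\boldsymbol\Pi^\top$, and cancellation of adjacent $\boldsymbol\Pi^\top\boldsymbol\Pi$ factors in the powers before summing; your extension to the \emph{Exact} resolvent is a harmless bonus that also follows directly from $(\mathbf I-z\boldsymbol\Pi\mathbf T_\omega\boldsymbol\Pi^\top)^{-1}=\boldsymbol\Pi(\mathbf I-z\mathbf T_\omega)^{-1}\boldsymbol\Pi^\top$, without any norm argument.
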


\label{app:proof-permutation}

\begin{proof}[Proof of Proposition~\ref{prop:permutation}]
Lemma~\ref{lem:field-equivariance} gives
$a'_{\pi(u)\pi(v)}=a_{uv}$. Row normalization commutes with the same
relabeling, so $\mathbf{P}'=\boldsymbol{\Pi} \mathbf{P}\boldsymbol{\Pi}^\top$. Entrywise phase lifting gives
$\mathbf{T}'_\omega=\boldsymbol{\Pi} \mathbf{T}_\omega\boldsymbol{\Pi}^\top$. Repeated multiplication yields
\begin{equation}
    (\mathbf{T}'_\omega)^k
    =(\boldsymbol{\Pi} \mathbf{T}_\omega\boldsymbol{\Pi}^\top)^k
    =\boldsymbol{\Pi} \mathbf{T}_\omega^k\boldsymbol{\Pi}^\top,
\end{equation}
because every adjacent $\boldsymbol{\Pi}^\top\boldsymbol{\Pi}$ cancels. Hence
\begin{align}
    \mathbf{F}_{z,\omega}^{\prime(L)}\boldsymbol{\Pi} \mathbf{X}
    &=\sum_{k=0}^{L}z^k(\mathbf{T}'_\omega)^k\boldsymbol{\Pi} \mathbf{X}\\
    &=\sum_{k=0}^{L}z^k\boldsymbol{\Pi} \mathbf{T}_\omega^k\boldsymbol{\Pi}^\top\boldsymbol{\Pi} \mathbf{X}\\
    &=\boldsymbol{\Pi}\mathbf{F}_{z,\omega}^{(L)}\mathbf{X}.
\end{align}
\end{proof}

\subsection{Gradient Fields Recover the Sequence Phase}
\label{sec:path-consistency}

Suppose there is a scalar potential $s:V\to\mathbb{R}$ such that
\begin{equation}
    a_{uv}=s_v-s_u.                                      \label{eq:gradient-field}
\end{equation}
Define the diagonal unitary matrix
\begin{equation}
    \mathbf{G}_\omega=\operatorname{diag}
    \left(\mathrm{e}^{\mathrm{i}\omega s_1},\ldots,\mathrm{e}^{\mathrm{i}\omega s_N}\right).
    \label{eq:gauge-matrix}
\end{equation}

\begin{theorem}[\emph{Exact} gradient-field factorization]
\label{thm:gradient-factorization}
For Eq.~\eqref{eq:gradient-field},
\begin{equation}
    \mathbf{T}_\omega=\mathbf{G}_\omega^\ast \mathbf{P} \mathbf{G}_\omega.                  \label{eq:gauge-identity}
\end{equation}
For every polynomial, or power-series function $f(x)=\sum_{k\geq0}c_kx^k$
whose matrix series converges absolutely in norm (in particular, a power
series with radius greater than $\|\mathbf{P}\|_\infty$),
\begin{equation}
    f(\mathbf{T}_\omega)=\mathbf{G}_\omega^\ast f(\mathbf{P})\mathbf{G}_\omega.             \label{eq:analytic-gauge-identity}
\end{equation}
Thus every entry satisfies
\begin{equation}
    [f(\mathbf{T}_\omega)]_{uv}
    =\mathrm{e}^{\mathrm{i}\omega(s_v-s_u)}[f(\mathbf{P})]_{uv}.                    \label{eq:exact-relative-phase}
\end{equation}
\end{theorem}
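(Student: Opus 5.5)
The plan is to prove the three displays in order, each resting on the previous one. The only structural input is that $\mathbf G_\omega$ is diagonal and unitary, so $\mathbf G_\omega^\ast\mathbf G_\omega=\mathbf G_\omega\mathbf G_\omega^\ast=\mathbf I$. Conjugation by it is therefore a similarity that preserves products and is continuous.

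\emph{Step 1 (the one-step identity).} I would verify Eq.~\eqref{eq:gauge-identity} entrywise. Since both gauge factors are diagonal,
\[
[\mathbf G_\omega^\ast\mathbf P\mathbf G_\omega]_{uv}=\overline{\mathrm e^{\mathrm i\omega s_u}}\,\mathbf P_{uv}\,\mathrm e^{\mathrm i\omega s_v}=\mathbf P_{uv}\mathrm e^{\mathrm i\omega(s_v-s_u)}=\mathbf P_{uv}\mathrm e^{\mathrm i\omega a_{uv}}.
\]
This is exactly $[\mathbf T_\omega]_{uv}$ by Eqs.~\eqref{eq:rotary-transition} and \eqref{eq:gradient-field}. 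Absent edges give zero on both sides, and so do zero rows.

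\emph{Step 2 (powers and polynomials).} By induction, $\mathbf T_\omega^k=\mathbf G_\omega^\ast\mathbf P^k\mathbf G_\omega$, because every interior factor $\mathbf G_\omega\mathbf G_\omega^\ast$ cancels. This is the same telescoping used in Proposition~\ref{prop:permutation}, and it is where the walk phase telescopes as in Eq.~\eqref{eq:prelim-telescope}. By linearity the identity holds for every polynomial $f$.

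\emph{Step 3 (power series).} This is the only step with a real issue, namely the convergence and the transfer of limits. I would first check that the two series converge together. Take the induced $\infty$-norm from Appendix~\ref{app:exact-aw-training}. Unitary diagonal conjugation does not change absolute values of entries, so
\[
\|\mathbf T_\omega^k\|_\infty=\|\mathbf P^k\|_\infty\le\|\mathbf P\|_\infty^k.
\]
Hence $\sum_k|c_k|\|\mathbf T_\omega^k\|_\infty$ converges exactly when $\sum_k|c_k|\|\mathbf P^k\|_\infty$ does. In particular both converge when the radius of convergence exceeds $\|\mathbf P\|_\infty$. The partial sums then satisfy
\[
f_L(\mathbf T_\omega)=\mathbf G_\omega^\ast f_L(\mathbf P)\mathbf G_\omega.
\]
The map $\mathbf A\mapsto\mathbf G_\omega^\ast\mathbf A\mathbf G_\omega$ is linear and hence continuous in finite dimensions. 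Letting $L\to\infty$, as in Step 3 of Appendix~\ref{app:exact-aw-training}, gives Eq.~\eqref{eq:analytic-gauge-identity}. The resolvent $(\mathbf I-z\mathbf P)^{-1}$ of Proposition~\ref{prop:neumann} is the special case $c_k=z^k$.

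\emph{Step 4 (entrywise form).} Eq.~\eqref{eq:exact-relative-phase} then follows by reading off entries exactly as in Step 1, now applied to the matrix $f(\mathbf P)$ in place of $\mathbf P$.
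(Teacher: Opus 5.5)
Your proposal is correct and follows the paper's proof essentially step for step: the entrywise check of $\mathbf T_\omega=\mathbf G_\omega^\ast\mathbf P\mathbf G_\omega$, the telescoping induction $\mathbf T_\omega^k=\mathbf G_\omega^\ast\mathbf P^k\mathbf G_\omega$, passing the fixed similarity through the absolutely convergent series, and then reading off entries. Your extra observation that $\|\mathbf T_\omega^k\|_\infty=\|\mathbf P^k\|_\infty$ also shows that the absolute-convergence hypothesis holds for $\mathbf T_\omega$ exactly when it holds for $\mathbf P$, which the paper leaves implicit.
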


\label{app:proof-gradient-factorization}

\begin{proof}[Proof of Theorem~\ref{thm:gradient-factorization}]
First expand one entry:
\begin{align}
    [\mathbf{G}_\omega^\ast \mathbf{P} \mathbf{G}_\omega]_{uv}
    &=e^{-i\omega s_u}\mathbf{P}_{uv}\mathrm{e}^{\mathrm{i}\omega s_v}\\
    &=\mathbf{P}_{uv}\mathrm{e}^{\mathrm{i}\omega(s_v-s_u)}\\
    &=\mathbf{P}_{uv}\mathrm{e}^{\mathrm{i}\omega a_{uv}}=[\mathbf{T}_\omega]_{uv}.
\end{align}
Thus $\mathbf{T}_\omega=\mathbf{G}_\omega^\ast \mathbf{P}\mathbf{G}_\omega$. Since
$\mathbf{G}_\omega \mathbf{G}_\omega^\ast=\mathbf{I}$,
\begin{align}
    \mathbf{T}_\omega^2
    &=(\mathbf{G}_\omega^\ast \mathbf{P}\mathbf{G}_\omega)(\mathbf{G}_\omega^\ast \mathbf{P}\mathbf{G}_\omega)\\
    &=\mathbf{G}_\omega^\ast \mathbf{P}^2\mathbf{G}_\omega,
\end{align}
and induction gives
$\mathbf{T}_\omega^k=\mathbf{G}_\omega^\ast \mathbf{P}^k\mathbf{G}_\omega$ for every $k\geq0$. Therefore
\begin{align}
    f(\mathbf{T}_\omega)
    &=\sum_{k\geq0}c_k\mathbf{T}_\omega^k\\
    &=\sum_{k\geq0}c_k\mathbf{G}_\omega^\ast \mathbf{P}^k\mathbf{G}_\omega\\
    &=\mathbf{G}_\omega^\ast\left(\sum_{k\geq0}c_k\mathbf{P}^k\right)\mathbf{G}_\omega\\
    &=\mathbf{G}_\omega^\ast f(\mathbf{P})\mathbf{G}_\omega.
\end{align}
Absolute matrix-norm convergence justifies passing the fixed similarity
transform through the infinite sum. Taking entry $(u,v)$ produces
$[f(\mathbf{T}_\omega)]_{uv}=\mathrm{e}^{\mathrm{i}\omega(s_v-s_u)}[f(\mathbf{P})]_{uv}$.
\end{proof}

\begin{corollary}[Paths and trees]
\label{cor:path-tree}
Every antisymmetric edge field on an undirected tree is a gradient.
For the nonnegative-coefficient AW filters with $0\leq z<1$, every nonzero
entry has phase $\mathrm{e}^{\mathrm{i}\omega(s_j-s_i)}$. In particular, on a path supplied
with $a_{i,i+1}=1$ and hence $s_i=i$, every nonzero coefficient of the truncated or
infinite AW filter has phase $\mathrm{e}^{\mathrm{i}\omega(j-i)}$, exactly the relative phase
of sequence RoPE.
\end{corollary}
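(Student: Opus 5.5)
The plan has three parts, matching the three claims of Corollary~\ref{cor:path-tree}: construct a potential on a tree, then apply Theorem~\ref{thm:gradient-factorization}, then specialize to the path.

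\textbf{Potential on a tree.} Let $a$ be an antisymmetric field on an undirected tree; it suffices to treat each connected component separately. In each component I fix a root $\rho$ and set $s_\rho=0$. For any other node $v$, let $(\rho=v_0,v_1,\ldots,v_m=v)$ be the unique simple path from the root, and define $s_v=\sum_{j=0}^{m-1}a_{v_jv_{j+1}}$.

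I then check that $a_{uv}=s_v-s_u$ on every edge. If $\{u,v\}$ is a tree edge, one endpoint is the parent of the other. If $u$ is the parent of $v$, the root path to $v$ extends the root path to $u$ by the single edge $u\to v$, so $s_v=s_u+a_{uv}$. If instead $v$ is the parent of $u$, the same reasoning gives $s_u=s_v+a_{vu}$, and antisymmetry $a_{vu}=-a_{uv}$ again yields $s_v-s_u=a_{uv}$.

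This is the only step that needs care, and uniqueness of tree paths is exactly what makes $s$ well defined. Any cycle would produce two competing definitions of $s_v$, which is precisely the circulation obstruction of Proposition~\ref{prop:nodewise-obstruction}.

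\textbf{Phase of AW filters.} With $a_{uv}=s_v-s_u$, Theorem~\ref{thm:gradient-factorization} gives $[f(\mathbf T_\omega)]_{uv}=\mathrm e^{\mathrm i\omega(s_v-s_u)}[f(\mathbf P)]_{uv}$ for admissible $f$. I apply this to two choices of $f$:
\begin{itemize}
\item the truncated filter $f(x)=\sum_{k\le L}z^kx^k$, which is a polynomial and so always admissible;
\item the resolvent $f(x)=(1-zx)^{-1}$, whose power series has radius $1/z>1\ge\|\mathbf P\|_\infty$ because $0\le z<1$.
\end{itemize}
In both cases the coefficients are nonnegative and $\mathbf P$ is entrywise nonnegative, so $[f(\mathbf P)]_{uv}\ge0$. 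Every nonzero entry of the filter is therefore a positive real multiple of $\mathrm e^{\mathrm i\omega(s_j-s_i)}$, which shows its phase is exactly $\mathrm e^{\mathrm i\omega(s_j-s_i)}$.

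\textbf{The path.} A path is a tree, and I root it at node $1$ (or $0$, depending on indexing) with $s$ equal to that index. The field $a_{i,i+1}=1$, extended antisymmetrically by $a_{i+1,i}=-1$, then gives $s_i=i$ up to an additive constant, which cancels in $s_j-s_i$. Substituting into the previous part yields phase $\mathrm e^{\mathrm i\omega(j-i)}$ for every nonzero coefficient. This is the sequence RoPE factor in Eq.~\eqref{eq:prelim-relative} with $\phi_i=\omega i$.
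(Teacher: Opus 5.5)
Your proposal is correct and follows essentially the same route as the paper's proof. You root the tree, define $s_v$ by summing $a$ along the unique root path, and check $a_{uv}=s_v-s_u$ on each edge via the parent--child orientation and antisymmetry. You then apply Theorem~\ref{thm:gradient-factorization} to the truncated and resolvent filters, whose entries $[f(\mathbf P)]_{ij}$ are real and nonnegative, and specialize to the unit-oriented path with $s_i=i$ up to an additive constant. Your explicit check that the resolvent series has radius $1/z>1\ge\|\mathbf P\|_\infty$, and your componentwise treatment of forests, are small details the paper leaves implicit.
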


\label{app:proof-path-tree}

\begin{proof}[Proof of Corollary~\ref{cor:path-tree}]
Choose a root $r$ and set $s_r=0$. For each node $v$, let
$r=v_0,\ldots,v_m=v$ be the unique tree path and define
\begin{equation}
    s_v=\sum_{j=0}^{m-1}a_{v_jv_{j+1}}.
\end{equation}
For an oriented tree edge $u\to v$ away from the root,
$s_v=s_u+a_{uv}$, hence $a_{uv}=s_v-s_u$. Antisymmetry gives the same
identity in the reverse orientation. Theorem~\ref{thm:gradient-factorization}
therefore applies. For
$f(\mathbf{P})=\sum_{k=0}^{L}z^k\mathbf{P}^k$ or $f(\mathbf{P})=(\mathbf{I}-z\mathbf{P})^{-1}$,
$[f(\mathbf{P})]_{ij}$ is real and nonnegative. Whenever it is nonzero, the only
complex phase is $\mathrm{e}^{\mathrm{i}\omega(s_j-s_i)}$. Only for the supplied unit-oriented
path, where $s_i=i$ up to an additive constant, does this specialize to
$\mathrm{e}^{\mathrm{i}\omega(j-i)}$.
\end{proof}

\begin{remark}[Relative phase and feature mixing]
The corollary recovers the \emph{relative rotary character} of every transport
coefficient. AW-RoPE is not identical to a node-wise sequence RoPE layer:
$f(\mathbf{P})$ also mixes feature magnitudes along walks.
\end{remark}

\subsection{Gauge Covariance and Cycle Circulation}
\label{sec:gauge-holonomy}

\begin{proposition}[Gauge covariance]
\label{prop:gauge-covariance}
Let $f$ be a polynomial or an absolutely norm-convergent power series
as in Theorem~\ref{thm:gradient-factorization}.
For arbitrary real node phases $\phi_u$, define
\begin{equation}
    a'_{uv}=a_{uv}+\phi_v-\phi_u,
    \qquad
    \mathbf{D}_\omega=\operatorname{diag}(\mathrm{e}^{\mathrm{i}\omega\phi_u}).
    \label{eq:gauge-transform}
\end{equation}
Then
\begin{equation}
    \mathbf{T}'_\omega=\mathbf{D}_\omega^\ast \mathbf{T}_\omega \mathbf{D}_\omega,
    \qquad
    f(\mathbf{T}'_\omega)=\mathbf{D}_\omega^\ast f(\mathbf{T}_\omega)\mathbf{D}_\omega.     \label{eq:gauge-covariance}
\end{equation}
Therefore $|[f(\mathbf{T}'_\omega)]_{uv}|=|[f(\mathbf{T}_\omega)]_{uv}|$.
\end{proposition}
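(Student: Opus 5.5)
The argument copies the proof of Theorem~\ref{thm:gradient-factorization}, with $\mathbf{T}_\omega$ in the role of $\mathbf P$ and $\mathbf D_\omega$ in the role of $\mathbf G_\omega$. The gradient-field theorem is the special case $a\equiv0$, $\phi=s$.

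\textbf{Entrywise identity.} Since $\mathbf D_\omega$ is diagonal with unit-modulus entries,
\[
[\mathbf D_\omega^\ast\mathbf T_\omega\mathbf D_\omega]_{uv}
=\mathrm e^{-\mathrm i\omega\phi_u}\,\mathbf P_{uv}\mathrm e^{\mathrm i\omega a_{uv}}\,\mathrm e^{\mathrm i\omega\phi_v}
=\mathbf P_{uv}\mathrm e^{\mathrm i\omega a'_{uv}}
=[\mathbf T'_\omega]_{uv}.
\]
The transition $\mathbf P$ is unchanged because only the field is modified. This gives the first identity in Eq.~\eqref{eq:gauge-covariance}. The shifted field stays antisymmetric, since $a'_{vu}=-a_{uv}+\phi_u-\phi_v=-a'_{uv}$; this is not needed for the algebra but keeps $\mathbf T'_\omega$ in the admissible class.

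\textbf{Powers.} Using $\mathbf D_\omega\mathbf D_\omega^\ast=\mathbf I$, induction on $k$ gives $(\mathbf T'_\omega)^k=\mathbf D_\omega^\ast\mathbf T_\omega^k\mathbf D_\omega$ for every $k\ge0$, because each inner $\mathbf D_\omega\mathbf D_\omega^\ast$ cancels.

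\textbf{Series.} For a polynomial $f$, linearity gives the second identity at once. For a power series, I need absolute norm convergence of both $\sum c_k\mathbf T_\omega^k$ and $\sum c_k(\mathbf T'_\omega)^k$. This is the only point needing care. I would use $\|\mathbf T_\omega\|_\infty,\|\mathbf T'_\omega\|_\infty\le\|\mathbf P\|_\infty$ from Proposition~\ref{prop:neumann}, so radius greater than $\|\mathbf P\|_\infty$ suffices for both series. Alternatively, conjugation by the unitary diagonal $\mathbf D_\omega$ preserves absolute row sums, so $\|(\mathbf T'_\omega)^k\|_\infty=\|\mathbf T_\omega^k\|_\infty$, and absolute convergence of one series implies it for the other. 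Multiplication by the fixed matrices $\mathbf D_\omega^\ast$ and $\mathbf D_\omega$ is continuous, so it passes through the limit of partial sums. This yields $f(\mathbf T'_\omega)=\mathbf D_\omega^\ast f(\mathbf T_\omega)\mathbf D_\omega$; for the AW filters, $f(x)=\sum_{k\le L}z^kx^k$ or $(1-zx)^{-1}$ with $z<1$.

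\textbf{Moduli.} Reading off entry $(u,v)$ gives $[f(\mathbf T'_\omega)]_{uv}=\mathrm e^{\mathrm i\omega(\phi_v-\phi_u)}[f(\mathbf T_\omega)]_{uv}$. Taking absolute values removes the unit-modulus factor, which proves $|[f(\mathbf T'_\omega)]_{uv}|=|[f(\mathbf T_\omega)]_{uv}|$.
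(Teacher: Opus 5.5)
Your proposal is correct and follows the paper's proof essentially step for step: the entrywise conjugation identity, the induction $(\mathbf T'_\omega)^k=\mathbf D_\omega^\ast\mathbf T_\omega^k\mathbf D_\omega$ via $\mathbf D_\omega\mathbf D_\omega^\ast=\mathbf I$, passage to $f$, and removal of the unit-modulus factor $\mathrm e^{\mathrm i\omega(\phi_v-\phi_u)}$. Your extra remark that conjugation by the unitary diagonal preserves absolute row sums, so the series for $f(\mathbf T_\omega)$ and $f(\mathbf T'_\omega)$ converge absolutely together, makes explicit a convergence point the paper leaves implicit.
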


\label{app:proof-gauge-covariance}

\begin{proof}[Proof of Proposition~\ref{prop:gauge-covariance}]
Entrywise,
\begin{align}
    [\mathbf{T}'_\omega]_{uv}
    &=\mathbf{P}_{uv}\mathrm{e}^{\mathrm{i}\omega(a_{uv}+\phi_v-\phi_u)}\\
    &=e^{-i\omega\phi_u}[\mathbf{T}_\omega]_{uv}\mathrm{e}^{\mathrm{i}\omega\phi_v}\\
    &=[\mathbf{D}_\omega^\ast \mathbf{T}_\omega \mathbf{D}_\omega]_{uv}.
\end{align}
Because $\mathbf{D}_\omega \mathbf{D}_\omega^\ast=\mathbf{I}$, the same cancellation as in
Appendix~\ref{app:proof-gradient-factorization} gives
$(\mathbf{T}'_\omega)^k=\mathbf{D}_\omega^\ast \mathbf{T}_\omega^k\mathbf{D}_\omega$ and hence
$f(\mathbf{T}'_\omega)=\mathbf{D}_\omega^\ast f(\mathbf{T}_\omega)\mathbf{D}_\omega$. Entry $(u,v)$ is
multiplied only by $\mathrm{e}^{\mathrm{i}\omega(\phi_v-\phi_u)}$, whose modulus is one.
\end{proof}

Covariant feature transport uses $\mathbf X'=\mathbf{D}_\omega^*\mathbf X$,
so $f(\mathbf T'_\omega)\mathbf X'=\mathbf{D}_\omega^*f(\mathbf T_\omega)\mathbf X$.
This identity concerns the supplied field transformation; a learned scorer
need not respect that transformation of its input features.

For a directed cycle $C=(v_0,v_1,\ldots,v_m=v_0)$, define its circulation
\begin{equation}
    \Phi_C=\sum_{j=0}^{m-1}a_{v_jv_{j+1}}.              \label{eq:cycle-circulation}
\end{equation}

\begin{proposition}[Flatness criterion]
\label{prop:flatness}
On a connected undirected graph, a real antisymmetric field is a gradient if and only if
$\Phi_C=0$ for every cycle $C$.
\end{proposition}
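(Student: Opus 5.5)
The plan is to prove the two directions separately, with the reverse direction carried by an explicit potential built along paths from a root, as in Corollary~\ref{cor:path-tree}. Throughout, a cycle means a closed walk $C=(v_0,\ldots,v_m=v_0)$ along positive-weight edges. Each traversed edge is taken in either orientation, and $\Phi_C$ is computed from Eq.~\eqref{eq:cycle-circulation}, with antisymmetry $a_{vu}=-a_{uv}$ used whenever an edge is traversed backwards.

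For the forward direction, assume $a_{uv}=s_v-s_u$ as in Eq.~\eqref{eq:gradient-field}. Then the circulation telescopes, exactly as in Eq.~\eqref{eq:prelim-telescope}:
\[
\Phi_C=\sum_{j=0}^{m-1}\bigl(s_{v_{j+1}}-s_{v_j}\bigr)=s_{v_m}-s_{v_0}=0 .
\]
This direction is immediate.

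For the reverse direction, assume $\Phi_C=0$ for every cycle. Fix a root $r$ and set $s_r=0$. For any node $v$, connectivity supplies a walk $p=(r=v_0,\ldots,v_k=v)$, and we define $s_v=A(p)=\sum_j a_{v_jv_{j+1}}$. The main step is showing that this value does not depend on the choice of $p$. Take two walks $p,p'$ from $r$ to $v$, and form the closed walk $C=p\cdot\overline{p'}$, where $\overline{p'}$ is $p'$ traversed in reverse. By antisymmetry, $A(\overline{p'})=-A(p')$, so
\[
0=\Phi_C=A(p)-A(p'),
\]
and $s_v$ is well defined.

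The point I expect to be the main obstacle is the quantifier over cycles. If ``every cycle'' means only simple cycles, I would first reduce closed walks to them. I would use induction on the length of the closed walk. If some vertex repeats, the walk splits at that repetition into two shorter closed walks whose circulations add. A backtracking step $u\to v\to u$ contributes $a_{uv}+a_{vu}=0$ and can be deleted. A closed walk with no repeated vertex is either a simple cycle or a single back-and-forth edge. Either way, every closed-walk circulation is a sum of simple-cycle circulations and therefore vanishes.

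Finally, for any edge $(u,v)$, let $p$ be a walk from $r$ to $u$ and extend it by the step $u\to v$ to obtain a walk to $v$. Path independence then gives $s_v=s_u+a_{uv}$, that is, $a_{uv}=s_v-s_u$. So the field is a gradient, and the potential is unique up to an additive constant on the connected graph.
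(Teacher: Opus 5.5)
Your proof is correct and follows the paper's argument: telescoping gives the forward direction, and for the converse you define a root-based potential, show it is path-independent via the closed walk $p\cdot\overline{p'}$, and then extend a root-to-$u$ path by the edge $u\to v$ to get $a_{uv}=s_v-s_u$. Your induction reducing closed walks to simple cycles, with backtracking steps cancelling by antisymmetry, spells out the decomposition step that the paper states in one line.
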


\label{app:proof-flatness}

\begin{proof}[Proof of Proposition~\ref{prop:flatness}]
If $a_{uv}=s_v-s_u$, the cycle sum telescopes:
\begin{align}
    \Phi_C
    &=(s_{v_1}-s_{v_0})+(s_{v_2}-s_{v_1})+\cdots
      +(s_{v_0}-s_{v_{m-1}})\\
    &=0.
\end{align}
Conversely, fix a root $r$. For any node $v$, choose a path
$p:r\leadsto v$ and define $s_v=A(p)$. If $p$ and $q$ are two such paths,
following $p$ and then the reverse of $q$ forms a closed walk. It decomposes
into cycles with zero circulation, so $A(p)-A(q)=0$ and $s_v$ is
path-independent. Extending a root-to-$u$ path by $u\to v$ gives
$s_v=s_u+a_{uv}$, hence $a_{uv}=s_v-s_u$.
\end{proof}

The primitive rotary transport around a cycle is
\begin{equation}
    \prod_{(u,v)\in C}\mathrm{e}^{\mathrm{i}\omega a_{uv}}
    =\mathrm{e}^{\mathrm{i}\omega\Phi_C}.                                \label{eq:aw-holonomy}
\end{equation}
This quantity is invariant under Eq.~\eqref{eq:gauge-transform}, because the
added node potentials telescope. It is the part of the learned geometry that
cannot be removed by changing node phases.
At a single frequency, nonzero real circulation is observable in the rotary
factor only when $\omega\Phi_C\notin2\pi\mathbb{Z}$; real flatness and
frequency-specific phase flatness must not be conflated.

\subsection{Relation to magnetic operators and sheaf transport}
\label{sec:connection-relation}
The one-step matrix $\mathbf{T}_\theta$ is a learnable analogue of a
unitary connection: its entries carry $\mathrm{U}(1)$ phases
$\mathrm{e}^{\mathrm{i}\omega a_{uv}}$ on a row-stochastic base, and
$\mathbf{F}_\infty=(\mathbf I-\mathbf B)^{-1}$ is the resolvent of the
damped connection. A magnetic Laplacian fixes these phases from a
potential (edge direction times a charge, or spectral coordinates) and
builds a Hermitian operator whose spectrum is then filtered
\citep{zhang2021magnet}. AW-RoPE instead generates each phase from the
two endpoint features through a shared bounded scorer, which preserves
permutation equivariance, permits the zero, gradient and unrestricted
field families of Appendix~\ref{app:field-control-results}, and keeps
every phase a differentiable function of the current hidden state.
Neural sheaf diffusion learns restriction maps between node and edge
feature spaces \citep{bodnar2022sheaf}: its transport mixes learned
matrix-valued features rather than rotating rotary channel pairs, at a
per-edge parameter cost that grows with the feature dimension, whereas
AW-RoPE carries one shared scalar displacement per edge with per-channel
frequencies. Learned restriction maps can in principle express the same
feature-dependent nonlocal transport, so we do not claim uniqueness
among learned transports. Neither family pairs the complete operator
with a truncated evaluation that carries the forward and derivative
bounds of Appendix~\ref{app:exact-aw-training}.

\paragraph{Two-route interference.}
Suppose two paths (or two groups whose members each share a phase) from $u$ to $v$ contribute nonnegative magnitudes
$c_1,c_2$ and accumulated angles $\theta_1,\theta_2$. Their combined term is
\begin{equation}
    A_{uv}=c_1\mathrm{e}^{\mathrm{i}\theta_1}+c_2\mathrm{e}^{\mathrm{i}\theta_2},
    \qquad
    |A_{uv}|^2=c_1^2+c_2^2+2c_1c_2\cos(\theta_1-\theta_2).
    \label{eq:two-path-interference}
\end{equation}
Equal phases maximize the magnitude; a phase gap of $\pi$ makes the two terms
cancel when $c_1=c_2$. For the two paths, the gap is $\omega\Phi_C$ modulo
$2\pi$, where the closed route takes one path forward and the other backward.

\label{app:proof-interference}

The squared magnitude in Eq.~\eqref{eq:two-path-interference} expands as
\begin{align}
    |A_{uv}|^2
    &=(c_1\mathrm{e}^{\mathrm{i}\theta_1}+c_2\mathrm{e}^{\mathrm{i}\theta_2})
      (c_1e^{-i\theta_1}+c_2e^{-i\theta_2})\\
    &=c_1^2+c_2^2+c_1c_2\mathrm{e}^{\mathrm{i}(\theta_1-\theta_2)}
      +c_1c_2e^{-i(\theta_1-\theta_2)}\\
    &=c_1^2+c_2^2+2c_1c_2\cos(\theta_1-\theta_2),
\end{align}
where the last step uses $\mathrm{e}^{\mathrm{i}x}+e^{-ix}=2\cos x$. Traversing the first route
and the reverse of the second produces a closed route with circulation
$\Phi_C=A(p_1)-A(p_2)$, so $\theta_1-\theta_2=\omega\Phi_C$ modulo
$2\pi$. This formulation also covers $\omega=0$ without division by zero.

\subsection{Random-Walk and Laplacian Views of AW-RoPE}
\label{sec:operator-laplacian-view}

The random-walk and Laplacian descriptions of AW-RoPE are two exactly
equivalent views of the same online operator. Define the ordinary
random-walk Laplacian and its phase-lifted counterpart by
\begin{equation}
    L_{\mathrm{rw}}=\mathbf{I}-\mathbf{P},
    \qquad
    \mathbf{L}_{\omega,a}=\mathbf{I}-\mathbf{T}_{\omega,a},
    \qquad
    [\mathbf{T}_{\omega,a}]_{uv}=\mathbf{P}_{uv}\mathrm{e}^{\mathrm{i}\omega a_{uv}}.
    \label{eq:phase-laplacian}
\end{equation}
Substituting $\mathbf{T}_{\omega,a}=\mathbf{I}-\mathbf{L}_{\omega,a}$ into the truncated analytic
filter gives the identity
\begin{align}
    \mathbf{F}_{z,\omega}^{(L)}(\mathbf{T}_{\omega,a})\mathbf{X}
    &=\sum_{k=0}^{L}z^k\mathbf{T}_{\omega,a}^{k}\mathbf{X} \\
    &=\sum_{k=0}^{L}z^k(\mathbf{I}-\mathbf{L}_{\omega,a})^{k}\mathbf{X}.  \label{eq:laplacian-aw-filter}
\end{align}
Hence the implementation may be read either as phase-weighted random-walk
propagation or as a polynomial filter of a connection random-walk Laplacian.
For the zero field, $a_{uv}=0$, Eq.~\eqref{eq:phase-laplacian} reduces to
$\mathbf{L}_{\omega,0}=L_{\mathrm{rw}}$. For a gradient field
$a_{uv}=s_v-s_u$, letting
$\mathbf{G}_\omega=\operatorname{diag}(\mathrm{e}^{\mathrm{i}\omega s_1},\ldots,\mathrm{e}^{\mathrm{i}\omega s_N})$
gives, entry by entry,
\begin{equation}
    \mathbf{T}_{\omega,a}=\mathbf{G}_\omega^*\mathbf{P}\mathbf{G}_\omega,
    \qquad
    \mathbf{L}_{\omega,a}=\mathbf{G}_\omega^*L_{\mathrm{rw}}\mathbf{G}_\omega.
    \label{eq:gauge-laplacian-similarity}
\end{equation}
This recovers the familiar node-factorized case while making clear what
changes when the edge field has nonzero circulation.

\label{app:proof-laplacian-view}

From Eq.~\eqref{eq:phase-laplacian},
$\mathbf{I}-\mathbf{L}_{\omega,a}=\mathbf{T}_{\omega,a}$. Consequently, for each nonnegative integer
$k$,
\begin{equation}
    \mathbf{T}_{\omega,a}^k=(\mathbf{I}-\mathbf{L}_{\omega,a})^k,
\end{equation}
and multiplication by $z^k$, application to $\mathbf{X}$, and summation from zero to
$L$ gives Eq.~\eqref{eq:laplacian-aw-filter}. In the gradient case,
$\mathbf{T}_{\omega,a}=\mathbf{G}_\omega^*\mathbf{P}\mathbf{G}_\omega$, so
\begin{align}
    \mathbf{L}_{\omega,a}
    &=\mathbf{I}-\mathbf{G}_\omega^*\mathbf{P}\mathbf{G}_\omega\\
    &=\mathbf{G}_\omega^*\mathbf{I}\mathbf{G}_\omega-\mathbf{G}_\omega^*\mathbf{P}\mathbf{G}_\omega\\
    &=\mathbf{G}_\omega^*(\mathbf{I}-\mathbf{P})\mathbf{G}_\omega\\
    &=\mathbf{G}_\omega^*L_{\mathrm{rw}}\mathbf{G}_\omega,
\end{align}
where the second line uses $\mathbf{G}_\omega^*\mathbf{G}_\omega=\mathbf{I}$. Thus the walk and
Laplacian forms are not two approximations; they are the same operator
written in transition and Laplacian coordinates.

Laplacian eigenpairs used by LapPE or WIRE can be cached as backbone
features. AW instead applies the learned connection operator to current
features; its field is recomputed once per layer call, and its $L$ sparse
steps are included in online cost.

\subsection{Non-Backtracking and Multiscale Variants}
\label{sec:aw-variants}

Ordinary powers include immediate reversals $u\to v\to u$. We retain the
row-action convention $(\mathbf{T}\mathbf{X})_u=\sum_v\mathbf{T}_{uv}\mathbf{X}_v$: the edge index $u\to v$
specifies a walk leaving $u$, while its feature contribution is gathered
from $v$ into row $u$. Define
\begin{align}
    m_{u\to v}^{(1)}&=\mathbf{P}_{uv}\mathbf{R}(\omega a_{uv})\mathbf{X}_v,\\
    \mathbf{S}_u^{(k)}&=\sum_{v:(u,v)\in E}m_{u\to v}^{(k)}.
\end{align}
The non-backtracking update, in exactly this row convention, is
\begin{equation}
    m_{u\to v}^{(k+1)}
    =\mathbf{P}_{uv}\mathbf{R}(\omega a_{uv})
      \left(\mathbf{S}_v^{(k)}-m_{v\to u}^{(k)}\right).
    \label{eq:nonbacktracking-recurrence}
\end{equation}
The subtracted term removes continuations whose first step immediately
returns from $v$ to $u$; it is zero when no reverse edge exists. The output
is $\mathbf{X}+\sum_{k=1}^Lz^k\mathbf{S}^{(k)}$. This matches the implementation's target
gather, source-row sum, and reverse-edge subtraction; it is not an
incoming-message action of $\mathbf{T}^\top$. This implementation never forms the
$M\times M$ non-backtracking matrix.

A multiscale variant mixes $J$ decay factors $0\le z_j<1$ with learned
softmax weights $\beta_j\ge0$, $\sum_{j=1}^J\beta_j=1$. Write $\mathsf{W}_0\mathbf{X}=\mathbf{X}$;
$\mathsf{W}_k\mathbf{X}=\mathbf{T}^k\mathbf{X}$ for ordinary walks, and $\mathsf{W}_k\mathbf{X}=\mathbf{S}^{(k)}$ for
the non-backtracking recurrence above. Then
\begin{align}
    \mathbf{F}_{\mathrm{MS}}^{(L)}\mathbf{X}
    &=\sum_{j=1}^{J}\beta_j
      \sum_{k=0}^{L}z_j^k\mathsf{W}_k\mathbf{X}\\
    &=\sum_{k=0}^{L}
      \left(\sum_{j=1}^{J}\beta_jz_j^k\right)\mathsf{W}_k\mathbf{X}.
    \label{eq:multiscale-filter}
\end{align}
The second line shows that all scales share the same propagated states
$\mathsf{W}_k\mathbf{X}$; only their scalar coefficients differ. In particular,
the evaluated AW-NB-MS uses non-backtracking states, not ordinary powers of $\mathbf{T}$.

\subsection{Complexity}
\label{sec:complexity}

\begin{proposition}[\emph{Sparse} cost]
\label{prop:complexity}
Given the edge field and transition weights, width $d$ and $L\geq1$ steps
require $\mathcal{O}(L(N+M)d)$ arithmetic and $\mathcal{O}((N+M)d)$ forward
working memory. When $N=O(M)$, the arithmetic simplifies to
$\mathcal{O}(LMd)$. The bound is for the AW transport itself, requires no
spectral decomposition, and excludes the edge scorer and subsequent attention
kernel. It describes streaming forward evaluation, not peak training memory:
ordinary reverse-mode autodiff can retain $L$ steps of intermediate states.
\end{proposition}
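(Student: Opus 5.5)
The plan is a direct operation count on the recurrence in Eqs.~\eqref{eq:sparse-step}--\eqref{eq:sparse-accumulate}, with the edge field $a_{uv}$, the transitions $\mathbf P_{uv}$ and the frequencies $\omega_\ell$ taken as given inputs. The graph is stored in edge-list (COO/CSR) form with $M$ directed entries. No dense $N\times N$ matrix and no spectral decomposition is ever formed, because each step only touches stored edges.

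\textbf{Per-step arithmetic.} Consider one step $k\to k+1$ and one channel pair $\ell$.
\begin{itemize}
 \item The rotation angles $\omega_\ell a_{uv}$, together with their cosines and sines, cost $\mathcal O(1)$ per edge and channel pair, so $\mathcal O(Md)$ in total. They can be computed once and reused across all $L$ steps, or recomputed each step at the same cost.
 \item The gather forms, for each edge $(u,v)$, the rotated and scaled message $\mathbf P_{uv}\mathbf R(\omega_\ell a_{uv})\mathbf X^{(k)}_{v\ell}$. This is a $2\times2$ rotation times a scalar, so $\mathcal O(1)$ per edge and channel pair, and $\mathcal O(Md)$ overall.
 \item The scatter-add of messages into rows $u$ costs one addition per message, $\mathcal O(Md)$, plus $\mathcal O(Nd)$ to zero-initialize the new state. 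Isolated rows simply stay zero.
 \item The accumulation $\mathbf Y^{(k+1)}=\mathbf Y^{(k)}+z^{k+1}\mathbf X^{(k+1)}$ costs $\mathcal O(Nd)$. The scalar $z^{k+1}$ is updated by one multiplication per step.
\end{itemize}
A single step therefore costs $\mathcal O((N+M)d)$. Summing over $L\ge1$ steps and adding the $\mathcal O(Nd)$ initialization gives $\mathcal O(L(N+M)d)$. When $N=\mathcal O(M)$, we have $N+M=\mathcal O(M)$, so the count simplifies to $\mathcal O(LMd)$.

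\textbf{Memory.} For streaming forward evaluation we keep only:
\begin{itemize}
 \item the current state $\mathbf X^{(k)}$, the next state $\mathbf X^{(k+1)}$ and the accumulator $\mathbf Y$, each of size $\mathcal O(Nd)$;
 \item the edge index, weights and field, of size $\mathcal O(M)$;
 \item optionally the per-edge messages or cached trigonometric values, of size $\mathcal O(Md)$.
\end{itemize}
Once $\mathbf X^{(k+1)}$ is formed, $\mathbf X^{(k)}$ is discarded. The working set is thus $\mathcal O((N+M)d)$, independent of $L$. Correctness of the recurrence, namely that the returned quantity equals $\mathbf F^{(L)}_{z,\omega}\mathbf X$, is already established by the induction $\mathbf X^{(k)}=\mathbf T_\omega^k\mathbf X$ stated after Eq.~\eqref{eq:sparse-accumulate}, so the count concerns the correct operator.

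\textbf{Scope of the statement.} The exclusions are handled by noting what the count does and does not cover.
\begin{itemize}
 \item The scorer $s_\theta$ and the attention kernel are not part of the recurrence, so they are outside the count.
 \item Reverse-mode autodiff must in general store each $\mathbf X^{(k)}$, or recompute it, for the backward pass. That gives $\mathcal O(L(N+M)d)$ peak training memory, which is why the claim is restricted to forward streaming memory.
\end{itemize}

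The only delicate point is making sure no hidden dense operation appears. In particular, row normalization of $\mathbf P$ must be computed by a segment sum over edges at cost $\mathcal O(M)$, not by materializing a dense matrix. I expect this bookkeeping, rather than any inequality, to be the main thing to state carefully.
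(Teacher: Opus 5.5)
Your proposal is correct and follows essentially the same route as the paper: it counts $\mathcal O(Md)$ edge work per step (gather, constant-size rotation, scaling by $\mathbf P_{uv}$, scatter-add) plus $\mathcal O(Nd)$ node initialization and accumulation, repeats this $L\ge1$ times, and bounds the streaming working set by the current and accumulated node states plus cached edge data, giving $\mathcal O((N+M)d)$. Your remark about computing the row normalization by a segment sum is harmless but lies outside the statement, which takes the transition weights as given.
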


\label{app:proof-complexity}

\begin{proof}[Proof of Proposition~\ref{prop:complexity}]
At one step, Eq.~\eqref{eq:sparse-step} reads one endpoint feature, applies a
constant-size rotation to each channel pair, multiplies by $\mathbf{P}_{uv}$, and
accumulates once for every directed edge. Edge work costs $\mathcal{O}(Md)$;
initializing and adding node states costs $\mathcal{O}(Nd)$, including on
edgeless graphs. Repeating it $L\geq1$ times costs
$\mathcal{O}(L(N+M)d)$, or $\mathcal{O}(LMd)$ when $N=O(M)$.
The forward recurrence stores current
and accumulated node states in $\mathcal{O}(Nd)$ memory, together with edge
indices, weights, and cached edge rotations in $\mathcal{O}(Md)$ memory. It
never stores $\mathbf{T}^k$ or a dense $N\times N$ positional matrix. Non-backtracking
adds edge states of the same $\mathcal{O}(Md)$ order. This storage count does
not include learned-field construction, attention, or a reverse-mode tape;
without recomputation/checkpointing, autodiff may retain intermediate node
and edge states from all $L$ steps.
\end{proof}

Full attention still costs $\mathcal{O}(N^2d)$, whereas Performer supplies its
own near-linear attention approximation \citep{choromanski2021performer}.
AW-RoPE adds the same online sparse transport to either. Query and key tensors
are concatenated only to share kernels and then split; this is a constant
factor optimization, not a mathematical coupling. We account for these
per-layer costs separately from one-time graph preprocessing, following the
distinction in Section~\ref{sec:operator-laplacian-view}.

\section{Notation and Implementation Details}
\label{app:notation}
Bold upright symbols denote feature vectors, matrices and linear maps.
Scalar variables remain italic, while named operations and text subscripts
are upright.

The complete resolvent and finite truncation below are the operators
introduced in Section~\ref{sec:core-aw}, with their decay and frequency
dependence displayed explicitly. All operator applications to real feature
matrices are interpreted independently on their paired channels.

\begin{table}[ht]
    \centering
    \caption{Notation used in the derivations.}
    \small
    \setlength{\tabcolsep}{4pt}
    \begin{tabular}{ll}
        \toprule
        Symbol & Meaning \\
        \midrule
        $N,M$ & number of nodes and directed edges \\
        $\mathbf{P}$ & row-normalized nonnegative transition \\
        $a_{uv}$ & antisymmetric directed-edge displacement \\
        $\omega$ & rotary frequency of one complex channel \\
        $\mathbf{T}_\omega$ & $\mathbf{P}$ lifted by $\mathrm{e}^{\mathrm{i}\omega a_{uv}}$ \\
        $z$ & geometric walk-decay factor in $[0,1)$ \\
        $L$ & largest evaluated walk length \\
        $\mathbf{F}_{z,\omega}$ & infinite analytic resolvent \\
        $\mathbf{F}^{(L)}_{z,\omega}$ & degree-$L$ truncation \\
        $A(p),\pi(p)$ & displacement and transition weight of walk $p$ \\
        $\Phi_C$ & signed circulation around cycle $C$ \\
        \bottomrule
    \end{tabular}
\end{table}

\subsection{Sparse Forward Algorithm}

The same graph edge list is used at every step. Query and key states are
concatenated along the feature axis so the propagation kernel is launched
once, but they are not mixed.

\begin{methodalgorithm}[tbp]
\caption{\emph{Sparse} AW-RoPE forward recurrence.}
\label{alg:aw-forward}
\small
\begin{algorithmic}[1]
\Require edge list $E$, row weights $\mathbf{P}$, displacement $a$, input $\mathbf{X}=[\mathbf{Q};\mathbf{K}]$,
frequencies $\omega$, decay $z$, depth $L$
\State $\mathbf{S}\gets \mathbf{X}$; $\mathbf{Y}\gets \mathbf{X}$; $c\gets 1$
\For{$k=1,\ldots,L$}
    \State rotate each gathered endpoint pair:
      $m_{uv}\gets \mathbf{P}_{uv}\mathbf{R}(\omega a_{uv})\mathbf{S}_v$
    \State sum into row $u$: $\mathbf{S}_u\gets\sum_{v:(u,v)\in E}m_{uv}$
    \State $c\gets cz$
    \State $\mathbf{Y}\gets \mathbf{Y}+c\mathbf{S}$
\EndFor
\State split $\mathbf{Y}$ back into $\widetilde{\mathbf{Q}}$ and $\widetilde{\mathbf{K}}$
\end{algorithmic}
\end{methodalgorithm}

\subsection{Optional Finite-Sum Normalization}

When desired, the implementation divides the finite filter by its scalar mass
\begin{equation}
    C_L(z)=\sum_{k=0}^{L}z^k=\frac{1-z^{L+1}}{1-z}.
\end{equation}
The normalized operator is
\begin{equation}
    \widehat{\mathbf{F}}^{(L)}_{z,\omega}
    =\frac{1}{C_L(z)}\mathbf{F}^{(L)}_{z,\omega}.
\end{equation}
By Eq.~\eqref{eq:feature-bound}, its induced infinity norm is at most one.
This normalization changes the amplitude but not the phase in the gradient
case. The Cluster ablation shows that the bound can improve stability without
necessarily improving task accuracy; the main attention model therefore uses
the unnormalized form unless stated otherwise.

\subsection{Directed-Edge View of Non-Backtracking AW}

Index states by directed edges. The phase-lifted non-backtracking matrix is
\begin{equation}
    \mathbf{B}_{(u\to v),(x\to y)}
    =\mathbf 1[x=v]\mathbf 1[y\neq u]
      \mathbf{P}_{xy}\mathrm{e}^{\mathrm{i}\omega a_{xy}}.
\end{equation}
The first indicator joins consecutive edges; the second forbids immediate
reversal. This matrix acts on column states by gathering the next edge into
the current edge row. To relate it explicitly to the node recurrence, set
$e_{u\to v}^{(0)}=\mathbf{X}_v$ and
$m_{u\to v}^{(k)}=\mathbf{P}_{uv}\mathbf{R}(\omega a_{uv})e_{u\to v}^{(k-1)}$.
Then
\begin{align}
e_{u\to v}^{(k)}
 &=\mathbf{S}_v^{(k)}-m_{v\to u}^{(k)}\\
 &=\sum_{y\neq u}\mathbf{P}_{vy}\mathbf{R}(\omega a_{vy})e_{v\to y}^{(k-1)}
  =(\mathbf{B}e^{(k-1)})_{u\to v},
\end{align}
where complex phases and real pair rotations are equivalent representations.
Thus Eq.~\eqref{eq:nonbacktracking-recurrence} obtains the same result without
forming $\mathbf{B}$, using a target gather, source-row sum, and reverse-edge
subtraction. The node output is $\mathbf{X}+\sum_{k=1}^Lz^k\mathbf{S}^{(k)}$.

\subsection{Integration with Attention and GIN}
\label{sec:attention-integration}

For each complex channel $\ell$, the attention version computes
\begin{equation}
    \widetilde{\mathbf{q}}^{(\ell)}
    =\mathbf{F}_{z,\omega_\ell}^{(L)}\mathbf{q}^{(\ell)},
    \qquad
    \widetilde{\mathbf{k}}^{(\ell)}
    =\mathbf{F}_{z,\omega_\ell}^{(L)}\mathbf{k}^{(\ell)}.
    \label{eq:aw-transform}
\end{equation}
The transformed tensors enter the usual scaled dot-product kernel or the
chosen Performer feature map. Dataset encoder, value path, attention kernel,
optimizer, and prediction head can therefore remain identical to the WIRE
baseline.

To test the analytic operator outside attention, we also insert it as a
residual GIN branch \citep{xu2019gin}:
\begin{equation}
    \mathbf{H}_{\mathrm{in}}=\mathbf{H}+\gamma\left(
      \mathbf{F}_{z,\omega}^{(L)}\mathbf{H}-\mathbf{H}\right).
    \label{eq:gin-aw-branch}
\end{equation}
The GIN block then returns $\mathbf H'=\operatorname{GINBlock}(\mathbf H_{\mathrm{in}})$,
as in Figure~\ref{fig:aw-pipeline}. The legacy branch uses a sigmoid gate initialized to $\sigma(-2)$. The ReZero
variant \citep{bachlechner2021rezero} uses an unconstrained $\gamma$ initialized
exactly to zero, so the initial network is functionally identical to its
paired NoPE GIN. This GIN study tests the broader transport operator and
complements the Performer/WIRE positional comparison. Unless
otherwise stated, the attention implementation uses the unnormalized finite
sum, $L=8$, $z=0.8$, a learnable $z$, and injection at every attention layer.

\subsection{Linear-attention contraction and batching}
\label{app:linear-attention-interface}
The AW action is evaluated before the attention feature map. Here node
query, key and value vectors are columns; $\mathbf o_u$ denotes the attention
output at node $u$, whose row is written with an explicit transpose.
For one graph, write the mapped query/key features
as $\varphi_Q$ and $\varphi_K$, allowing for the backend's role-specific
FAVOR+ stabilization. The usual factorized contraction is
\begin{equation}
 \mathbf{o}_u^\top=
 \frac{\varphi_Q(\widetilde{\mathbf{q}}_u)^\top
          \sum_v\varphi_K(\widetilde{\mathbf{k}}_v)\mathbf{v}_v^\top}
      {\varphi_Q(\widetilde{\mathbf{q}}_u)^\top
          \sum_v\varphi_K(\widetilde{\mathbf{k}}_v)} .
 \label{eq:core-linear}
\end{equation}
The noncausal backend uses positive feature-map offsets for numerical
stability (ReLU: $10^{-3}$; FAVOR+: $10^{-4}$), rather than adding a
separate constant to this denominator.

In the evaluated Graph-RoPE backend, mini-batches are padded to the largest
graph. Padded keys and values are zeroed before the feature map. Since
$\varphi_K(0)$ need not vanish, padded positions can still contribute to the
denominator while their zero values contribute nothing to the numerator.
Thus the batched implementation uses the same contraction over all padded
token slots, not a strict masked sum over real nodes alone. This inherited
batching convention is shared by the local Graph-RoPE positional variants;
AW transport itself operates only on the original graph nodes.

\subsection{Fixed-field controls}
\label{app:static-flat}
The archived Static and Flat controls use
the following fixed-field protocol. On the simple undirected support,
let $\mathbf{P}$ be its row-normalized adjacency and form
$\mathbf{r}_u=[\log(1+d_u),(\mathbf{P}^2)_{uu},\ldots,(\mathbf{P}^8)_{uu}]$.
Each channel is standardized within the graph using population SD;
channels with SD at most $10^{-8}$ become zero. With
$\mathbf{J}=\operatorname{diag}(\mathbf{J}_2,\ldots,\mathbf{J}_2)$,
$\mathbf{J}_2=\left(\begin{smallmatrix}0&1\\-1&0\end{smallmatrix}\right)$,
the fixed displacement is
\[
 b_{uv}=\mathbf{r}_u^\top \mathbf{J} \mathbf{r}_v,\qquad
 a^{\rm static}_{uv}=\pi\tanh\!\left(\frac{b_{uv}}{s+10^{-8}}\right).
\]
Here $s$ is the median of $|b_{uv}|$ over one orientation per edge,
falling back to RMS and then 1 when degenerate.
Reversed edges negate the field. The field may be zero on symmetric
graphs; nontrivial circulation is not guaranteed.
For Flat, let $\mathbf{C}$ be the oriented incidence matrix and solve
$\psi=\arg\min_\psi\|\mathbf{C}\psi-a^{\rm static}\|_2^2$.
Use $a^{\rm flat}=\mathbf{C}\psi$ without additional clipping. This unweighted
least-squares projection has zero real circulation around every cycle.
Both controls use sparse AW transport; neither is the newly trained
\emph{Exact} arm.

\section{Numerical Checks of Operator Identities}

\label{app:numerical-audit}
\label{app:fixed-numerical-checks}

The deterministic audit fixes the random generator state. A
12-node connected weighted graph is formed from a ring plus random symmetric
chords. Its transition is row-normalized; the edge field is obtained by
antisymmetrizing a random matrix. All calculations use complex128.

\begin{table}[ht]
    \centering
    \caption{Maximum absolute discrepancies in the independent numerical
    identity audit.}
    \small
    \begin{tabular}{ll}
        \toprule
        Identity & Maximum error \\
        \midrule
        Gradient operator factorization & $1.11\times10^{-16}$ \\
        Gradient resolvent factorization & $4.44\times10^{-16}$ \\
        Gauge covariance, operator & $1.31\times10^{-16}$ \\
        Gauge covariance, resolvent & $4.48\times10^{-16}$ \\
        Gauge-invariant magnitude & $4.44\times10^{-16}$ \\
        Permutation equivariance & $1.49\times10^{-15}$ \\
        Two-route expansion & $2.22\times10^{-16}$ \\
        Path resolvent factorization & $5.98\times10^{-16}$ \\
        \bottomrule
    \end{tabular}
\end{table}

These tests validate the implementation of the stated identities, not
the empirical usefulness of the assumptions.

\subsection{Forward and Derivative Truncation}
\begin{figure}[t]
 \centering
 \includegraphics[width=\linewidth]{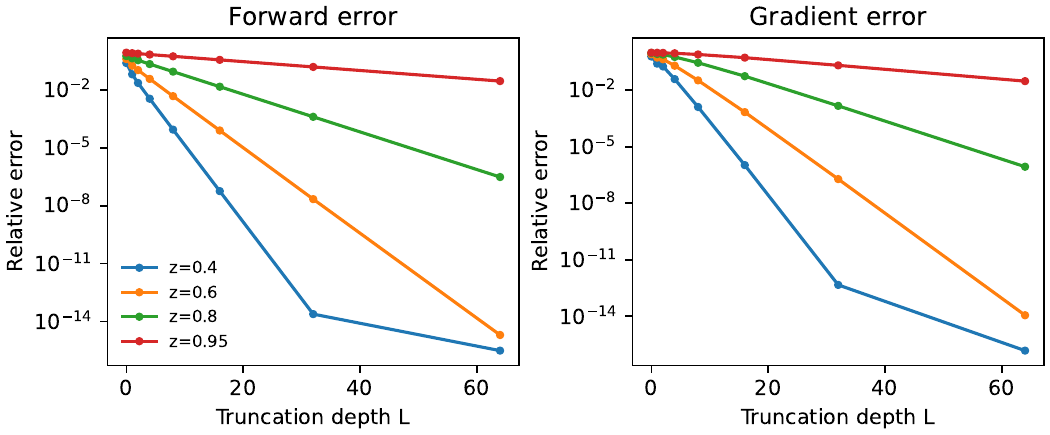}
 \caption{Deterministic \emph{Exact}-AW versus sparse-AW diagnostics on a fixed
 graph, input, and linear readout, in double precision. Left: maximum
 complex-feature error relative to the complete output. Right: relative
 Euclidean error of the readout gradient jointly with respect to inputs,
 edge displacements, frequencies, edge weights, and decay. Each curve
 changes only $L$ at the indicated $z$; 32 configurations including $L=0$
 are audited. These are operator/derivative checks, not trained-model
 comparisons. Near-unit decay requires greater depth; numerical floors
 need not decrease monotonically.}
 \label{fig:exact-aw-forward-gradient}
\end{figure}
The independent bound check at $z=0.65$ also satisfies
Eq.~\eqref{eq:tail-bound}; its full depth sweep is retained with the artifact. Memory and timing
measurements
are consolidated in Appendix~\ref{app:scaling}.

\section{Matched Synthetic Study and Trained-Checkpoint Analysis}
\label{app:exact-aw-experiments}

\paragraph{Matched predictive and efficiency protocol.}
The matched protocol freezes the existing five synthetic tasks' AW learning
rates, initial decay, and truncation depths. Both arms learn the field,
frequencies, and decay with identical full-model initialization and
250-epoch budgets over three matched runs: 15 \emph{Exact}-AW and 15 sparse-AW
trials. Checkpoints are selected on validation and evaluated at the same
test epoch. Ten one-epoch probes precede training. Separate timing repeats
use one trial per GPU, one warm-up and three measured training epochs,
for both arms and all five tasks with three repeats. These measurements
must be distinguished from packed concurrent training throughput.
The separate size sweep is in Appendix~\ref{app:scaling}.

\paragraph{Completed paired results and provenance.}
The 30 formal runs and 30 isolated timing repeats passed the complete
artifact audit. All formal validation/test trajectories contain epochs
0--249; selected and final checkpoints are retained. Each of the 15
dataset--run pairs has identical initialization, parameter count, split
sizes, and layer configuration. The implementation solves a $2N$ real-block system with a cuSOLVER
library preference; small batched kernels may use cuBLAS.
The publication artifact contains the frozen configuration, per-result and
trajectory hashes, all full-precision metrics, and a regenerating script.
Normalized RMSE divides the logged RMSE by 25 for the monochromatic tasks
and by 10 for Watts--Strogatz SPD, consistently with the earlier tables.

\begin{table}[t]
 \centering
 \caption{Every run in the completed \emph{Exact}--\emph{Sparse} study. Test nRMSE
 is paired with the best validation epoch (zero-based); lower is better.
 No run is removed. Three-decimal values are displayed here; the artifact
 retains full precision.}
 \label{tab:exact-sparse-seeds}
 \small
 \begin{tabular}{llllll}
  \toprule
  & & \multicolumn{2}{c}{\emph{Exact} AW-RoPE} & \multicolumn{2}{c}{\emph{Sparse} AW-RoPE} \\
  Dataset & Seed & Test nRMSE & Epoch & Test nRMSE & Epoch \\
  \midrule
  Monochromatic-0 & 0 & 0.040 & 224 & 0.040 & 224 \\
Monochromatic-0 & 1 & 0.040 & 219 & 0.039 & 219 \\
Monochromatic-0 & 2 & 0.051 & 212 & 0.055 & 215 \\
Monochromatic-5 & 0 & 0.035 & 195 & 0.035 & 230 \\
Monochromatic-5 & 1 & 0.034 & 214 & 0.035 & 208 \\
Monochromatic-5 & 2 & 0.039 & 200 & 0.039 & 185 \\
Monochromatic-10 & 0 & 0.027 & 219 & 0.027 & 228 \\
Monochromatic-10 & 1 & 0.027 & 235 & 0.027 & 228 \\
Monochromatic-10 & 2 & 0.027 & 226 & 0.043 & 200 \\
Monochromatic-15 & 0 & 0.023 & 217 & 0.024 & 221 \\
Monochromatic-15 & 1 & 0.030 & 224 & 0.031 & 210 \\
Monochromatic-15 & 2 & 0.026 & 195 & 0.029 & 221 \\
Watts--Strogatz SPD & 0 & 0.016 & 104 & 0.022 & 74 \\
Watts--Strogatz SPD & 1 & 0.017 & 99 & 0.014 & 155 \\
Watts--Strogatz SPD & 2 & 0.014 & 187 & 0.015 & 106 \\
\bottomrule

 \end{tabular}
\end{table}

\paragraph{Measured cost and its scope.}
Table~\ref{tab:exact-sparse-efficiency} averages three independent timing
repeats per arm and task; each repeat averages three measured training
epochs after one warm-up. One trial occupies each allocated H200 during
timing. Epoch time includes training and logging, but excludes validation,
test evaluation, checkpoints, and cold preprocessing. Peak allocated GPU
memory is the maximum over the measured epochs, summarized across repeats.
Both arms contain the same 39,575 trainable parameters and dense-attention
backbone. These are current-implementation small-graph costs, not the packed
formal-run elapsed times. \emph{Exact} is faster on all five tasks, and \emph{Sparse} is
not uniformly more memory-efficient. This does not contradict the sparse
asymptotic bound: batched dense solves can be efficient at small sizes,
while a finite recurrence incurs repeated kernel work. Appendix~\ref{app:scaling} supplies the separate size benchmark and
measured allocation failures. Those layer-level measurements must not be
conflated with the full-model epoch measurements here.

\begin{table}[t]
 \centering
 \caption{Isolated full-model H200 cost for the matched synthetic study.
 Entries are three-decimal mean and sample SD over three timing repeats.
 $L$ is the \emph{Sparse} depth; \emph{Exact} is untruncated. Smaller is better.}
 \label{tab:exact-sparse-efficiency}
 \small
 \setlength{\tabcolsep}{3pt}
 \begin{tabular}{llllll}
  \toprule
  & & \multicolumn{2}{c}{Training epoch (s)} & \multicolumn{2}{c}{Peak allocated (MiB)} \\
  Dataset & $L$ & \emph{Exact} & \emph{Sparse} & \emph{Exact} & \emph{Sparse} \\
  \midrule
  Monochromatic-0 & 8 & $16.659_{\pm 0.253}$ & $22.974_{\pm 0.310}$ & $157.676_{\pm 0.000}$ & $135.950_{\pm 0.000}$ \\
Monochromatic-5 & 16 & $16.572_{\pm 0.075}$ & $34.561_{\pm 0.140}$ & $156.801_{\pm 0.000}$ & $161.184_{\pm 0.000}$ \\
Monochromatic-10 & 16 & $16.615_{\pm 0.190}$ & $34.641_{\pm 0.920}$ & $155.925_{\pm 0.000}$ & $153.277_{\pm 0.000}$ \\
Monochromatic-15 & 8 & $17.273_{\pm 1.044}$ & $22.980_{\pm 0.082}$ & $155.050_{\pm 0.000}$ & $123.479_{\pm 0.000}$ \\
Watts--Strogatz SPD & 8 & $16.932_{\pm 0.229}$ & $23.355_{\pm 0.154}$ & $107.952_{\pm 0.000}$ & $107.347_{\pm 0.000}$ \\
\bottomrule

 \end{tabular}
\end{table}

\subsection{Approximation at trained checkpoints}
\label{app:trained-checkpoint}

We check the operators learned in the matched synthetic study.
We reuse all 30 validation-best checkpoints: five tasks, both training
arms, over three matched runs.  Checkpoint hashes, original commands, validation selection,
and shared initialization are audited before the diagnostic.

For each checkpoint, its original \emph{Exact} or \emph{Sparse} network runs in
evaluation mode on the first four validation examples, in two fixed
batches of two. We capture $H$ and projected $X=[Q;K]$ at all four
insertion layers. A sweep then holds these local inputs, graph, scorer
parameters, frequencies and decay fixed while comparing \emph{Exact} with
$L\in\{0,1,2,4,8,16,32,64\}$. Each layer is checked separately:
changing $L$ does not change the inputs supplied to later-layer checks.
The captured float32 weights and activations are promoted to float64
for this numerical diagnostic.

Output error is the relative Euclidean norm of
$\mathcal F^{(L)}_\theta X-\mathcal F_\theta X$.
For derivatives, three deterministic, unit-norm Gaussian probes $R_j$
define the same local linear readouts $\langle R_j,Y\rangle$ for both
evaluations. We compare their vector--Jacobian products (VJPs) with
respect to independent local $X,H$, the field scorer's parameters,
frequencies, and decay logit. The audit records the joint relative Euclidean
error together with per-group absolute/relative errors.
A zero reference gradient gives an undefined relative error, recorded
as null rather than replaced with zero. These are sampled local
derivatives, not the full Jacobian norm or a downstream task-loss gradient.

The complete depth curves appear in Figure~\ref{fig:trained-checkpoint-main}.

\begin{table}[t]
 \centering
 \caption{Relative output error at trained checkpoints.
 Means and sample SDs across all three run-level context averages,
 with three-decimal mantissas. These are approximation errors, not nRMSE
 or errors between independently trained models.}
 \label{tab:trained-checkpoint-approximation}
 \small
 \resizebox{\linewidth}{!}{%
 \begin{tabular}{llllll}
 \toprule
 Task & Checkpoint origin & $L=8$ & $L=16$ & $L=32$ & $L=64$\\
 \midrule
 M0 & Exact-trained & $(7.244_{\pm 3.230})\times10^{-3}$ & $(1.971_{\pm 1.284})\times10^{-4}$ & $(2.545_{\pm 2.514})\times10^{-7}$ & $(8.153_{\pm 12.055})\times10^{-13}$ \\
M0 & Sparse-trained & $(6.738_{\pm 3.501})\times10^{-3}$ & $(2.095_{\pm 1.904})\times10^{-4}$ & $(4.470_{\pm 6.379})\times10^{-7}$ & $(3.682_{\pm 6.271})\times10^{-12}$ \\
M5 & Exact-trained & $(6.530_{\pm 4.548})\times10^{-2}$ & $(8.044_{\pm 6.941})\times10^{-3}$ & $(1.399_{\pm 1.465})\times10^{-4}$ & $(5.112_{\pm 6.865})\times10^{-8}$ \\
M5 & Sparse-trained & $(5.961_{\pm 4.192})\times10^{-2}$ & $(7.044_{\pm 6.310})\times10^{-3}$ & $(1.247_{\pm 1.407})\times10^{-4}$ & $(6.065_{\pm 8.981})\times10^{-8}$ \\
M10 & Exact-trained & $(1.014_{\pm 0.126})\times10^{-1}$ & $(1.539_{\pm 0.193})\times10^{-2}$ & $(4.049_{\pm 0.313})\times10^{-4}$ & $(3.794_{\pm 1.652})\times10^{-7}$ \\
M10 & Sparse-trained & $(1.075_{\pm 0.081})\times10^{-1}$ & $(1.743_{\pm 0.246})\times10^{-2}$ & $(5.857_{\pm 2.700})\times10^{-4}$ & $(1.281_{\pm 1.553})\times10^{-6}$ \\
M15 & Exact-trained & $(1.214_{\pm 0.209})\times10^{-1}$ & $(2.234_{\pm 0.911})\times10^{-2}$ & $(1.147_{\pm 1.098})\times10^{-3}$ & $(7.851_{\pm 12.238})\times10^{-6}$ \\
M15 & Sparse-trained & $(1.124_{\pm 0.173})\times10^{-1}$ & $(1.970_{\pm 0.418})\times10^{-2}$ & $(8.428_{\pm 2.877})\times10^{-4}$ & $(2.789_{\pm 1.681})\times10^{-6}$ \\
WS & Exact-trained & $(1.502_{\pm 0.146})\times10^{-1}$ & $(2.938_{\pm 0.636})\times10^{-2}$ & $(1.232_{\pm 0.614})\times10^{-3}$ & $(2.838_{\pm 2.956})\times10^{-6}$ \\
WS & Sparse-trained & $(1.460_{\pm 0.195})\times10^{-1}$ & $(2.913_{\pm 0.731})\times10^{-2}$ & $(1.356_{\pm 0.621})\times10^{-3}$ & $(3.952_{\pm 3.052})\times10^{-6}$ \\
\bottomrule

 \end{tabular}}
\end{table}

The audit contains 240 fixed layer/batch contexts, 1,920 output comparisons,
and 5,760 probe-VJP comparisons. All checkpoint origins and depths
are retained. \emph{Exact} linear-system residuals and the forward tail bound
are checked, and checkpoint/module/input hashes verify that the sweep
does not change parameters or inputs. The diagnostic uses four validation examples per task and checkpoint. Learned decay is not reset to its
initial value.

\section{Exact--Sparse Size Scaling and Timing Audit}
\label{app:scaling}

The size experiment measures the production positional-attention layer,
not an entire dataset training run. All methods use float32 on an isolated
H200, hidden width 64 and four heads. \emph{Exact} and \emph{Sparse} share learned
antisymmetric fields, learnable frequencies and decay initialized at
$z=0.8$, identical inputs, and identical layer initial states within
each repeat. The current \emph{Exact} implementation solves the same real-block
linear system used by the completed synthetic study. Its implementation
and cuSOLVER library preference are fixed across sizes; PyTorch's internal
kernel dispatch can change with matrix dimensions.

For the small-graph positional-layer tables, batch size is eight and graph sizes
are 64, 128 and 256. NoPE, WIRE, \emph{Exact} and \emph{Sparse} ($L=16$) are measured
under both ReLU and softmax FAVOR+ attention. WIRE spectral coordinates
are prepared before timing. The size sweep instead uses batch size one,
ReLU attention, nodes in
$\{64,128,256,512,1024,2048,4096,8192,16384\}$, degrees 6 and 18, and
\emph{Sparse} depths $L\in\{8,16,32\}$. The graphs are bidirectional rings with
fixed-offset chords, so degree and node count are controlled separately.
These are synthetic cost probes, not a real-data accuracy benchmark.

Each cell has a fresh process and one GPU. We reset the RNG before model
construction and record hashes of parameters, graph and inputs.
Within a repeat we run three warm-up steps and ten measured steps at
fixed parameters. Each measured step computes a complete layer forward,
a scalar squared-output loss, and backward; it does not update an
optimizer. CUDA synchronization surrounds the forward/backward regions,
and peak allocated memory is reset before the forward. Finite outputs
and parameter/input gradients are checked outside the timed region.
The mean and sample SD summarize the three independent repeats.
Packed training throughput is not used as a timing observation.

The declared grid contains 288 cells: 72 for the small-graph layer table
and 216 for the size sweep. A CUDA allocation failure is recorded as OOM;
a predeclared 900-second cell limit is recorded separately as timeout.
Neither is converted into an arbitrary runtime or an accuracy score.
All cells and statuses are retained. \emph{Sparse} transport error relative
to \emph{Exact} at the same parameters is also measured outside the timed
region for sizes up to 1024; this is not predictive error.

\begin{table*}[t]
 \centering
 \caption{\textbf{Matched learnable positional-layer cost on H200.}
 ReLU linear attention, batch size eight, width 64, four heads, $z=0.8$.
 Forward/backward times and peak allocated memory are summarized over
 three independent repeats, each averaging ten warmed steps.
 \emph{Exact} is untruncated; \emph{Sparse} uses $L=16$. All values have three decimals.
 All 72 small-graph cells (including the companion FAVOR+ table) are complete.
 The 256-node \emph{Exact} timing jump is discussed below.}
 \label{tab:matched-efficiency}
 \small
 \setlength{\tabcolsep}{4pt}
 \begin{tabular}{lllllll}
 \toprule
 Nodes/graph & Method & $L$ & Forward ms & Backward ms & Total ms & Peak MiB\\
 \midrule
 64 & NoPE & -- & $1.430_{\pm 0.054}$ & $1.276_{\pm 0.049}$ & $2.706_{\pm 0.101}$ & $99.006_{\pm 0.000}$ \\
64 & WIRE & -- & $1.621_{\pm 0.018}$ & $1.577_{\pm 0.024}$ & $3.198_{\pm 0.038}$ & $99.195_{\pm 0.000}$ \\
64 & Exact AW-RoPE & -- & $3.133_{\pm 0.068}$ & $2.460_{\pm 0.065}$ & $5.593_{\pm 0.127}$ & $218.241_{\pm 0.000}$ \\
64 & Sparse AW-RoPE & 16 & $3.495_{\pm 0.094}$ & $6.096_{\pm 0.214}$ & $9.591_{\pm 0.306}$ & $180.102_{\pm 0.000}$ \\
128 & NoPE & -- & $1.469_{\pm 0.049}$ & $1.311_{\pm 0.033}$ & $2.780_{\pm 0.082}$ & $101.854_{\pm 0.000}$ \\
128 & WIRE & -- & $1.591_{\pm 0.056}$ & $1.566_{\pm 0.031}$ & $3.157_{\pm 0.073}$ & $102.231_{\pm 0.000}$ \\
128 & Exact AW-RoPE & -- & $5.484_{\pm 0.040}$ & $2.823_{\pm 0.031}$ & $8.307_{\pm 0.069}$ & $660.393_{\pm 0.000}$ \\
128 & Sparse AW-RoPE & 16 & $3.721_{\pm 0.195}$ & $6.401_{\pm 0.413}$ & $10.122_{\pm 0.485}$ & $265.532_{\pm 0.000}$ \\
256 & NoPE & -- & $1.409_{\pm 0.043}$ & $1.279_{\pm 0.032}$ & $2.688_{\pm 0.075}$ & $107.551_{\pm 0.000}$ \\
256 & WIRE & -- & $1.643_{\pm 0.037}$ & $1.675_{\pm 0.076}$ & $3.318_{\pm 0.100}$ & $108.302_{\pm 0.000}$ \\
256 & Exact AW-RoPE & -- & $433.971_{\pm 0.386}$ & $5.592_{\pm 0.064}$ & $439.563_{\pm 0.404}$ & $2409.198_{\pm 0.000}$ \\
256 & Sparse AW-RoPE & 16 & $3.849_{\pm 0.552}$ & $6.400_{\pm 0.230}$ & $10.249_{\pm 0.780}$ & $430.892_{\pm 0.000}$ \\
\bottomrule

 \end{tabular}
\end{table*}

\begin{table}[t]
 \centering
 \caption{Companion softmax FAVOR+ positional-layer timings. Protocol,
 hardware, initialization and reporting match the ReLU measurements in
 Table~\ref{tab:matched-efficiency}.
 Entries are mean$_{\pm\mathrm{SD}}$, three decimals; \emph{Exact} is untruncated.}
 \label{tab:scaling-favor}
 \small
 \resizebox{\linewidth}{!}{%
 \begin{tabular}{lllllll}
 \toprule
 Nodes & Method & $L$ & Forward ms & Backward ms & Total ms & Peak MiB\\
 \midrule
 64 & NoPE & -- & $1.532_{\pm 0.035}$ & $1.511_{\pm 0.037}$ & $3.043_{\pm 0.068}$ & $99.702_{\pm 0.000}$ \\
64 & WIRE & -- & $2.009_{\pm 0.513}$ & $1.753_{\pm 0.023}$ & $3.762_{\pm 0.492}$ & $99.891_{\pm 0.000}$ \\
64 & Exact AW-RoPE & -- & $3.717_{\pm 0.825}$ & $2.703_{\pm 0.018}$ & $6.420_{\pm 0.834}$ & $218.241_{\pm 0.000}$ \\
64 & Sparse AW-RoPE & 16 & $3.799_{\pm 0.213}$ & $6.618_{\pm 0.467}$ & $10.417_{\pm 0.678}$ & $180.797_{\pm 0.000}$ \\
128 & NoPE & -- & $1.565_{\pm 0.023}$ & $1.543_{\pm 0.004}$ & $3.108_{\pm 0.019}$ & $103.246_{\pm 0.000}$ \\
128 & WIRE & -- & $1.708_{\pm 0.008}$ & $1.774_{\pm 0.020}$ & $3.482_{\pm 0.015}$ & $103.622_{\pm 0.000}$ \\
128 & Exact AW-RoPE & -- & $5.598_{\pm 0.039}$ & $3.044_{\pm 0.009}$ & $8.642_{\pm 0.032}$ & $660.393_{\pm 0.000}$ \\
128 & Sparse AW-RoPE & 16 & $3.694_{\pm 0.127}$ & $6.524_{\pm 0.237}$ & $10.219_{\pm 0.364}$ & $266.923_{\pm 0.000}$ \\
256 & NoPE & -- & $1.635_{\pm 0.127}$ & $1.534_{\pm 0.040}$ & $3.168_{\pm 0.166}$ & $110.333_{\pm 0.000}$ \\
256 & WIRE & -- & $1.725_{\pm 0.024}$ & $1.871_{\pm 0.046}$ & $3.596_{\pm 0.068}$ & $111.084_{\pm 0.000}$ \\
256 & Exact AW-RoPE & -- & $435.142_{\pm 0.440}$ & $5.969_{\pm 0.103}$ & $441.111_{\pm 0.532}$ & $2409.198_{\pm 0.000}$ \\
256 & Sparse AW-RoPE & 16 & $3.854_{\pm 0.042}$ & $6.892_{\pm 0.206}$ & $10.746_{\pm 0.216}$ & $434.924_{\pm 0.000}$ \\
\bottomrule

 \end{tabular}}
\end{table}

\paragraph{The 256-node timing jump.}
\emph{Exact} forward rises from 5.484 to 433.971 ms between 128 and 256 nodes,
while the 256-node backward takes 5.592 ms. FAVOR+ shows the same jump
(435.142 ms). The recorded PyTorch build
(\texttt{2.7.0a0+ecf3bae40a.nv25.02}, CUDA 12.8) solves real blocks of
twice the node count. At matrix dimension 512, its LU dispatch switches
from batched cuBLAS to looped cuSOLVER; ordinary first-order backward
reuses the LU factors.\footnote{Version-matched PyTorch
\href{https://github.com/pytorch/pytorch/blob/ecf3bae40a/aten/src/ATen/native/cuda/linalg/BatchLinearAlgebra.cpp\#L1489-L1496}{LU dispatch}
and \href{https://github.com/pytorch/pytorch/blob/ecf3bae40a/torch/csrc/autograd/FunctionsManual.cpp\#L5551-L5564}{solve backward}.}
This source-level switch is consistent with the observed jump; the timings
do not isolate its runtime contribution. Peak allocation grows by a factor
of 3.65, close to the quadratic growth of dense matrices. The observed
crossing therefore reflects this software stack and batching configuration.

\paragraph{Measured capacity and approximation.}
All 288 cells terminate: 276 successful measurements and 12 actual
CUDA OOMs, with no timeouts or software failures. The OOMs are precisely
\emph{Exact} at 8,192 and 16,384 nodes, both degrees, all three repeats.
\emph{Sparse} completes every size and depth. At degree six and $L=16$,
16,384-node forward/backward takes 27.287 ms with 2.731 GiB peak
allocated memory. \emph{Exact} has lower measured latency at 64 nodes; this
configuration-specific observation is separate from the isolated full-model
training times in Table~\ref{tab:exact-sparse-efficiency}.

At 1,024 nodes and degree six, the same-input relative feature $\ell_2$
errors for $L=8,16,32$ average $0.079086$, $0.011728$, and $0.000286$,
respectively. These are operator checks at the initialized parameters,
not errors after independently training each version. They complement
the worst-case bound with the measured depth--accuracy trade-off.

\section{Search Spaces and Selected Hyperparameters}
\label{app:hyperparameters}

Table~\ref{tab:synthetic-hyperparameters} lists the shared hyperparameters of
the matched synthetic study summarized in
Section~\ref{sec:synthetic-experiments}, following the Graph-RoPE
protocol \citep{reid2026graphrope}; \emph{Exact} and \emph{Sparse} use
these settings with matched initializations and identical trainable-parameter
counts.

\begin{table}[ht]
    \centering
    \caption{Shared hyperparameters of the matched synthetic study
    (Section~\ref{sec:synthetic-experiments}). Both \emph{Exact} and
    \emph{Sparse} arms use these settings with matched initializations.}
    \label{tab:synthetic-hyperparameters}
    \small
    \setlength{\tabcolsep}{5pt}
    \begin{tabular}{ll}
        \toprule
        Hyperparameter & Value \\
        \midrule
        Backbone layers & 4 \\
        Hidden width & 32 \\
        Attention heads & 1 \\
        Dropout & 0.2 \\
        Batch size & 16 \\
        Attention & full \\
        Pooling & mean \\
        Auxiliary PE & LapPE \\
        Optimizer & Adam \\
        Epochs & 250 \\
        Weight decay & $10^{-4}$ \\
        LR schedule & cosine decay \\
        Trainable parameters & 39,575 \\
        \emph{Sparse} depth $L$ & 8 or 16 \\
        \bottomrule
    \end{tabular}
\end{table}

The completed 110-trial GIN synthetic search uses learning rates
$\{10^{-4},2\times10^{-4}\}$, $L\in\{8,16\}$ for the family grid,
and $z\in\{0.6,0.8\}$ where applicable. NoPE has no effective $L$ or $z$;
the values in its configuration files are ignored. Table~\ref{tab:gin-winners}
records the validation winner for every dataset and transport family so the
ablation is reproducible.

\begin{table*}[ht]
    \centering
    \caption{Validation-selected configurations in the completed GIN
    synthetic grid. Metric is normalized RMSE (lower is better).}
    \label{tab:gin-winners}
    \small
    \setlength{\tabcolsep}{3.5pt}
    \begin{tabular}{llllll}
        \toprule
        Dataset & Method & LR & $L$ & $z$ & Validation \\
        \midrule
        Mono-0 & NoPE & $1\mathrm{e}{-4}$ & -- & -- & 0.046553 \\
        Mono-0 & AW & $2\mathrm{e}{-4}$ & 16 & 0.6 & 0.031604 \\
        Mono-0 & AW-NB & $2\mathrm{e}{-4}$ & 16 & 0.8 & 0.029987 \\
        Mono-0 & AW-NB-MS & $2\mathrm{e}{-4}$ & 8 & 0.8 & 0.032441 \\
        \addlinespace
        Mono-5 & NoPE & $2\mathrm{e}{-4}$ & -- & -- & 0.044055 \\
        Mono-5 & AW & $2\mathrm{e}{-4}$ & 16 & 0.8 & 0.026159 \\
        Mono-5 & AW-NB & $2\mathrm{e}{-4}$ & 16 & 0.8 & 0.031583 \\
        Mono-5 & AW-NB-MS & $2\mathrm{e}{-4}$ & 16 & 0.8 & 0.040315 \\
        \addlinespace
        Mono-10 & NoPE & $2\mathrm{e}{-4}$ & -- & -- & 0.036058 \\
        Mono-10 & AW & $2\mathrm{e}{-4}$ & 8 & 0.8 & 0.020206 \\
        Mono-10 & AW-NB & $2\mathrm{e}{-4}$ & 8 & 0.6 & 0.022442 \\
        Mono-10 & AW-NB-MS & $2\mathrm{e}{-4}$ & 16 & 0.8 & 0.023225 \\
        \addlinespace
        Mono-15 & NoPE & $2\mathrm{e}{-4}$ & -- & -- & 0.026099 \\
        Mono-15 & AW & $2\mathrm{e}{-4}$ & 16 & 0.8 & 0.014582 \\
        Mono-15 & AW-NB & $2\mathrm{e}{-4}$ & 8 & 0.8 & 0.015003 \\
        Mono-15 & AW-NB-MS & $2\mathrm{e}{-4}$ & 16 & 0.8 & 0.016190 \\
        \addlinespace
        Watts SPD & NoPE & $2\mathrm{e}{-4}$ & -- & -- & 0.016607 \\
        Watts SPD & AW & $2\mathrm{e}{-4}$ & 8 & 0.8 & 0.006559 \\
        Watts SPD & AW-NB & $2\mathrm{e}{-4}$ & 8 & 0.8 & 0.010137 \\
        Watts SPD & AW-NB-MS & $1\mathrm{e}{-4}$ & 16 & 0.8 & 0.016255 \\
        \bottomrule
    \end{tabular}
\end{table*}

\begin{table}[t]
    \centering
    \caption{\textbf{GIN synthetic confirmation.} Test nRMSE $\downarrow$,
  mean$_{\pm\mathrm{SD}}$ over two matched runs after 250 epochs, with
  configurations frozen from the validation-only grid. Bold marks
  the lowest unrounded mean.}
 \label{tab:gin-synthetic-results}
    \small
    \setlength{\tabcolsep}{5pt}
    \begin{tabular}{llllll}
        \toprule
        Dataset & GIN-NoPE & GIN-AW-RoPE & GIN-AW-NB & GIN-AW-NB-MS & Winner \\
        \midrule
        Monochromatic-0 & $0.047_{\pm 0.000}$ & $0.035_{\pm 0.001}$ & $\mathbf{0.034_{\pm 0.002}}$ & $0.036_{\pm 0.002}$ & AW-NB \\
Monochromatic-5 & $0.041_{\pm 0.000}$ & $\mathbf{0.026_{\pm 0.001}}$ & $0.031_{\pm 0.001}$ & $0.032_{\pm 0.005}$ & AW \\
Monochromatic-10 & $0.034_{\pm 0.000}$ & $\mathbf{0.020_{\pm 0.000}}$ & $0.022_{\pm 0.001}$ & $0.022_{\pm 0.000}$ & AW \\
Monochromatic-15 & $0.024_{\pm 0.000}$ & $\mathbf{0.016_{\pm 0.000}}$ & $0.016_{\pm 0.000}$ & $0.016_{\pm 0.000}$ & AW \\
Watts--Strogatz SPD & $0.013_{\pm 0.001}$ & $\mathbf{0.006_{\pm 0.001}}$ & $0.007_{\pm 0.002}$ & $0.011_{\pm 0.001}$ & AW \\
\bottomrule

    \end{tabular}
\end{table}

\subsection{GIN Optimization Ablation}
\label{app:cluster-stability}

This ablation isolates how an AW branch enters a deep network. On
Cluster, all arms use four GIN layers, hidden width 32, batch size 16,
$L=16$, $z=0.6$, learning rate $2\times10^{-4}$, 60 epochs, and four paired
matched runs. Checkpoints are selected only by validation accuracy.
Table~\ref{tab:cluster-stability-ablation} reports mean $\pm$ sample SD of
test accuracy at that epoch.

\begin{table}[htbp]
    \centering
    \caption{\textbf{AW-RoPE Cluster stability ablation, four matched runs.}
    All arms are paired and budget-matched. This table isolates \emph{Sparse} AW-RoPE injection. $\Delta$ is relative to NoPE test accuracy.}
    \label{tab:cluster-stability-ablation}
    \small
    \setlength{\tabcolsep}{3.2pt}
    \begin{tabular}{llll}
        \toprule
        Injection & Validation $\uparrow$ & Test $\uparrow$ & $\Delta$ \\
        \midrule
        NoPE
          & $0.4487{\pm}0.0286$ & $0.4502{\pm}0.0276$ & -- \\
        AW-RoPE ReZero
          & $\mathbf{0.6457{\pm}0.0022}$
          & $\mathbf{0.6461{\pm}0.0024}$ & $+0.1959$ \\
        AW-RoPE ReZero + norm.
          & $0.6265{\pm}0.0057$ & $0.6263{\pm}0.0057$ & $+0.1762$ \\
        AW-RoPE Single + norm.
          & $0.5402{\pm}0.0128$ & $0.5382{\pm}0.0138$ & $+0.0880$ \\
        \bottomrule
    \end{tabular}
\end{table}

All 12 completed AW trials in this ablation beat their matched NoPE arm.
The zero-initialized, per-layer ReZero branch is best; finite-sum normalization
reduces its mean by 1.97 points, and a single pre-backbone injection loses a
further 8.81 points. The learned ReZero gates move away from zero, while the
within-graph node cosine no longer collapses near one. This supports a
specific interpretation: exact identity initialization fixes an optimization
problem, whereas the analytic transport supplies the eventual gain.

\FloatBarrier

\section{Completed Mechanism, Sensitivity, and Efficiency Study}
\label{app:future-grid}

The revision study is complete: 27 mechanism runs, 24 additional sensitivity
runs, and 45 retained isolated timing runs. The sensitivity grid reuses the three
unrestricted $(L,z)=(8,0.8)$ first-run mechanism runs, giving 27 cells
from 51 unique accuracy runs. The sensitivity grid uses the first run
of each arm; additional repeats were deferred for budget. All retained results, source hashes and plot/table generation are
archived with the artifact.

\subsection{Frozen field controls and circulation}
\label{app:field-control-results}

Monochromatic-0 uses the synthetic dense-attention backbone, 250 epochs,
batch size 16, and learning rate $10^{-4}$. Peptides-struct uses 200 epochs,
batch size 128, and learning rate $3\times10^{-4}$; MolHIV uses 100 epochs,
batch size 32, and learning rate $10^{-4}$. Both real tasks use the frozen
softmax FAVOR+ backbone. Every field arm uses three matched runs,
$L=8$, fixed $z=0.8$, learned rotary frequencies, and all insertion layers.
The backbone, optimizer settings, epoch budget, and split are shared within
each task. Scorer parameters and initialization RNG consumption are matched;
the zero-field scorer is inactive. The gradient
scorer produces bounded node potentials whose edge differences are flat.
The unrestricted scorer produces bounded antisymmetric edge values.
Neither the gradient arm nor the zero arm removes sparse propagation.

Each run contributes test at its validation-best checkpoint. Synthetic
RMSE is divided by 25; bars and subscripts show the sample SD across all
three matched runs. The arms vary the field family, which can change both phase
and transport amplitude.

\begin{figure}[t]
    \centering
    \includegraphics[width=\linewidth]{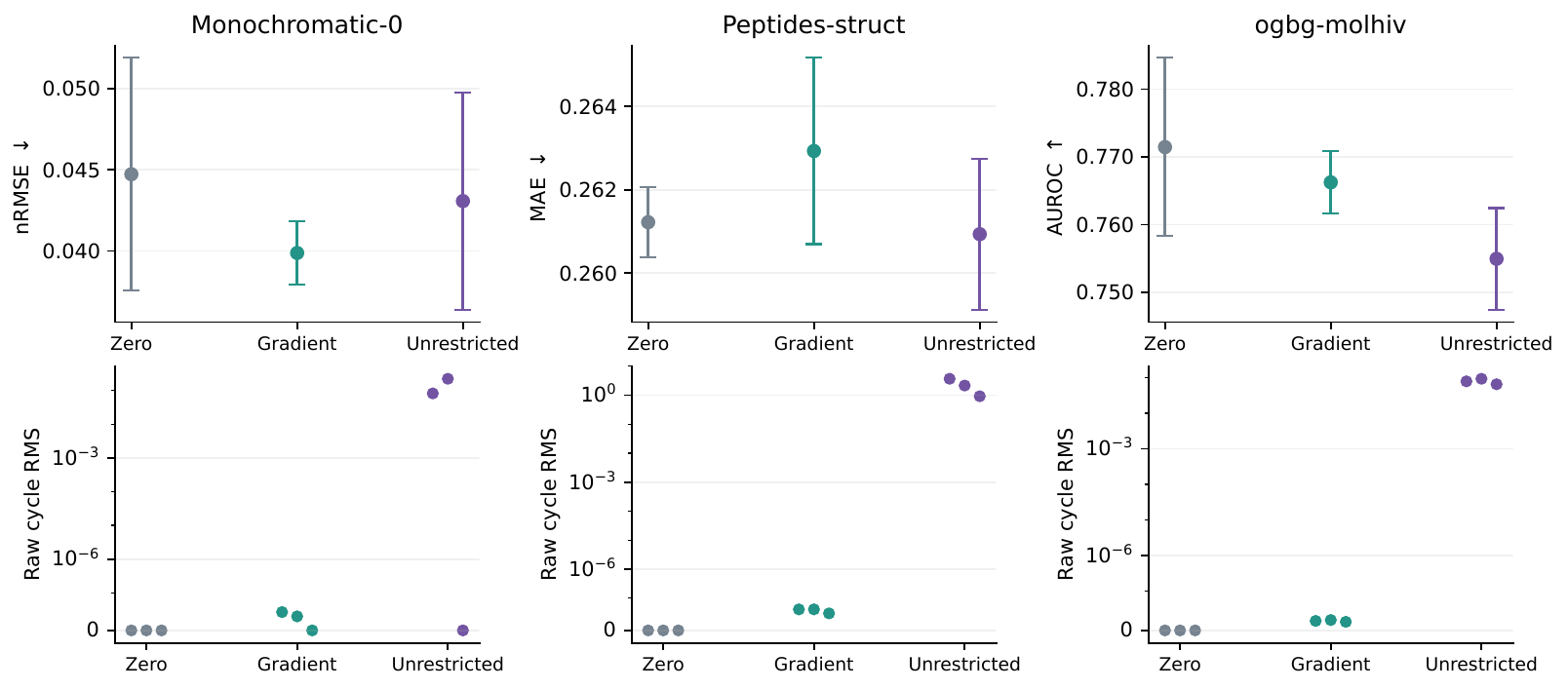}
    \caption{\textbf{Prediction and circulation under different edge fields.} Top: paired test metrics at each validation-best checkpoint,
    mean and sample SD over three runs. Bottom: one point per run, obtained
    by averaging raw fundamental-cycle RMS across insertion layers with
    cycles in the first validation batch. The symmetric-log axis retains exact zeros. The
    one unrestricted Monochromatic-0 run is flat on this diagnostic batch.}
    \label{fig:field-control-results}
\end{figure}

For the fixed first validation batch at the selected checkpoint, we build a
deterministic breadth-first spanning forest on the simple undirected support.
For every non-tree edge, its discrepancy from the integrated tree potential
is the signed sum along the corresponding fundamental cycle. We record all
these cycle sums at each insertion layer. Across the gradient controls the
largest absolute cycle sum is $7.153\times10^{-7}$, consistent with
floating-point residuals; the zero controls are exactly zero.
Eight unrestricted runs have a nonzero layer-averaged cycle RMS on this
batch; one Monochromatic-0 run has zero. Cycle RMS is measured on this
batch in the chosen fundamental basis,
using raw $\Phi_C$ rather than frequency-scaled phases modulo $2\pi$.

The gradient field has the best Monochromatic-0 mean. On PascalVOC-SP,
the unrestricted field attains the highest mean
($0.37568_{\pm 0.01188}$, against $0.37408_{\pm 0.01999}$ for zero and
$0.37228_{\pm 0.01248}$ for gradient). For Peptides-struct,
unrestricted is only marginally better than zero in mean and wins on one of
three individual runs. For MolHIV, zero outperforms unrestricted on all
three runs, despite the latter's nonzero measured circulation. The benefit of learned phases therefore varies by task and is not
monotone in circulation magnitude.

\subsection{Validation sensitivity}

The retained grid is $L\in\{4,8,16\}$ and $z\in\{0.4,0.6,0.8\}$ on all
three tasks, with the first run fixed in advance. The field is unrestricted and
the task settings match the preceding experiment. Each cell is the best
validation metric over the full epoch budget. Test scores are not used to
rank cells or update the headline configuration.

\begin{figure}[t]
    \centering
    \includegraphics[width=\linewidth]{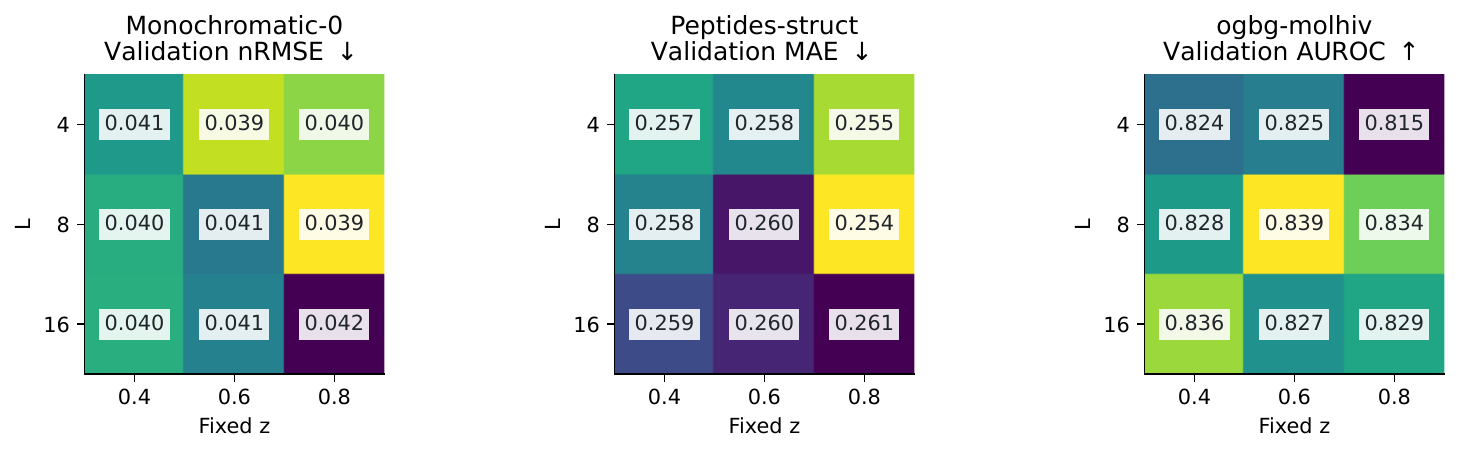}
    \caption{\textbf{Exploratory $L\times z$ sensitivity, fixed run.}
    Each cell shows the best validation score of one run, to three decimals. Color scales are
    independent across tasks. Brighter cells indicate better scores.
    All 27 cells are retained, including the three reused mechanism cells.}
    \label{fig:section46-sensitivity}
\end{figure}

The best grid cells are $(8,0.8)$ on Monochromatic-0 and Peptides-struct,
and $(8,0.6)$ on MolHIV. Their respective validation ranges are
$0.039$--$0.042$ nRMSE, $0.254$--$0.261$ MAE, and $0.815$--$0.839$ AUROC.
The validation score varies non-monotonically with walk length.

\subsection{Isolated full-model time and allocated memory}

Each of the 15 retained dataset--method configurations has three independent timing
runs on an isolated H200 GPU. We exclude one warm-up epoch, average three
subsequent complete training epochs within a run, then report the mean and
sample SD across runs. Each timing process occupies its GPU alone. The timed
region includes data iteration, host-to-device transfer, full-model
forward/backward, optimizer updates, and training logger updates. Validation,
test evaluation, and checkpoint writes are outside this training-epoch
measurement. The peak is PyTorch allocated device memory over the measured
epochs.

\begin{table}[t]
    \centering
    \caption{\textbf{Isolated H200 full-model cost.} Three timing runs,
    mean$_{\pm\mathrm{SD}}$ to three decimals. Peak GiB is each run's maximum
    allocated memory over its three measured epochs. Loader seconds use
    existing prepared caches and are listed separately. The real tasks use FAVOR+.}
    \label{tab:section46-efficiency}
    \scriptsize
    \setlength{\tabcolsep}{3pt}
    \begin{tabular}{lllll}
        \toprule
        Dataset & Method & Train s/epoch & Peak GiB & Loader s \\
        \midrule
        Monochromatic-0 & NoPE & $7.818_{\pm 0.076}$ & $0.097_{\pm 0.000}$ & $8.601_{\pm 0.049}$ \\
Monochromatic-0 & WIRE & $10.292_{\pm 0.535}$ & $0.098_{\pm 0.000}$ & $8.580_{\pm 0.047}$ \\
Monochromatic-0 & AW-RoPE, $L=4$ & $16.548_{\pm 0.202}$ & $0.115_{\pm 0.000}$ & $8.807_{\pm 0.102}$ \\
Monochromatic-0 & AW-RoPE, $L=8$ & $21.516_{\pm 0.115}$ & $0.130_{\pm 0.000}$ & $8.730_{\pm 0.295}$ \\
Monochromatic-0 & AW-RoPE, $L=16$ & $31.770_{\pm 0.365}$ & $0.159_{\pm 0.000}$ & $8.789_{\pm 0.196}$ \\
\midrule
Peptides-struct & NoPE & $3.493_{\pm 0.043}$ & $2.801_{\pm 0.026}$ & $31.090_{\pm 0.673}$ \\
Peptides-struct & WIRE & $3.766_{\pm 0.050}$ & $3.027_{\pm 0.032}$ & $31.233_{\pm 0.329}$ \\
Peptides-struct & AW-RoPE, $L=4$ & $5.213_{\pm 0.066}$ & $4.740_{\pm 0.058}$ & $32.311_{\pm 0.928}$ \\
Peptides-struct & AW-RoPE, $L=8$ & $6.414_{\pm 0.024}$ & $6.276_{\pm 0.110}$ & $31.448_{\pm 0.460}$ \\
Peptides-struct & AW-RoPE, $L=16$ & $8.958_{\pm 0.048}$ & $9.341_{\pm 0.219}$ & $30.926_{\pm 0.378}$ \\
\midrule
ogbg-molhiv & NoPE & $74.993_{\pm 0.446}$ & $0.540_{\pm 0.001}$ & $35.733_{\pm 0.173}$ \\
ogbg-molhiv & WIRE & $83.769_{\pm 0.840}$ & $0.549_{\pm 0.005}$ & $35.583_{\pm 0.454}$ \\
ogbg-molhiv & AW-RoPE, $L=4$ & $116.799_{\pm 1.356}$ & $0.731_{\pm 0.013}$ & $36.010_{\pm 0.123}$ \\
ogbg-molhiv & AW-RoPE, $L=8$ & $141.839_{\pm 1.941}$ & $0.870_{\pm 0.024}$ & $35.705_{\pm 0.741}$ \\
ogbg-molhiv & AW-RoPE, $L=16$ & $189.056_{\pm 0.652}$ & $1.144_{\pm 0.036}$ & $35.001_{\pm 0.391}$ \\
\bottomrule

    \end{tabular}
\end{table}

AW time and allocated memory increase with $L$ on all three measured tasks.
At $L=8$, AW takes $2.091\times$, $1.703\times$, and $1.693\times$ WIRE's
epoch time on Monochromatic-0, Peptides-struct, and MolHIV, respectively,
and allocates more memory in all three cases. Size-dependent layer timings
and their backend dispatch effects are examined separately in Appendix~\ref{app:scaling}.

Loader initialization uses existing prepared caches. The inventory covers
enumerable PyG processed files; spectral preparation and the separate AW-only
ReLU replay are outside these timings.

\section{Multi-Route Capability and Matched Mixing Controls}
\label{app:multiroute-controls}

This study compares learned phase transport with ordinary multi-hop mixing,
using a fixed-graph diagnostic and a paired predictive experiment.

\subsection{A fixed-input two-route phase intervention}
\label{app:two-route-intervention}
\begin{figure}[t]
\centering
\input{figures/two_route_control}
\caption{\textbf{Route phase changes transport on a fixed graph.}
The diamond has both edge orientations, uniform row transition weights,
and a unit real input at $v$. Arrows show walk indices from $u$ toward $v$;
the row action gathers that input back to $u$. Only $a_{ub}=a_{bv}=\Delta/2$
and their negatives on reversed edges vary; all other displacements are
zero, with $\omega=1$. Every displacement remains in $[-\pi,\pi]$.
At $L=2$, the two routes have equal weight $1/4$.
AW's normalized response power changes from reinforcement to cancellation.
The dashed line is the magnitude of an endpoint-factorized coefficient
with the same phase-free mixing weights, not a measured WIRE network.
}
\label{fig:two-route-response}
\end{figure}

For the graph in Figure~\ref{fig:two-route-response}, a feature supported
only at $v$ reaches $u$ for the first time at depth two. With $z=0.8$,
\[
 [\mathcal F^{(2)}_{z,1}]_{uv}
   =\frac{z^2}{4}(1+e^{i\Delta}),\qquad
 \frac{|[\mathcal F^{(2)}_{z,1}]_{uv}|^2}{z^4/4}
   =\cos^2(\Delta/2).
\]
We evaluate 129 uniformly spaced phases using the production real
paired-channel sparse recurrence in float64 and compare against this
closed form. The dashed comparison line in Figure~\ref{fig:two-route-response} is the magnitude of an endpoint-factorized coefficient with the same phase-free mixing weights, not a measured WIRE network.
The graph, input and phase-free transition weights stay fixed; only the
edge phases vary. A scalar
endpoint factor $\overline{g(u)}g(v)$ has unit magnitude and cannot change
the coefficient magnitude by itself. The diagnostic concerns that rotary coefficient, while a WIRE backbone
can also encode graph structure in its features.

\subsection{Two complementary prediction tasks}
\label{app:constructed-prediction}

\begin{table}[t]
\centering
\small
\setlength{\tabcolsep}{4pt}
\caption{\textbf{Capability ladder on the constructed tasks.} Each row
adds one ingredient. Check marks solve at high accuracy, crosses sit at
chance. ``(proof)'' marks entries covered by an impossibility statement;
the route task uses endpoint readout and full networks, the
feature-conditional task uses full networks on unseen sizes. The ladder
compares field classes within the walk-transport operator interface;
learned restriction-map transports are discussed in
Appendix~\ref{sec:connection-relation}.}
\label{tab:capability-ladder}
\begin{tabular}{@{}llcc@{}}
\toprule
Field class & Ingredients & Route & Feature-cond.\\
\midrule
Mixing (Zero) & transport only & $\times$ & $\times$ (proof)\\
WIRE & endpoint phases & $\times$ & $\times$\\
Static field (random, learned) & circulation & -- & $\times$ (proof)\\
Dynamic local gradient & adaptivity, local & $\times$ & $\times$ (proof)\\
Dynamic nonlocal Flat & adaptivity, nonlocal & -- & \checkmark\ (unstable)\\
\emph{Sparse} AW (ours) & circulation + adaptivity & \checkmark & \checkmark\\
\bottomrule
\end{tabular}
\end{table}

\begin{table}[t]
\centering
\small
\caption{The two constructed tasks test different capabilities under
specified readouts. These are comparison axes, not a nested hierarchy
of all possible graph models.}
\label{tab:constructed-axes}
\begin{tabularx}{\linewidth}{@{}lXX@{}}
\toprule
Task & Collision or empirical failure & Successful construction\\
\midrule
Route interference, endpoint & Mixing (Zero), WIRE and pointwise
gradient transport & AW route-phase interference\\
Feature-conditioned cycle, full block & Static shell transport and
pointwise gradients; WIRE and random fixed fields fail empirically &
Local edge AW or dynamically projected nonlocal Flat\\
\bottomrule
\end{tabularx}
\end{table}

\paragraph{Route-interference protocol (Table~\ref{tab:route-interference}).}
Two length-eight routes connect marked endpoints, forming an undirected
16-node cycle. Two separated $A\!\to\!B$ motifs occupy internal nodes.
The label distinguishes both motifs on one route from one motif on each
route. Moving a complete motif produces an opposite-label pair with the
same endpoint features and feature multisets at each endpoint-distance
shell. Six input channels encode the endpoints, motif types and two
Gaussian nuisance values (SD $0.1$); motif strengths vary in
$[0.95,1.05]$. Pairs share nuisance, node permutations and independently
rotated/reflected bases of the first two nontrivial Laplacian eigenspaces.
The latter enter WIRE rotations, not the pointwise encoder.

The endpoint classifier applies one transport to pointwise encodings and
reads the marked endpoint; the attention classifier uses a full block.
Both use width 16, fixed $z=0.8$, and $L=8$ for Sparse. Exact replaces
only the truncated transport by a complete resolvent solve. All arms have
2,618 parameters with endpoint readout or 4,810 with attention; smaller
positional modules receive active pointwise adapters. Shared network
initializations match within runs. There are 1,024/256/1,024
train/validation/test pairs; pairs stay within splits. Adam uses learning
rate $0.003$, no weight decay, 32 pairs per update and 2,000 updates.
Selection minimizes validation BCE, with the earliest tie, before test
evaluation. All 50 Sparse/control and ten Exact fits are retained.

For endpoint readout, $[p(P)]_{uw}$ depends only on the cycle distance
from $u$ to $w$. Identical shell-feature multisets therefore give
identical outputs for any phase-free polynomial $p(P)$. A pointwise
gradient $a_{ij}=b(x_j)-b(x_i)$ gives the gauge-conjugated action
$G^*p(P)G$, where $G_{ii}=\exp(i\omega b(x_i))$, and preserves the
same collision. Endpoint-factorized rotation also preserves it.
AW can instead accumulate different phases on the two routes, exposing
the response power in Figure~\ref{fig:two-route-response}. With a full
attention block, mixing and gradient controls succeed too. This task
isolates a transport capability under its endpoint readout condition.

\subsection{Feature-conditioned classification on anonymous cycles}
\label{app:inductive-fields}

\paragraph{Graph family and held-out information.}
On $C_n$, $n\geq12$, the two color arrangements are
\[
 A_n=0^{n-12}\,012001100100,\qquad
 B_n=0^{n-12}\,011001200100,
\]
with labels zero and one. Here $0^{n-12}$ denotes a run of zero-colored
nodes, not a numerical power. Each node receives a constant, a three-way
color indicator and two color-specific Gaussian nuisance values with
SD $0.1$. All nodes of a color have identical features within a graph;
the nuisance draw is shared within each opposite-label pair. A pair has
identical topology, size and feature multiset, but different feature
arrangements. Node order is randomly permuted, and no persistent node or
edge identities enter the model. Graph-only spectral bases are randomly
rotated and reflected within each of the first two nontrivial eigenspaces,
shared within the pair, for the WIRE arms.

Training sizes are $\{12,16,20,24,28,32\}$ with 64 pairs per size;
validation sizes are $\{14,18,22,26,30\}$ with 32 pairs per size;
test sizes are $\{15,17,19,21,23,25,27,29,31\}$ with 128 pairs per size.
Each split uses independent paired instances and disjoint sizes. The
primary test measures unseen sizes within the training size range;
separate 48/64-node diagnostics assess extrapolation beyond that range.

\paragraph{Eight matched arms.}
Let $D=I+zP+z^2P^2$, $z=0.8$, and let $h_i$ denote the pointwise
encoded/adapted feature before transport. Mixing (Zero) uses $D$ on Q and K.
WIRE uses nodewise rotation from the spectral coordinates; WIRE~+~mixing
applies $D$ before this rotation. The learned static polynomial has
separate arbitrary trainable matrices $M_k^Q,M_k^K$ for $k=0,1,2$:
$Q'=\sum_k(P^kQ)M_k^Q$ and likewise for K. The same matrices apply at
every graph size. It includes, and is stronger than, the deterministic
static shell transports in Proposition~\ref{prop:inductive-collision}.

The random fixed field draws independent $N(0,0.8^2)$ displacements on
one orientation of each edge and negates them on reversal. Each new
paired instance receives a new draw, shared across its labels and fixed
thereafter, including at validation/test sizes. Its rotary frequencies
are trainable. This supplies random symmetry breaking but no persistent
edge identity. It is an empirical control, outside the deterministic
static-field theorem. The topology-only Static and Flat fields of
Appendix~\ref{app:static-flat} both equal zero on cycles, so their result
is the Mixing (Zero) row rather than an additional independent fit.

AW uses the production local antisymmetric scorer on $(h_i,h_j)$.
Dynamic local gradient uses the same scorer parameters to form a
bounded pointwise potential $b(h_i)$, then takes its exact difference
$a_{ij}=b(h_j)-b(h_i)$. The dynamic nonlocal Flat arm uses the same
initial edge scorer as AW and projects its output onto gradients.
Choose either cycle orientation, let $\widetilde a_i$ be the scorer's
displacement on edge $i\to i+1$, and set
\begin{equation}
 a_i^{\mathrm{flat}}
 =\widetilde a_i-\frac{1}{n}\sum_{j=0}^{n-1}\widetilde a_j,
 \qquad a_{i+1,i}^{\mathrm{flat}}=-a_i^{\mathrm{flat}}.
 \label{eq:inductive-flat-projection}
\end{equation}
This is the unweighted incidence-matrix least-squares projection on a
cycle. Its sum is zero, so a potential exists; both that potential and
the projection depend on features beyond a single node. Reversing the
chosen orientation leaves the projected directed field unchanged. It
provides no canonical node identity. This dynamic Flat is distinct from
the fixed Flat in Appendix~\ref{app:static-flat}.

\paragraph{Training and selection.}
All arms have 4,810 trainable parameters, width 16, one full softmax
attention head, residual/normalization and feedforward processing, and
mean pooling with a trainable classifier. There is one graph-aware Q/K
transport with $L=2$, not a frozen endpoint readout. Smaller positional
modules receive active pointwise adapters. Common network tensors have
identical initial-state hashes within each run; the three dynamic-field
arms have identical full initial tensors. Matrix initializations use a
shared gain of 3. Adam uses learning rate $0.0003$, no weight decay and
10,000 updates. Each update samples one training size uniformly and 16
pairs with replacement. With graph logits $\ell^-,\ell^+$ and ordinary
binary cross-entropy $\mathcal L_{\rm BCE}$, every arm minimizes
\[
 \mathcal L=\mathcal L_{\rm BCE}
 +\mathbb E_{\rm pairs}\operatorname{softplus}
       \bigl(-100(\ell^+-\ell^-)\bigr).
\]
Pairing uses training labels only. The inference function processes one
graph independently; neither another graph nor the pair index is an
input. Every 100 updates, validation BCE is averaged equally across
sizes. Its minimum selects the checkpoint, with the earliest tie;
test instances are generated only after training and selection finish.
The primary metric is the equally weighted size-average accuracy.

Development used fresh runs and validation data. An earlier ordinary-BCE
cohort gave AW $83.91\pm21.35\%$ and is retained separately.
The final protocol freezes fresh data and fresh runs; the two protocols
are not pooled, and no run is excluded.

\begin{proposition}[Complete-block collision for static and local-potential transport]
\label{prop:inductive-collision}
For paired $A_n,B_n$, a pointwise encoder followed by a degree-two static
matrix-polynomial Q/K transport, one full attention block and invariant
pooling has identical logits for both labels, for every choice of its
parameters. The same holds for a degree-two transport whose connection
is the gradient of a pointwise node potential. Consequently each such
classifier has exactly $50\%$ accuracy on the balanced paired set.
\end{proposition}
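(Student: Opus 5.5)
The plan is to reduce both claims to one combinatorial fact. The fact is that $A_n$ and $B_n$ have the same multiset of radius-two local descriptors
\[
 \tau(i)=\bigl(c_i,\ \{\!\{c_{i-1},c_{i+1}\}\!\},\ \{\!\{c_{i-2},c_{i+2}\}\!\}\bigr),
\]
where $c_i$ is the color of node $i$ on the cycle.

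\textbf{Step 1: per-node outputs depend only on $\tau(i)$.} All nodes of a given color carry identical features within a graph, including the nuisance values, which are shared across the pair. Hence every pointwise map $\psi$ (the encoder, the adapters, and the $Q$, $K$, $V$ projections) gives $\psi(\mathbf h_j)=\psi_{c_j}$. On $C_n$ the transition is $\mathbf P=\tfrac12\mathrm{Adj}$. Therefore $(\mathbf P\mathbf Y)_i=\tfrac12(\mathbf y_{i-1}+\mathbf y_{i+1})$ and $(\mathbf P^2\mathbf Y)_i=\tfrac14(2\mathbf y_i+\mathbf y_{i-2}+\mathbf y_{i+2})$. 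This holds for $n\ge 12$, where the vertices at distance at most two are distinct.

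For the static case, $\mathbf Q'_i=\sum_{k\le2}(\mathbf P^k\mathbf Q)_i\mathbf M_k^Q$. This is a fixed function of $\tau(i)$, because the shell sums are symmetric in the two members of each shell. The same holds for $\mathbf K'_i$, and trivially for $\mathbf V_i$.

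For the gradient case, Theorem~\ref{thm:gradient-factorization} gives $p(\mathbf T_\omega)=\mathbf G_\omega^\ast p(\mathbf P)\mathbf G_\omega$ with $[\mathbf G_\omega]_{jj}=\mathrm e^{\mathrm i\omega b(\mathbf h_j)}$. So $\widetilde{\mathbf q}_i=\mathrm e^{-\mathrm i\omega b_{c_i}}\sum_j[p(\mathbf P)]_{ij}\,\mathrm e^{\mathrm i\omega b_{c_j}}\mathbf q_{c_j}$. This is again the pointwise value at $i$ combined with shell sums of a pointwise function of color, and hence a function of $\tau(i)$, channel by channel.

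\textbf{Step 2: full attention plus invariant pooling sees only a multiset.} A full softmax block computes each output $\mathbf o_i$ from its own token and from the unordered set of all key/value tokens. The subsequent residual, normalization and feedforward maps are pointwise, and mean pooling is permutation invariant. So the logit is a function of the multiset $\{\!\{(\widetilde{\mathbf q}_i,\widetilde{\mathbf k}_i,\mathbf v_i,\mathbf h_i)\}\!\}_i$. By Step 1 this is a function of $\{\!\{\tau(i)\}\!\}_i$. Random node permutation and the choice of orientation do not matter, since $\tau$ is reflection symmetric.

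\textbf{Step 3: the combinatorial check, which I expect to be the main obstacle.} Label the window $A:\,012001100100$ and $B:\,011001200100$, with the $0^{n-12}$ run wrapping around so that both ends see zeros. Only nodes within distance two of a nonzero color can differ, so the long zero run contributes identical descriptors to both graphs. I will tabulate $\tau(i)$ for the window positions plus two padding zeros on each side, in both strings, and match them. The structural reason the match should work is that $B$ swaps the roles of the ``$12$'' and ``$11$'' blocks. The $2$ sits next to one $1$ in both arrangements, and the $2$'s distance-two shell is $\{\!\{0,0\}\!\}$ in $A$ (positions $0$ and $4$) and $\{\!\{0,0\}\!\}$ in $B$ (positions $4$ and $8$). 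The care needed is mostly bookkeeping near the cyclic boundary. If the direct tabulation fails for one descriptor, the fallback is to check whether only the unordered shell sums are needed; that is the weaker invariant the linear transports actually use.

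Once the multisets agree, the logits are identical for every parameter choice. Each label pair is then classified identically, giving exactly $50\%$ accuracy on the balanced paired set.
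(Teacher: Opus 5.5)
Your proposal follows the paper's proof essentially step for step. It uses the same radius-two color signature, the same argument that static and pointwise-gradient degree-two transports (the latter via $\mathbf G^\ast p(\mathbf P)\mathbf G$) make each tuple $(\mathbf h_i,\mathbf Q'_i,\mathbf K'_i,\mathbf V_i)$ a function of that signature, and the same multiset argument through full attention and invariant pooling. The combinatorial check you left open does succeed. The paper verifies $n=12$ directly and notes that inserting $n-12$ zeros into the common length-three zero run changes both multisets identically. More directly, the two windows differ only at positions $2$ and $6$, so only window nodes $0,\dots,8$ need comparing; their signatures involve only colors that are fixed for all $n\ge 12$, and they match as multisets. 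Your hedged fallback to shell sums is therefore unnecessary, and it would not be a genuinely weaker invariant anyway, since for generic parameters the sums determine the shell multisets.
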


\begin{proof}
Define a node signature as its color together with the color histograms
at distances one and two. The two patterns have identical multisets of
these signatures at $n=12$. Inserting $k=n-12\geq1$ zeros into the
common length-three zero run removes, in both patterns, one signature
$(0;2,0,0;0,2,0)$, adds two $(0;2,0,0;1,1,0)$, and adds $k-1$
$(0;2,0,0;2,0,0)$. Thus equality holds for all $n\geq12$.
The patterns are nonetheless nonisomorphic: the unique color-2 node
fixes an isomorphism and its differently colored neighbors fix the
orientation, but the next four colors towards its color-0 neighbor
are $0011$ in $A_n$ and $0010$ in $B_n$.

Pointwise features depend only on color within each graph. On a cycle,
$Ph$ depends on the distance-one histogram, and $P^2h$ on the node's
color and distance-two histogram, including the backtracking term
$h_i/2$. Hence each static transported query/key is a function of this
signature, even with arbitrary channel matrices. For a pointwise
gradient, the gauge factors in $G^*(I+zP+z^2P^2)G$ also depend only on
the endpoint colors, preserving the signature dependence.
The multisets of complete tuples $(h_i,Q'_i,K'_i,V_i)$ therefore coincide.
Global attention, residual/normalization and pointwise feedforward
processing commute with the matching permutation, and invariant pooling
gives identical graph logits.
\end{proof}

The static-field scope follows from cycle symmetries. A deterministic
topology-only real antisymmetric displacement that is equivariant to
node relabeling is constant on oriented edges by rotations/reflections,
and antisymmetry makes it zero. A directly specified $U(1)$ connection
also admits the constant $-1$ connection: Hermitian reversal requires
$U=\overline U$, so $U=\pm1$. Both cases give static shell polynomials
covered by the proposition, including the uniform $\pi$ connection on
odd cycles. Random edge marks, spectral symmetry breaking, additional
graph-aware layers, larger $L$, or arbitrary nonlocal potentials are
outside the proof. WIRE's near-chance result is empirical, not this
theorem applied to all spectral graph networks.

\paragraph{Seed variability and extrapolation.}
Table~\ref{tab:inductive-runs} retains every run. Dynamic Flat's low
run is a partial fit, not a collapse to chance, with $72.81\%$
validation and $74.26\%$ test accuracy. Its other four runs
are at least $99.91\%$. The observed spread describes sensitivity to
the shared training protocol; it establishes neither an AW--Flat
accuracy advantage nor an intrinsic stability ordering. The 48/64-node
results also limit the size-generalization claim to the primary
within-range test. No normalization mechanism is inferred from these
training outcomes.

\begin{table}[t]
\centering
\small
\caption{All AW and dynamic nonlocal Flat runs, accuracy (\%). Test is
the primary unseen-size split; 48/64 nodes is a separate extrapolation
diagnostic. Zeroing phases gives $50\%$ for every run on the primary
test.}
\label{tab:inductive-runs}
\begin{tabular}{@{}llrrrrr@{}}
\toprule
Arm & Evaluation & 20 & 21 & 22 & 23 & 24\\
\midrule
\emph{Sparse} AW & Test & 100.00 & 100.00 & 100.00 & 93.45 & 99.91\\
\emph{Sparse} AW & 48/64 nodes & 92.97 & 81.64 & 100.00 & 50.00 & 71.88\\
Dynamic nonlocal Flat & Test & 99.91 & 100.00 & 99.96 & 74.26 & 100.00\\
Dynamic nonlocal Flat & 48/64 nodes & 50.78 & 82.81 & 100.00 & 50.00 & 97.27\\
\bottomrule

\end{tabular}
\end{table}

\paragraph{Field diagnostics and frozen-model interventions.}
We inspect the first validation pair at $n=18$ for every fixed run;
Figure~\ref{fig:inductive-mechanism} shows one representative. The learned AW edge
field changes between feature contexts. For dynamic Flat, reconstruct
the zero-mean potential by accumulating $a_i^{\rm flat}$ around the
cycle. The maximum residual in $a_i^{\rm flat}=b_{i+1}-b_i$ is below
$2\times10^{-15}$ across the runs. Nodes with identical raw
features nevertheless have potentials differing by $1.71$--$3.14$
within a graph; the local-gradient control has no such variation up to
$10^{-10}$. This verifies a nonlocal potential rather than assuming one.
There is no prescribed teacher potential to recover.

AW has nonzero cycle sums on these pairs (cycle RMS $1.91$--$2.79$),
but each pair's two labels have the same total circulation. The task
therefore probes the arrangement of feature-conditioned transport, not
a label-dependent total holonomy. Dynamic Flat's successful runs also
show that nonzero circulation is not necessary here. Setting all edge
displacements to zero while holding each fitted AW or dynamic Flat
network fixed returns every run to $50\%$ test accuracy. This
intervention establishes reliance on phases in the fitted solution;
the collision proof and independently trained controls establish the
specified model-class separation. Checkpoint predictions, validation
choices, source/data hashes, permutation covariance and graph-wise
inference independence have been replayed.

\begin{figure}[t]
\centering
\includegraphics[width=\linewidth]{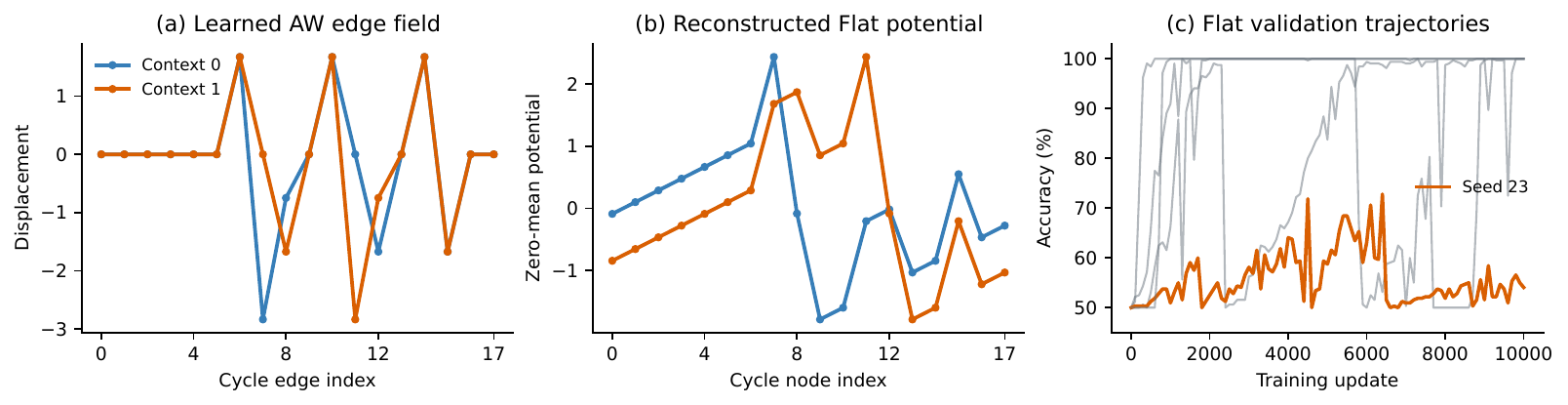}
\caption{\textbf{Learned fields and Flat run variability.}
(a) AW edge displacements for the first paired validation example at
$n=18$. (b) The dynamic Flat control's reconstructed zero-mean
potential on that pair. Cycle indices are used only for this display,
not supplied as model features. (c) All five Flat validation trajectories;
the partially fitted run is highlighted. No smoothing is applied.}
\label{fig:inductive-mechanism}
\end{figure}

\subsection{Completed matched predictive study}
 Monochromatic-0 and Peptides-struct each use all four arms over three
matched runs.
The epoch budgets, learning rates, and batch sizes are inherited from
Appendix~\ref{app:field-control-results}. Synthetic attention remains dense;
the real task uses the same generalized ReLU linear-attention kernel.
The field scorer configuration is inherited without tuning.

Write $D_{L,z}=\sum_{k=0}^L z^kP^k$, and let $R_g$ denote WIRE's
node-coordinate rotation. With $L=8$ and fixed $z=0.8$, the query/key
actions are:
\begin{center}
\begin{tabular}{ll}
\toprule
Arm & Action on each projected query or key $X$\\
\midrule
WIRE & $R_g X$\\
Phase-free mixing & $D_{L,z}X$\\
WIRE + mixing & $R_g D_{L,z}X$\\
AW-RoPE & $\mathcal F^{(L)}_{z,\omega}X$\\
\bottomrule
\end{tabular}
\end{center}
Mixing precedes WIRE rotation in the third arm. Values, backbone widths, local message
passing, and input encodings are unchanged across arms. All arms construct
the same coordinate encoder to preserve shared initialization and training
RNG consumption; its output is unused by AW and phase-free mixing.
Positional-module initialization cannot advance the RNG stream for later
shared backbone layers. Initial shared-state hashes are checked within
each task and run.

WIRE and WIRE + mixing have exactly the same parameters; the latter adds
parameter-free propagation. AW retains its scorer and frequency parameters.
Table~\ref{tab:multiroute-parameters} gives allocated, positional and unused
encoder counts. Shared $L$ and $z$ match the propagation budget.

The protocol freezes all configurations before new results.
Every epoch is checkpointed; the test value is paired with the
validation-best epoch. All three runs contribute to
$\mathrm{mean}_{\pm\mathrm{SD}}$, with three-decimal formatting.

All runs are complete and pass trajectory, checkpoint and shared
initialization checks. Table~\ref{tab:multiroute-completed} retains every
arm and run of the study.

\begin{table}[t]
 \centering
 \caption{Completed matched mixing controls. Test at validation-best,
 mean$_{\pm\mathrm{sample\ SD}}$ over three matched runs, three decimals.
 Mixing is phase-free propagation; WIRE + mixing applies propagation
 before WIRE rotation. Bold marks the best unrounded mean.}
 \label{tab:multiroute-completed}
 \small
 \setlength{\tabcolsep}{3pt}
 \resizebox{\linewidth}{!}{%
 \begin{tabular}{llllll}
 \toprule
 Dataset & Metric & WIRE & Mixing & WIRE + mixing & \emph{Sparse} AW-RoPE\\
 \midrule
 Monochromatic-0 & nRMSE $\downarrow$ & $0.073_{\pm 0.008}$ & $0.044_{\pm 0.005}$ & $0.045_{\pm 0.006}$ & $\mathbf{0.043}_{\pm 0.006}$ \\
Peptides-struct & MAE $\downarrow$ & $0.257_{\pm 0.002}$ & $0.255_{\pm 0.002}$ & $0.257_{\pm 0.001}$ & $\mathbf{0.253}_{\pm 0.001}$ \\
\bottomrule

 \end{tabular}}
\end{table}

AW-RoPE attains the lowest mean on both tasks. The matched arms
attribute most of the gain to walk transport itself, with phase
learning contributing task-dependent improvements.

\begin{table}[t]
 \centering
 \caption{Parameter accounting for the matched mixing study. Counts
 are identical across the three runs of each row. Allocated includes
 modules whose outputs are unused. The final column identifies unused coordinate-encoder parameters.}
 \label{tab:multiroute-parameters}
 \small
 \begin{tabular}{lllll}
 \toprule
 Dataset & Arm & Allocated & Positional & Unused encoder\\
 \midrule
 Monochromatic-0 & WIRE & 39644 & 320 & 0 \\
Monochromatic-0 & Mixing & 39324 & 0 & 85 \\
Monochromatic-0 & WIRE + mixing & 39644 & 320 & 0 \\
Monochromatic-0 & Sparse AW-RoPE & 39656 & 332 & 85 \\
Peptides-struct & WIRE & 506729 & 1920 & 0 \\
Peptides-struct & Mixing & 504809 & 0 & 270 \\
Peptides-struct & WIRE + mixing & 506729 & 1920 & 0 \\
Peptides-struct & Sparse AW-RoPE & 529837 & 25028 & 270 \\
\bottomrule

 \end{tabular}
\end{table}

\subsection{Training curves and attention on fixed validation graphs}
\label{app:synthetic-visuals}

We visualize the existing Monochromatic-0 WIRE and AW-RoPE runs from
the matched mixing study above. Following the graph-based presentation
of Graph-RoPE \citep{reid2026graphrope}, we show the task inputs alongside
learned attention, and plot the recorded training loss. All panels use the local logs and checkpoints.

The training panel of Figure~\ref{fig:diagnostics} includes all 250 epochs of the three runs.
Thin lines show individual runs and thick lines their arithmetic means,
using recorded MSE on the unnormalized target with a logarithmic axis
and no smoothing. AW-RoPE reaches lower training MSE in these runs;
Table~\ref{tab:multiroute-completed} gives the validation-selected test results.
Both methods use dense attention and the same backbone and training
settings, with different active positional parameter counts.

For Figure~\ref{fig:synthetic-attention}, we fix the first run and the first three
validation graphs before inspecting attention. Each method uses its own
validation-best checkpoint. We extract the final layer's single-head
softmax attention in evaluation mode and display the weights from node 0
to every key node. No parameters are updated. The graph layouts and
color scale are shared, and the full attention matrices are retained.
These are descriptive comparisons of independently trained models.

\begin{figure}[!htbp]
 \centering
 \includegraphics[width=.80\linewidth]{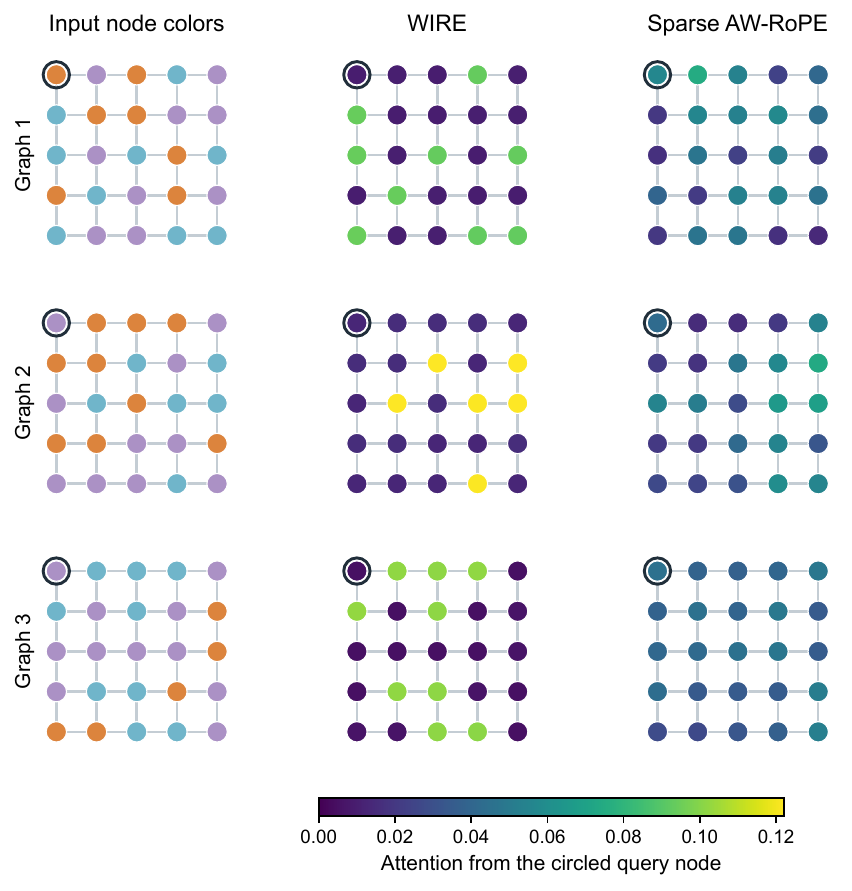}
 \caption{\textbf{Attention on fixed Monochromatic-0 validation examples.}
 Left: the three categorical node colors supplied to the task. Center and
 right: final-layer attention probabilities from the circled node for
 WIRE and \emph{Sparse} AW-RoPE. Rows are the first three validation graphs in
 their stored order. All six attention panels use one common color scale,
 without per-panel rescaling. Seed 0 and validation-best checkpoints are
 fixed.}
 \label{fig:synthetic-attention}
\end{figure}

The six complete training trajectories, checkpoint hashes, fixed graph
inputs, and attention matrices are stored with the artifact. The exporter
checks the original shared-initialization hashes, validation-best epoch,
checkpoint loading, and unchanged model state during attention
extraction.

\end{document}